\def\prob{\mathbb{P}}
\def\expt{\mathbb{E}}
\def\real{\mathbb{R}}
\def\naturals{\mathbb{N}}
\newcommand{\bbN}{\ensuremath{\mathbb{N}}}
\newcommand{\mcN}{\ensuremath{\mathcal{N}}}
\newcommand{\dd}{\mathrm{d}}
\newcommand{\until}[1]{\{1,\dots, #1\}}
\newcommand{\supscr}[2]{#1^{\textup{#2}}}
\newcommand{\setdef}[2]{\{#1 \; | \; #2\}}
\newcommand{\seqdef}[2]{\{#1\}_{#2}}
\newcommand{\map}[3]{#1: #2 \rightarrow #3}
\newcommand{\union}{\operatorname{\cup}}
\newcommand{\argmax}{\operatorname{argmax}}
\renewcommand{\Pr}[1]{\ensuremath{\mathbb{P} \left( #1 \right)}}
\newcommand{\E}[1]{\ensuremath{\mathbb{E}\left[ #1 \right]}}
\newcommand{\indicator}[1]{\ensuremath{\mathbf{1}\! \left( #1 \right)}}
\newcommand\oprocendsymbol{\hbox{$\square$}}
\newcommand\oprocend{\relax\ifmmode\else\unskip\hfill\fi\oprocendsymbol}
\newcommand{\beq}{\begin{equation}}
\newcommand{\eeq}{\end{equation}}
\newcommand{\bal}{\begin{align}}
\newcommand{\eal}{\end{align}}
\newtheorem{theorem}{Theorem}
\newtheorem{remark}[theorem]{Remark}
\newtheorem{prop}[theorem]{Proposition}
\def\mc{\mathcal}
\def\bs{\boldsymbol}
\def \etal {\emph{et al.}}
\newcommand\bit[1]{\textit{\textbf{#1}}}
\title{\huge Modeling Human Decision-making \\
in Generalized Gaussian Multi-armed Bandits
\thanks{This research has been supported in part by ONR grant N00014-09-1-1074 and ARO grant W911NG-11-1-0385. P. Reverdy is supported through an NDSEG Fellowship. 
Preliminary versions of parts of this work were presented at IEEE CDC 2012~\cite{PR-RCW-PH-NEL:12} and Allerton 2013~\cite{VS-PR-NEL:13}.  In addition to improving on the ideas in~\cite{PR-RCW-PH-NEL:12,VS-PR-NEL:13}, this paper improves the analysis of algorithms and compares the performance of these algorithms against empirical data. The human behavioral experiments were approved under Princeton University Institutional Review Board protocol number 4779.
} 
}
\author{Paul Reverdy \hspace{0.5in} Vaibhav Srivastava \hspace{0.5in} Naomi Ehrich Leonard
\thanks{P. Reverdy, V. Srivastava, and  N. E. Leonard are with Department of Mechanical and Aerospace Engineering, Princeton University, Princeton, NJ 08544, USA {\tt \{preverdy, vaibhavs, naomi\} @ princeton.edu}.
}
}
\begin{document}
\maketitle

\begin{abstract}
We present a formal model of human decision-making in explore-exploit tasks using the context of multi-armed bandit problems, where the decision-maker must choose among multiple options with uncertain rewards.   We address the standard multi-armed bandit problem, the multi-armed bandit problem with transition costs, and the multi-armed bandit problem on graphs.  We focus on the case of Gaussian rewards  in a  setting where the decision-maker uses Bayesian inference to estimate the reward values.   We model the decision-maker's prior knowledge with the Bayesian prior on the mean reward.  We develop the upper credible limit (UCL) algorithm for the standard multi-armed bandit problem and show that this deterministic algorithm achieves logarithmic cumulative expected regret, which is optimal performance for uninformative priors.  We show how good priors and good assumptions on the correlation structure among arms can greatly enhance decision-making performance, even over short time horizons.  We extend to the stochastic UCL algorithm and draw several connections to human decision-making behavior.   We present empirical data from human experiments and show that human performance is efficiently captured by the stochastic UCL algorithm with appropriate parameters. For the multi-armed bandit problem with transition costs and the multi-armed bandit problem on graphs, we generalize the UCL algorithm to the block UCL algorithm and the graphical block UCL algorithm, respectively. We show that these algorithms also achieve logarithmic cumulative expected regret and require a sub-logarithmic expected number of transitions among arms. We further illustrate the performance of these algorithms with numerical examples.

NB: Appendix G included in this version details minor modifications that correct for an oversight in the previously-published proofs. The remainder of the text reflects the published work.
\end{abstract}

\begin{IEEEkeywords}
multi-armed bandit, human decision-making, machine learning, adaptive control
\end{IEEEkeywords}

\section{Introduction}
Imagine the following scenario: you are reading the menu in a new restaurant, deciding which dish to order. Some of the dishes are familiar to you, while others are completely new. Which dish do you ultimately order: a familiar one that you are fairly certain to enjoy, or an unfamiliar one that looks interesting but you may dislike?

Your answer will depend on a multitude of factors, including your mood that day (Do you feel adventurous or conservative?), your knowledge of the restaurant and its cuisine (Do you know little about African cuisine, and everything looks new to you?), and the number of future decisions the outcome is likely to influence (Is this a restaurant in a foreign city you are unlikely to visit again, or is it one that has newly opened close to home, where you may return many times?). This scenario encapsulates many of the difficulties faced by a decision-making agent interacting with his/her environment, e.g. the role of prior knowledge and the number of future choices (time horizon).

The problem of learning the optimal way to interact with an uncertain environment is common to a variety of areas of study in engineering  such as adaptive control and reinforcement learning \cite{FLL-DV-KGV:12}. Fundamental to these problems is the tradeoff between exploration (collecting more information to reduce uncertainty) and exploitation (using the current information to maximize the immediate reward). Formally, such problems are often formulated as Markov Decision Processes (MDPs). MDPs are decision problems in which the decision-making agent is required to make a sequence of choices along  a process evolving in time~\cite{RSS-AGB:98}. The theory of dynamic programming \cite{RB:57},~\cite{PLK-MLL-AWM:96} provides methods to find optimal solutions to generic MDPs, but is subject to the so-called \emph{curse of dimensionality} \cite{RSS-AGB:98}, where the size of the problem often grows exponentially in the number of states.

The curse of dimensionality makes finding the optimal solution difficult, and in general intractable for finite-horizon problems of any significant size. Many engineering solutions of MDPs consider the infinite-horizon case, i.e., the limit where the agent will be required to make an infinite sequence of decisions. In this case, the problem simplifies significantly and a variety of reinforcement learning methods can be used to converge to the optimal solution, for example \cite{CW-PD:92,PLK-MLL-AWM:96,RSS-AGB:98,FLL-DV-KGV:12}. However, these methods only converge to the optimal solution asymptotically at a rate that is difficult to analyze. The UCRL algorithm \cite{PA-RO:07} addressed this issue by deriving a heuristic-based reinforcement learning algorithm with a provable learning rate.  

However, the infinite-horizon limit may be inappropriate for finite-horizon tasks. In particular, optimal solutions to the finite-horizon problem may be strongly dependent on the task horizon. Consider again our restaurant scenario. If the decision is a one-off, we are likely to be conservative, since selecting an unfamiliar option is risky and even if we choose an unfamiliar dish and like it, we will have no further opportunity to use the information in the same context. However, if we are likely to return to the restaurant many times in the future, discovering new dishes we enjoy is valuable.

Although the finite-horizon problem may be intractable to computational analysis, humans are confronted with it all the time, as evidenced by our restaurant example. The fact that they are able to find efficient solutions quickly with inherently limited computational power suggests that humans employ relatively sophisticated heuristics for solving these problems. Elucidating these heuristics is of interest both from a psychological point of view where they may help us understand human cognitive control and from an engineering point of view where they may lead to development of improved algorithms to solve MDPs \cite{JDC-SMM-AJY:07}. In this paper, we seek to elucidate the behavioral heuristics at play with a model that is both mathematically rigorous and computationally tractable.

Multi-armed bandit problems~\cite{JG-KG-RW:11} constitute a class of MDPs that is well suited to our goal of connecting biologically plausible heuristics with mathematically rigorous algorithms.  In the mathematical context, multi-armed bandit problems have been studied in both the infinite-horizon and finite-horizon cases. There is a well-known optimal solution to the infinite-horizon problem \cite{JCG:79}.  For the finite-horizon problem, the policies are designed to match the best possible performance established in \cite{TLL-HR:85}.   In the biological context, the decision-making behavior and performance of both animals and humans have been studied using the multi-armed bandit framework.

In a multi-armed bandit problem, a decision-maker allocates a single resource by sequentially choosing one among a set of competing alternative options called arms. In the so-called stationary multi-armed bandit problem, a decision-maker at each discrete time instant chooses an arm and collects a reward drawn from an unknown stationary probability distribution associated with the selected arm. The objective of the decision-maker is to maximize the total reward aggregated over the sequential allocation process.  We will refer to this as the {\em standard} multi-armed bandit problem, and we will consider variations that add transition costs or spatial unavailability of arms.   A classical example of a standard multi-armed bandit problem is the evaluation of clinical trials with medical patients described in~\cite{WRT:33}.   The decision-maker is a doctor and the options are different treatments with unknown effectiveness for a given disease.  Given patients that arrive and get treated sequentially, the objective for the doctor is to maximize the number of cured patients, using information gained from successive outcomes. 

Multi-armed bandit problems capture the fundamental exploration-exploitation tradeoff.   Indeed, they model a wide variety of real-world decision-making scenarios including those associated with foraging and search in an uncertain environment.   The rigorous examination in the present paper of the heuristics that humans use in multi-armed bandit tasks can help in understanding and enhancing both natural and engineered strategies and performance in these kinds of tasks.    For example, a trained human operator can quickly learn the relevant features of a new environment, and an efficient model for human decision-making in a multi-armed bandit task may facilitate a means to learn a trained operator's task-specific knowledge for use in an autonomous decision-making algorithm.   Likewise, such a model may help in detecting weaknesses in a human operator's strategy and deriving computational means to augment human performance.  

Multi-armed bandit problems became popular following the seminal paper by Robbins~\cite{HR:52} and found application in diverse areas including controls, robotics, machine learning, economics, ecology, and operational research~\cite{MB-YS-AS:09, FR-RK-TJ:08,JLN-MD-EF:08,BPM-JJM:87,MYC-JL-FSH:13}.  For example, in ecology the multi-armed bandit problem was used to study the foraging behavior of birds in an unknown environment~\cite{JRK-AK-PT:78}.  The authors showed that the optimal policy for the two-armed bandit problem captures well the observed foraging behavior of birds.  
Given the limited computational capacity of birds, it is likely they use  simple heuristics to achieve near-optimal performance. The development of  simple heuristics in this and other contexts has spawned a wide literature. 

Gittins~\cite{JCG:79} studied the infinite-horizon multi-armed bandit problem and developed a dynamic allocation index (Gittins' index) for each arm.  He showed that selecting an arm with the highest index at the given time results in the optimal policy. The dynamic allocation index, while a powerful idea, suffers from two drawbacks: (i) it is hard to compute, and  (ii) it does not provide insight into the nature of the optimal policies. 

Much recent work on  multi-armed bandit problems focuses on a quantity termed \emph{cumulative expected regret}. The cumulative expected regret of a sequence of decisions is simply the cumulative difference between the expected reward of the options chosen and the maximum reward possible. In this sense, expected regret plays the same role as expected value in standard reinforcement learning schemes:  maximizing expected value is equivalent to minimizing cumulative expected regret. Note that this definition of regret is in the sense of an omniscient being who is aware of the expected values of all options, rather than in the sense of an agent playing the game. As such, it is not a quantity of direct psychological relevance but rather an analytical tool that allows one to characterize performance.

In a ground-breaking work, Lai and Robbins~\cite{TLL-HR:85} established a logarithmic lower bound on the expected number of times a sub-optimal arm needs to be sampled by an optimal policy, thereby showing that cumulative expected regret is bounded below by a logarithmic function of time. Their work established the best possible performance of any solution to the standard multi-armed bandit problem.  They also developed an algorithm based on an upper confidence bound on estimated reward and showed that this algorithm  achieves the performance bound asymptotically. In the following, we use the phrase \emph{logarithmic regret} to refer to cumulative expected regret being bounded above by a logarithmic function of time, i.e., having the same order of growth rate as the optimal solution. The calculation of the upper confidence bounds in~\cite{TLL-HR:85} involves tedious computations.  Agarwal~\cite{RA:95} simplified these computations to develop sample mean-based upper confidence bounds, and showed that the policies in~\cite{TLL-HR:85}  with these upper confidence bounds achieve logarithmic regret asymptotically.

In the context of bounded multi-armed bandits, i.e., multi-armed bandits in which the reward is sampled from a distribution with a bounded support,  Auer~\etal~\cite{PA-NCB-PF:02} developed upper confidence bound-based algorithms that achieve logarithmic regret uniformly in time; see~\cite{SB-NCB:12} for an extensive survey of upper confidence bound-based algorithms. Audibert~\etal~\cite{audibert2009exploration} considered upper confidence bound-based algorithms that take into account the empirical variance of the various arms. In a related work, Cesa-Bianchi~\etal~\cite{NCB-PF:98} analyzed a Boltzman allocation rule for bounded multi-armed bandit problems. Garivier~\etal~\cite{AG-OC:11} studied the KL-UCB algorithm, which uses upper confidence bounds based on the Kullback-Leibler divergence, and advocated its use in multi-armed bandit problems where the rewards are distributed according to a known exponential family.

The works cited above adopt a frequentist perspective, but a number of researchers have also considered MDPs and multi-armed bandit problems from a Bayesian perspective. Dearden~\etal~\cite{RD-NF-SR:98} studied general MDPs and showed that a Bayesian approach can substantially improve performance in some cases. Recently, Srinivas~\etal~\cite{NS-AK-SMK-MS:12} developed asymptotically optimal upper confidence bound-based algorithms for Gaussian process optimization. Agrawal~\etal~\cite{SA-NG:12} proved that a Bayesian algorithm known as Thompson Sampling is near-optimal for binary bandits with a uniform prior. Kauffman~\etal~\cite{EK-OC-AG:12} developed a generic Bayesian upper confidence bound-based algorithm and established its optimality for binary bandits with a uniform prior.  In the present paper we develop a similar Bayesian upper confidence bound-based algorithm for Gaussian multi-armed bandit problems and show that it achieves logarithmic regret for uninformative priors uniformly in time. 

Some variations of these multi-armed bandit problems have been studied as well. Agarwal~\etal~\cite{RA-MVH-DT:88} studied multi-armed bandit problems with transition costs, i.e., the multi-armed bandit problems in which a certain penalty is imposed each time the decision-maker switches from the currently selected arm. To address this problem, they developed an asymptotically optimal block allocation algorithm. Banks and Sundaram~\cite{JSB-RKS:94} show that, in general, it is not possible to define dynamic allocation indices (Gittins' indices) which lead to an optimal solution of the multi-armed bandit problem with switching costs. However, if the cost to switch to an arm from any other arm is a stationary random variable, then such indices exist. Asawa~and~Teneketzis~\cite{MA-DT:96} characterize qualitative properties of the optimal solution to the multi-armed bandit problem with switching costs, and establish sufficient conditions for the optimality of limited lookahead based techniques. A survey of multi-armed bandit problems with switching costs is presented in~\cite{TJ:04}.  In the present paper, we consider Gaussian multi-armed bandit problems with transition costs and develop a block allocation algorithm that achieves logarithmic regret for uninformative priors uniformly in time. Our block allocation scheme is similar to the scheme in~\cite{RA-MVH-DT:88}; however, our scheme incurs a smaller expected cumulative transition cost than the scheme in~\cite{RA-MVH-DT:88}. Moreover, an asymptotic analysis is considered in~\cite{RA-MVH-DT:88}, while our results hold uniformly in time.

Kleinberg~\etal~\cite{RK-AMN-YS:10} considered multi-armed bandit problems in which arms are not all available for selection at each time (sleeping experts) and analyzed the performance of upper confidence bound-based algorithms. In contrast to the temporal unavailability of arms in~\cite{RK-AMN-YS:10}, we consider a spatial unavailability of arms. In particular, we propose a novel multi-armed bandit problem, namely, the \emph{graphical multi-armed bandit} problem in which only a subset of the arms can be selected at the next allocation instance given the currently selected arm.  We develop a block allocation algorithm for such problems that achieves logarithmic regret for uninformative priors uniformly in time. 

Human decision-making in multi-armed bandit problems has also been studied in the cognitive psychology literature.  Cohen~\etal~\cite{JDC-SMM-AJY:07} surveyed  the exploration-exploitation tradeoff in humans and animals and discussed the mechanisms in the brain that mediate this tradeoff.   Acu{\~n}a~\etal~\cite{DA-PS:08} studied human decision-making in multi-armed bandits from a Bayesian perspective. They modeled the human subject's prior knowledge about the reward structure using conjugate priors to the reward distribution. They concluded that a policy using Gittins' index, computed from approximate Bayesian inference based on limited memory and finite step look-ahead, captures the empirical behavior in certain multi-armed bandit tasks. In a subsequent work~\cite{DEA-PS:10}, they showed that a critical feature of human decision-making in multi-armed bandit problems is structural learning, i.e., humans learn the correlation structure among different arms. 

Steyvers~\etal~\cite{MS-MDL-EJW:09} considered Bayesian models for multi-armed bandits parametrized by human subjects' assumptions about reward distributions and observed that there are individual differences that determine the extent to which people use optimal models rather than simple heuristics. 
In a subsequent work, Lee~\etal~\cite{MDL-SZ-MM-MS:11} considered latent models in which there is a latent mental state that determines if the human subject should explore or exploit. Zhang~\etal~\cite{SZ-JYA:13} considered multi-armed bandits with Bernoulli rewards and concluded that, among the models considered,  the knowledge gradient algorithm best captures the trial-by-trial performance of human subjects.

Wilson~\etal~\cite{RCW-AG-etal:11} studied human performance in two-armed bandit problems and showed that at each arm selection instance the decision is based on a linear combination of the estimate of the mean reward of each arm and an ambiguity bonus that depends on the value of the information from that arm. 
Tomlin~\etal~\cite{DT-AN-etal:12} studied human performance on multi-armed bandits that are located on a spatial grid; at each arm selection instance, the decision-maker can only select the current arm or one of the neighboring arms. 

In this paper, we study multi-armed bandits with Gaussian rewards in a Bayesian setting, and we develop upper credible limit (UCL)-based algorithms that achieve efficient performance.   We propose a deterministic UCL algorithm and a stochastic UCL algorithm for the standard multi-armed bandit problem.   We propose a block UCL algorithm and a graphical block UCL algorithm for the multi-armed bandit problem with transitions costs and the multi-armed problem on graphs, respectively.  We analyze the proposed algorithms in terms of the cumulative expected regret, i.e., the cumulative difference between the expected received reward and the maximum expected reward that could have been received. We compare human performance in multi-armed bandit tasks with the performance of the proposed stochastic UCL algorithm and show that the algorithm with the right choice of parameters efficiently models human decision-making performance. The major contributions of this work are fourfold.

First, we develop and analyze the deterministic UCL algorithm for multi-armed bandits with Gaussian rewards.
We derive a novel upper bound on the inverse cumulative distribution function for the standard Gaussian distribution, and we use it to show that for an uninformative prior on the rewards, the proposed algorithm achieves logarithmic  regret. To the best of our knowledge, this is the first confidence bound-based algorithm that provably achieves logarithmic cumulative expected regret uniformly in time for multi-armed bandits with Gaussian rewards.

We further define a {\em quality} of priors on rewards and show that for small values of this quality, i.e., good priors, the proposed algorithm  achieves logarithmic   regret uniformly in time.
Furthermore, for good priors with small variance, a slight modification of the algorithm yields sub-logarithmic regret uniformly in time. Sub-logarithmic refers to a rate of expected regret that is even slower than  logarithmic, and thus performance is better than with uninformative priors. For large values of the quality, i.e., bad priors, the proposed algorithm can yield performance significantly worse than with uninformative priors. Our analysis also highlights the impact of the correlation structure among the rewards from different arms on the performance of the algorithm as well as the performance advantage when the prior includes a good model of the correlation structure.

Second, to capture the inherent noise in human decision-making, we develop the stochastic UCL algorithm, a stochastic arm selection version of the deterministic UCL algorithm. We model the stochastic arm selection using softmax arm selection~\cite{RSS-AGB:98}, and show that there exists a feedback law for the cooling rate in the softmax function such that for an uninformative prior the stochastic arm selection policy achieves logarithmic  regret uniformly in time.

Third, we compare the stochastic UCL algorithm with the data obtained from our human behavioral experiments.  We show that the observed empirical behaviors can be reconstructed by varying only a few parameters in the algorithm.

Fourth, we study the multi-armed bandit problem with transition costs in which a stationary random cost is incurred each time an arm other than the current arm is selected.  We also study the graphical  multi-armed bandit problem in which the arms are located at the vertices of a graph and only the current arm and its neighbors  can be selected at each time. For these multi-armed bandit problems, we extend the deterministic UCL algorithm to block allocation algorithms that for uninformative priors achieve logarithmic regret uniformly in time.

In summary, the main contribution of this work is to provide a formal algorithmic model (the UCL algorithms) of choice behavior in  the exploration-exploitation tradeoff using the context of the multi-arm bandit problem. In relation to cognitive dynamics, we expect that this model could be used to explain observed choice behavior and thereby quantify the underlying computational anatomy in terms of key model parameters. The fitting of such models of choice behavior to empirical performance is now standard in cognitive neuroscience. We  illustrate the potential of our model to categorize individuals in terms of a small number of model parameters by showing that the stochastic UCL algorithm can reproduce canonical classes of performance observed in large numbers of  subjects.

The remainder of the paper is organized as follows. The standard multi-armed bandit problem is described in Section II. The salient features of human decision-making in bandit tasks are discussed in Section III. In Section IV we propose and analyze the regret of the deterministic UCL and stochastic UCL algorithms. In Section V we describe an experiment with human participants and a spatially-embedded multi-armed bandit task.  We show that human performance in that task tends to fall into one of several categories, and we demonstrate that the stochastic UCL algorithm can capture these categories with a small number of parameters. We consider an extension of the  multi-armed bandit problem to include transition costs and describe and analyze the block UCL algorithm in Section VI. In Section VII we consider an extension to the graphical multi-armed bandit problem, and we propose and analyze the graphical block UCL algorithm. Finally, in Section VIII we conclude and present avenues for future work.

\section{A review of multi-armed bandit problems}\label{sec:gaussian-bandit}

Consider a set of $N$ options, termed \emph{arms} in analogy with the lever of a slot machine. A single-levered slot machine is termed a \emph{one-armed bandit}, so the case of $N$ options is often called an $N$-armed bandit. The $N$-armed bandit problem refers to the choice among the $N$ options that a decision-making agent should make to maximize the cumulative reward.

The agent collects reward $r_t\in \real$ by choosing arm $i_t$ at each time $t \in \until{T}$, where $T\in \naturals$ is the horizon length for the sequential decision process.  The reward from option  $i \in \until{N}$  is sampled from a stationary distribution $p_i$ and has an unknown mean $m_i \in \real$.     
The decision-maker's objective is to maximize the cumulative expected reward $\sum_{t=1}^T m_{i_t}$ by selecting a sequence of arms $\seqdef{i_t}{t\in\until{T}}$.
Equivalently, defining $m_{i^*} = \max \setdef{m_i}{i\in\until{N}}$ and $R_t = m_{i^*}-m_{i_t}$ as the expected \emph{regret} at time $t$, the objective can be formulated as minimizing the cumulative expected regret defined by
\begin{align*} 
\sum_{t=1}^T R_t = Tm_{i^*} - \sum_{i=1}^N m_i \E{n_{i}^T}= \sum_{i=1}^N \Delta_i \E{n_{i}^T},
\end{align*}
where $n_{i}^T$ is the total number of times option $i$ has been chosen until time $T$ and $\Delta_i = m_{i^*}-m_i$ is the expected regret due to picking arm $i$ instead of arm $i^*$. Note that in order to minimize the cumulative expected regret, it suffices to minimize the expected number of times any suboptimal option $i \in \until{N}\setminus\{ i^*\}$ is selected. 

The multi-armed bandit problem is a canonical example of the exploration-exploitation tradeoff common to many problems in controls and machine learning. In this context, at time $t$, exploitation refers to picking arm $i_t$ that is estimated to have the highest mean at time $t$, and exploration refers to picking any other arm. A successful policy balances the exploration-exploitation tradeoff by exploring enough to learn which arm is most rewarding and exploiting that information by picking the best arm often.

\subsection{Bound on optimal performance}
Lai and Robbins \cite{TLL-HR:85} showed that, for any algorithm solving the multi-armed bandit problem,
the expected number of times a suboptimal arm is selected is at least logarithmic in time, i.e., 
\beq
\E{n_{i}^T} \geq \left(\frac{1}{D(p_i||p_{i^*})} + o(1)\right) \log T,\label{eq:optBound}
\eeq
for each $i\in \until{N}\setminus\{i^*\}$, 
where $o(1) \to 0$ as $T \to +\infty$.   $D(p_i||p_{i^*}) := \int p_i(r) \log \frac{p_i(r)}{p_{i^*}(r)} \mathrm{d}r$ is the Kullback-Leibler divergence between the reward density $p_i$ of any suboptimal arm and the reward density $p_{i^*}$ of the optimal arm. The bound on $\E{n_{i}^T}$ implies that the cumulative expected regret must grow at least logarithmically in time.

\subsection{The Gaussian multi-armed bandit task}

For the Gaussian multi-armed bandit problem considered in this paper, the reward density $p_i$ is Gaussian with mean $m_i$ and variance $\sigma_s^2$. The  variance $\sigma_s^2$ is assumed known, e.g., from previous observations or known characteristics of the reward generation process.
Therefore
\beq
D(p_i||p_{i^*}) = \frac{\Delta_i^2}{2 \sigma_s^2},\label{eq:KLGaussian}
\eeq
and accordingly, the bound \eqref{eq:optBound} is
\beq
\E{n_{i}^T} \geq \left( \frac{2 \sigma_s^2}{\Delta_i^2} + o(1)\right) \log T. \label{eq:optBoundGaussian}
\eeq
The insight from (\ref{eq:optBoundGaussian}) is that for a fixed value of $\sigma_s$, a suboptimal arm $i$ with higher $\Delta_i$ is easier to identify, and thus chosen less often, since it yields a lower average reward. Conversely, for a fixed value of $\Delta_i$, higher values of $\sigma_s$ make the observed rewards more variable, and thus it is more difficult to distinguish the optimal arm $i^*$ from the suboptimal ones.

\subsection{The Upper Confidence Bound algorithms}
For multi-armed bandit problems with bounded rewards, Auer~\etal~\cite{PA-NCB-PF:02} developed upper confidence bound-based algorithms, known as the UCB1 algorithm and its variants, that achieve logarithmic regret uniformly in time. UCB1 is a heuristic-based algorithm that at each time $t$ computes a heuristic value $Q_{i}^t$ for each option $i$.  This value provides an upper bound for the expected reward to be gained by selecting that option:
\beq
Q_{i}^t = \bar{m}_{i}^t + C_{i}^t,\label{eq:UCBHeuristic}
\eeq
where $\bar{m}_{i}^t$ is the empirical mean reward and $C_{i}^t$ is a measure of uncertainty in the reward of arm $i$ at time $t$. The UCB1 algorithm  picks the option $i_t$ that maximizes $Q_{i}^t$. Figure \ref{fig:UCB} depicts this logic: the confidence intervals represent uncertainty in the algorithm's  estimate of the true value of $m_i$ for each option, and the algorithm optimistically chooses the option with the highest upper confidence bound. This is an example of a general heuristic known in the bandit literature as \emph{optimism in the face of uncertainty} \cite{SB-NCB:12}. The idea is that one should formulate the set of possible environments that are consistent with the observed data, then act as if the true environment were the most favorable one in that set.

\begin{figure}[h]
   \centering
   \includegraphics[width=3.5in]{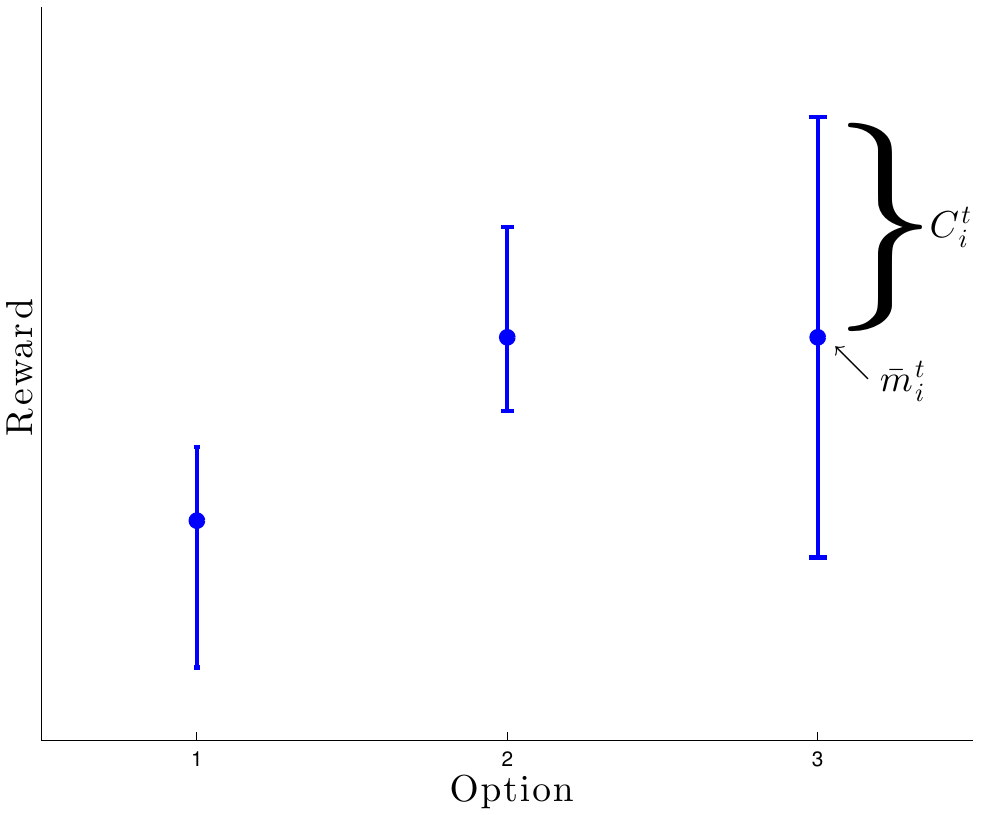}
   \caption{Components of the UCB1 algorithm in an $N=3$ option (arm) case. The algorithm forms a confidence interval for the mean reward $m_i$ for each option $i$ at each time $t$. The heuristic value $Q_i^t = \bar{m}_i^t + C_i^t$ is the upper limit of this confidence interval, representing an optimistic estimate of the true mean reward. In this example, options 2 and 3 have the same mean $\bar{m}$ but option 3 has a larger uncertainty $C$, so the algorithm chooses option 3.}
   \label{fig:UCB}
\end{figure}

Auer~\etal~\cite{PA-NCB-PF:02} showed that for an appropriate choice of the uncertainty term $C_i^t$, the UCB1 algorithm achieves logarithmic  regret uniformly in time, albeit with a larger leading constant than the optimal one \eqref{eq:optBound}. They also provided a slightly more complicated policy, termed UCB2, that brings the factor multiplying the logarithmic term arbitrarily close to that of \eqref{eq:optBound}. Their analysis relies on Chernoff-Hoeffding bounds which apply to probability distributions with bounded support.

They also considered the case of multi-armed bandits with Gaussian rewards, where both the mean ($m_i$ in our notation) and sample variance ($\sigma_s^2$) are unknown. In this case they constructed an algorithm, termed UCB1-Normal, that achieves logarithmic regret. Their analysis of the regret in this case cannot appeal to Chernoff-Hoeffding bounds because the reward distribution has unbounded support.  Instead their analysis relies on certain bounds on the tails of the $\chi^2$ and the Student t-distribution that they could only verify numerically. Our work improves on their result in the case $\sigma_s^2$ is known by constructing a UCB-like algorithm that provably achieves logarithmic  regret. The proof relies on new tight bounds on the tails of the Gaussian distribution that will be stated in Theorem~\ref{thm:tailBounds}.

\subsection{The Bayes-UCB algorithm}
UCB algorithms rely on a frequentist estimator $\bar{m}_{i}^t$ of $m_i$ and therefore must sample each arm at least once in an initialization step, which requires a sufficiently long horizon, i.e., $N < T$. Bayesian estimators allow the integration of prior beliefs into the decision process.   This enables a Bayesian UCB algorithm to treat the case $N > T$ as well as to capture the initial beliefs of an agent, informed perhaps through prior experience. Kauffman~\etal~\cite{EK-OC-AG:12} considered the $N$-armed bandit problem from a Bayesian perspective and proposed the quantile function of the posterior reward distribution as the heuristic function (\ref{eq:UCBHeuristic}).

For every random variable $X \in \real \union \{\pm \infty\}$ with probability distribution function (pdf) $f(x)$, the associated cumulative distribution function (cdf) $F(x)$ gives the probability that the random variable takes a value of at most $x$, i.e., $F(x) = \Pr{X \leq x}$. See Figure \ref{fig:pdf}. Conversely, the \emph{quantile} function $F^{-1}(p)$ is defined by
\[ \map{F^{-1}}{[0,1]}{\real \union \{\pm \infty \}},\]
i.e., $F^{-1}(p)$ inverts the cdf to provide an upper bound for the value of the random variable $X \sim f(x)$:
\beq
\Pr{X \leq F^{-1}(p)} = p.
\eeq
In this sense, $F^{-1}(p)$ is an \emph{upper confidence bound}, i.e., an upper bound that holds with probability, or \emph{confidence level}, $p$. Now suppose that $F(r)$ is the cdf for the reward distribution $p_i(r)$ of option $i$.   Then, $Q_i = F^{-1}(p)$ gives a bound such that $\Pr{m_i>Q_i} = 1-p$.  If $p \in (0,1)$ is chosen large, then $1-p$ is small, and it is unlikely that the true mean reward for option $i$ is higher than the bound. See Figure \ref{fig:cdf}.

\begin{figure}[h]
   \centering
   \includegraphics[width=3.5in]{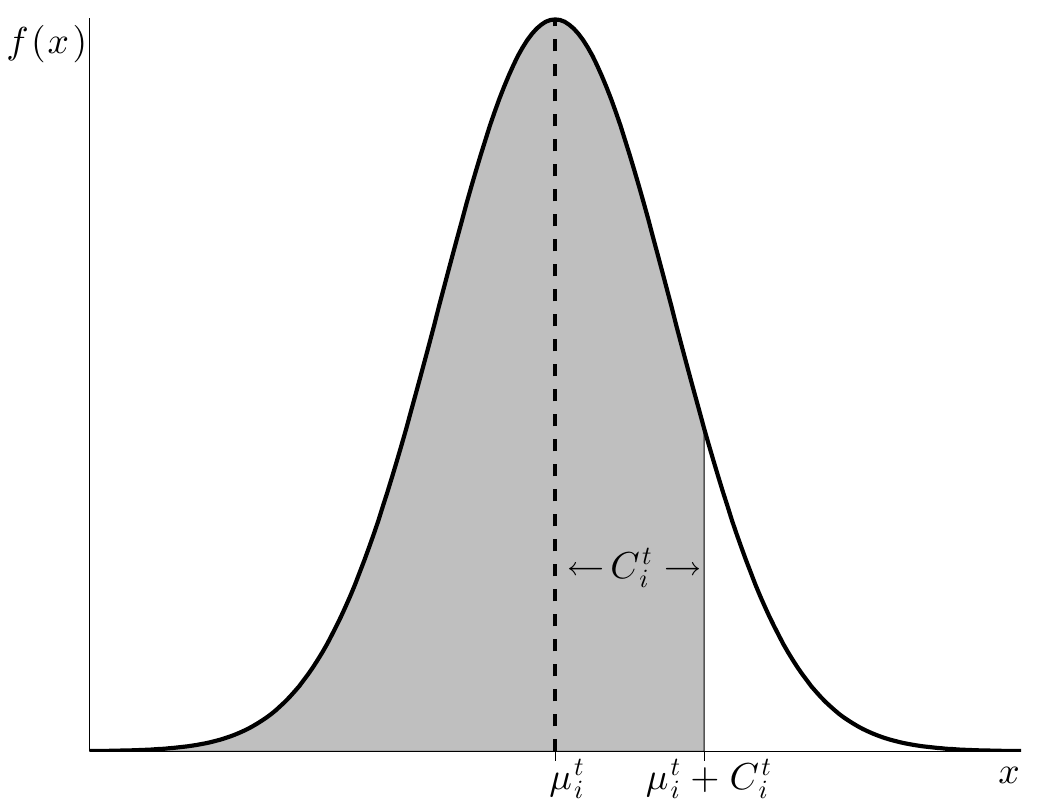}
   \caption{The pdf $f(x)$ of a Gaussian random variable $X$ with mean $\mu_i^t$. The probability that $X \leq x$ is $\int_{-\infty}^x f(X)\, \mathrm{d}X = F(x).$ The area of the shaded region is $F(\mu_i^t + C_i^t) = p$, so the probability that $X \leq \mu_i^t + C_i^t$ is $p$. Conversely, $X \geq \mu_i^t + C_i^t$ with probability $1-p$, so if $p$ is close to 1, $X$ is almost surely less than $\mu_i^t + C_i^t$.}
   \label{fig:pdf}
\end{figure}

\begin{figure}[h]
   \centering
   \includegraphics[width=3.5in]{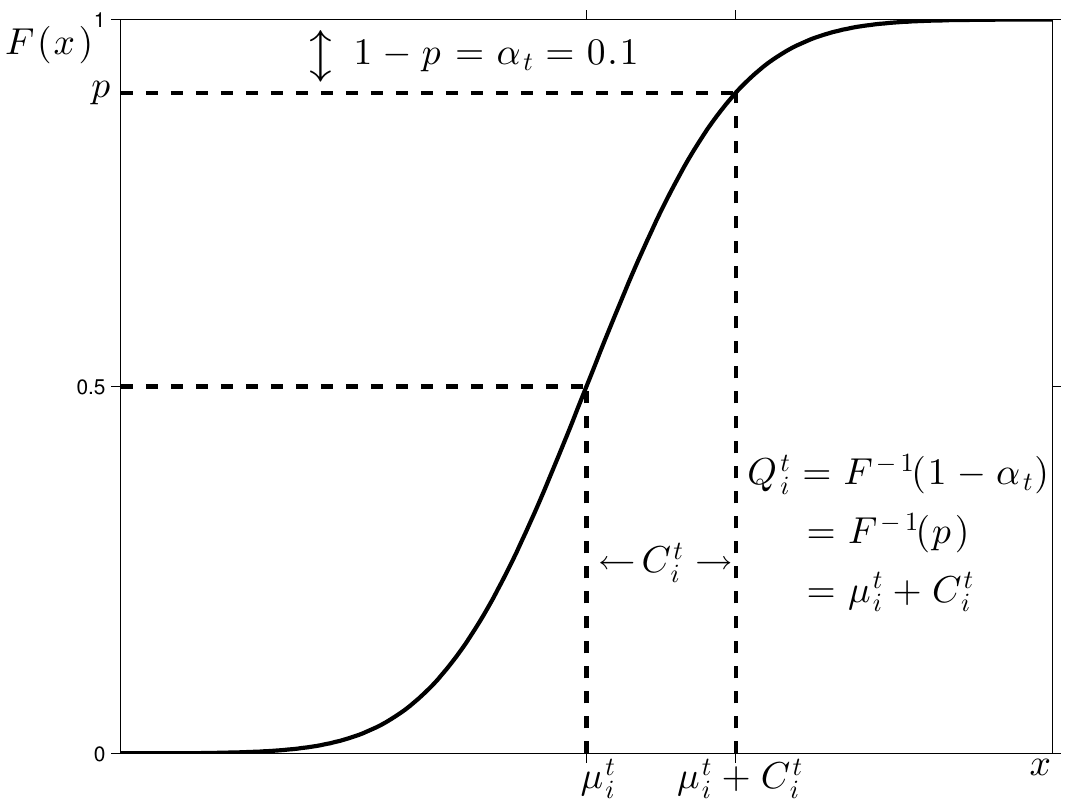}
   \caption{Decomposition of the Gaussian cdf $F(x)$ and relation to the UCB/Bayes-UCB heuristic value. For a given value of $\alpha_t$ (here equal to $0.1$), $F^{-1}(1-\alpha_t)$ gives a value $Q_i^t = \mu_i^t + C_i^t$ such that the Gaussian random variable $X \leq Q_i^t$ with probability $1-\alpha_t$. As $\alpha_t \to 0$, $Q_i^t \to +\infty$ and $X$ is almost surely less than $Q_i^t$.}
   \label{fig:cdf}
\end{figure}

In order to be increasingly sure of choosing the optimal arm as time goes on, \cite{EK-OC-AG:12} sets $p = 1-\alpha_t$ as a function of time with $\alpha_t = 1/(t(\log T)^c)$, so that $1-p$ is of order $1/t$. The authors termed the resulting algorithm Bayes-UCB.  In the case that the rewards are Bernoulli distributed, they proved that with $c\geq 5$ Bayes-UCB achieves the bound (\ref{eq:optBound}) for uniform (uninformative) priors.

The choice of $1/t$ as the functional form for $\alpha_t$ can be motivated as follows. Roughly speaking, $\alpha_t$ is the probability of making an error (i.e., choosing a suboptimal arm) at time $t$. If a suboptimal arm is chosen with probability $1/t$, then the expected number of times it is chosen until time $T$ will follow the integral of this rate, which is $\sum_1^T 1/t \approx \log T$, yielding a logarithmic functional form.

\section{Features of human decision-making in multi-armed bandit tasks}
As discussed in the introduction, human decision-making in the multi-armed bandit task has been the subject of numerous studies in the cognitive psychology literature. We list the five salient features of human decision-making in this literature that we wish to capture with our model.

\medskip

\noindent
(i) {\bf  Familiarity with the environment:}
Familiarity with the environment and its structure plays a critical role in human decision-making~\cite{JDC-SMM-AJY:07,MS-MDL-EJW:09}. In the context of multi-armed bandit tasks, familiarity with the environment translates to prior knowledge about the mean rewards from each arm.

\medskip

\noindent 
(ii) {\bf Ambiguity bonus:}
Wilson~\etal~\cite{RCW-AG-etal:11} showed that the decision at time $t$ is based on a linear combination of the estimate of the mean reward of each arm and an \emph{ambiguity bonus} that captures the value of information from that arm. In the context of UCB and related algorithms, the ambiguity bonus can be interpreted similarly to the $C_{i}^t$ term of \eqref{eq:UCBHeuristic} that defines the size of the upper bound on the estimated reward.

\medskip

\noindent
(iii) {\bf Stochasticity:} Human decision-making is inherently noisy~\cite{JDC-SMM-AJY:07,DA-PS:08,MS-MDL-EJW:09,SZ-JYA:13,RCW-AG-etal:11}. This is possibly due to inherent limitations in human computational capacity, or it could be the signature of noise being used as a cheap, general-purpose problem-solving algorithm. In the context of algorithms for solving the multi-armed bandit problem, this can be interpreted as picking arm $i_t$ at time $t$ using a stochastic arm selection strategy rather than a deterministic one.

\medskip

\noindent
(iv) {\bf Finite-horizon effects:}
Both the level of decision noise and the exploration-exploitation tradeoff are sensitive to the time horizon $T$ of the bandit task~\cite{JDC-SMM-AJY:07, RCW-AG-etal:11}. This is a sensible feature to have, as shorter time horizons mean less time to take advantage of information gained by exploration, therefore biasing the optimal policy towards exploitation. The fact that both decision noise and the exploration-exploitation tradeoff (as represented by the ambiguity bonus) are affected by the time horizon suggests that they are both working as mechanisms for exploration, as investigated in \cite{PR-RCW-PH-NEL:12}. In the context of algorithms, this means that the uncertainty term $C_{i}^t$ and the stochastic arm selection scheme should be functions of the horizon $T$.

\medskip

\noindent
(v) {\bf Environmental structure effects:}
Acu{\~n}a~\etal~\cite{DEA-PS:10} showed that an important aspect of human learning in multi-armed bandit tasks is structural learning, i.e., humans learn the correlation structure among different arms, and utilize it to improve their decision.

In the following, we develop a plausible model for human decision-making that captures these features. 
Feature (i) of human decision-making is captured through priors on the mean rewards from the arms.
The introduction of priors in the decision-making process suggests that non-Bayesian upper confidence bound algorithms~\cite{PA-NCB-PF:02} cannot be used, and therefore, we focus on Bayesian upper confidence bound (upper credible limit) algorithms~\cite{EK-OC-AG:12}.
Feature (ii) of  human decision-making is captured by making decisions based on a metric that comprises two components, namely, the estimate of the mean reward from each arm, and the width of a credible set. It is well known that the width of a credible set is a good measure of the uncertainty in the estimate of the reward.  Feature (iii) of human decision-making is captured by introducing a stochastic arm selection strategy in place of the standard deterministic arm selection 
strategy~\cite{PA-NCB-PF:02,EK-OC-AG:12}. In the spirit of Kauffman~\etal~\cite{EK-OC-AG:12}, we choose the credibility parameter $\alpha_t$ as a function of the horizon length to capture feature (iv) of human decision-making.
Feature (v) is captured through the correlation structure of the prior used for the Bayesian estimation. For example, if the arms of the bandit are spatially embedded, it is natural to think of a covariance structure defined by $\Sigma_{ij}= \sigma_0^2 \exp(-|x_i-x_j|/\lambda)$, where $x_i$ is the location of arm $i$ and $\lambda \ge 0$ is the correlation length scale parameter that encodes the spatial smoothness of the rewards.

\section{The Upper Credible Limit (UCL) Algorithms for Gaussian Multi-armed Bandits}
In this section, we construct a Bayesian UCB algorithm that captures the features of human decision-making described above. We begin with the case of deterministic decision-making and show that for an uninformative prior the resulting algorithm achieves logarithmic  regret. We then extend the algorithm to the case of stochastic decision-making using a Boltzmann (or softmax) decision rule, and show that there exists a feedback rule for the temperature of the Boltzmann distribution such that the stochastic algorithm achieves logarithmic  regret.  In both cases we first consider uncorrelated priors and then extend to correlated priors.

\subsection{The deterministic UCL algorithm with uncorrelated priors}\label{subsec:deterministic-ucl-uncorr}
Let the prior on the mean reward at arm $i$ be a Gaussian random variable with mean $\mu_{i}^0$ and variance $\sigma_0^2$. We are particularly interested in the case of an uninformative prior, i.e., $\sigma_0^2 \to +\infty$.
Let the number of times arm $i$ has been selected until time $t$ be denoted by $n_i^t$. 
Let the empirical mean of the rewards from arm $i$ until time $t$ be $\bar m_i^t$.
Conditioned on the number of visits $n_i^t$ to arm $i$ and the empirical mean $\bar m_i^t$, the mean reward  at arm $i$  at time $t$ is a Gaussian random variable ($M_i$) with mean
and variance
\begin{align*}
\mu_{i}^t&:=\expt[M_i|n_i^t, \bar m_i^t] = \frac{\delta^2 \mu_{i}^0 + n_i^t \bar m_i^t}{\delta^2+n_{i}^t}, \; \text{and}\;\\
\left({\sigma_i^{t}}\right)^2 &:= \text{Var}[M_i|n_i^t, \bar m_i^t] =\frac{\sigma_s^2}{\delta^2 + n_{i}^t} ,
\end{align*}
respectively, where $\delta^2=\sigma_s^2/\sigma_0^2$. Moreover,
\[
\expt[\mu_{i}^t|n_i^t]= \frac{\delta^2 \mu_{i}^0 + n_i^t m_i}{\delta^2+n_{i}^t} \; \text{and}\;
\text{Var}[\mu_{i}^t|n_i^t] = \frac{ n_i^t \sigma_s^2}{(\delta^2+n_{i}^t)^2}.
\]

We now propose the UCL algorithm for the Gaussian multi-armed bandit problem. At each decision instance $t \in \until{T}$, the UCL algorithm selects an arm  with the maximum value of the upper limit of the smallest $(1-1/Kt)$-credible interval, i.e., it selects an arm $i_t=\text{argmax}\setdef{Q_i^t}{i\in\until{N}}$, where
\[
Q_i^t =  \mu_i^t + \sigma_i^t  \Phi^{-1}(1-1/K t),
\]
$\map{\Phi^{-1}}{(0,1)}{\real}$ is the inverse cumulative distribution function for the standard Gaussian random variable, and $K \in \real_{>0}$ is a tunable parameter. For an explicit pseudocode implementation, see Algorithm~\ref{algo:bayes-ucb} in Appendix F. In the following, we will refer to $Q_i^t$ as the $(1 - 1/Kt)$-\emph{upper credible limit} (UCL).

It is known~\cite{AK-CEG:12, NS-AK-SMK-MS:12} that an efficient policy to maximize the total information gained over sequential sampling of options is to pick the option with highest variance at each time. Thus, $Q_i^t$ is the weighted sum of the expected gain in the total reward (exploitation), and the gain in the total information about arms (exploration), if arm $i$ is picked at time $t$.

\subsection{Regret analysis of the deterministic UCL Algorithm}
In this section, we analyze the performance of the UCL algorithm. We first derive bounds on the 
inverse cumulative distribution function for the standard Gaussian random variable and then utilize it to derive upper bounds on the cumulative expected regret for the UCL algorithm. 
We state the following theorem about bounds on the inverse Gaussian cdf. 
\begin{theorem}[\bit{Bounds on the inverse Gaussian cdf}]\label{thm:tailBounds}
The following bounds hold for the inverse cumulative distribution function of the standard Gaussian random variable for each $\alpha\in (0, 1/\sqrt{2 \pi})$, and any $\beta \ge 1.02$:
\begin{align}
\Phi^{-1}(1-\alpha) &< \beta \sqrt{-\log(-(2\pi \alpha^2)\log(2\pi \alpha^2))}, \text{ {and} }\label{eq:QDown}\\
\Phi^{-1}(1-\alpha) &> \sqrt{-\log(2\pi \alpha^2(1-\log(2\pi \alpha^2)))}. \label{eq:QUp}
\end{align}
\end{theorem}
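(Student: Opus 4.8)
The plan is to apply the strictly increasing cdf $\Phi$ to both sides of each inequality, converting them into one‑sided estimates on the Gaussian tail $Q(x):=1-\Phi(x)$, and then to reduce to elementary single‑variable inequalities via known sharp bounds on the Mills ratio $M(y):=Q(y)/\phi(y)$, where $\phi(y)=e^{-y^{2}/2}/\sqrt{2\pi}$. Throughout, set $a:=2\pi\alpha^{2}\in(0,1)$ (valid because $\alpha\in(0,1/\sqrt{2\pi})$); a short bookkeeping step shows $-a\log a\in(0,1/e]$ and $a(1-\log a)\in(0,1)$ (the latter since $a\mapsto a(1-\log a)$ increases to $1$ at $a=1$), so the radicands on the right of \eqref{eq:QDown}--\eqref{eq:QUp} are nonnegative and the two right‑hand sides, $y_{u}:=\beta\sqrt{-\log(-a\log a)}$ and $y_{\ell}:=\sqrt{-\log(a(1-\log a))}$, are well defined. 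Since $1-\alpha\in(0,1)$, applying $\Phi$ shows \eqref{eq:QDown} is equivalent to $Q(y_{u})<\alpha$ and \eqref{eq:QUp} to $Q(y_{\ell})>\alpha$.

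Next, exponentiate: by construction $e^{-y_{u}^{2}/2}=(-a\log a)^{\beta^{2}/2}$ and $e^{-y_{\ell}^{2}/2}=\sqrt{a(1-\log a)}$, and since $\alpha=\sqrt{a/2\pi}$, the two claims become
\[
M(y_{u})\,(-a\log a)^{\beta^{2}/2}<\sqrt{a}\qquad\text{and}\qquad M(y_{\ell})\,\sqrt{a(1-\log a)}>\sqrt{a}.
\]
Substituting $t:=-\log a\in(0,\infty)$ (so $-a\log a=te^{-t}$, $1-\log a=1+t$, $y_{u}=\beta\sqrt{t-\log t}$ and $y_{\ell}=\sqrt{t-\log(1+t)}$) turns both into inequalities in the single variable $t$.

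For the tail estimates I would feed in the sharp classical two‑sided Mills‑ratio bounds $\frac{2}{y+\sqrt{y^{2}+4}}<M(y)<\frac{2}{y+\sqrt{y^{2}+8/\pi}}$, valid for all $y\ge0$; the crude bounds $\frac{y}{y^{2}+1}<M(y)<\frac1y$ are \emph{not} precise enough in the band $y\in[1,3]$, which is exactly where the true curve hugs $y_{u}$ (the worst‑case ratio $\Phi^{-1}(1-\alpha)/\sqrt{-\log(-a\log a)}$ is only about $1.004$, which is why $\beta$ must exceed $1$ but may be taken as small as $1.02$). With the lower bound on $M$, inequality \eqref{eq:QUp} collapses: $M(y_{\ell})\sqrt{1+t}>1$ is implied by $2\sqrt{1+t}>y_{\ell}+\sqrt{y_{\ell}^{2}+4}$, and squaring this twice makes all the awkward terms cancel, leaving the elementary fact $(1+t)\log(1+t)>t$ for $t>0$ (true since the difference vanishes at $t=0$ and has derivative $\log(1+t)>0$); this settles \eqref{eq:QUp} uniformly, including the endpoint $\alpha\to1/\sqrt{2\pi}$, where $y_{\ell}\to0$. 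With the upper bound on $M$, inequality \eqref{eq:QDown} reduces to $2\,t^{\beta^{2}/2}<\bigl(y_{u}+\sqrt{y_{u}^{2}+8/\pi}\bigr)e^{(\beta^{2}-1)t/2}$; the exponential factor eventually dominates the power of $t$, so both ends are easy ($t\to0^{+}$, where $y_{u}\to\infty$, and $t\to\infty$, where $\Phi^{-1}(1-\alpha)\sim\sqrt{2\log(1/\alpha)}$), and the interior is handled by locating the unique critical point of the ratio of the two sides and checking its value.

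The main obstacle is this last single‑variable inequality for \eqref{eq:QDown}: its critical point is not available in closed form, and the slack is genuinely thin for $x\approx2.2$--$2.5$ (equivalently $t\approx7$), so one must pick a Mills‑ratio upper bound tight enough there and track the $\beta$‑dependence carefully — with $\beta=1$ the inequality \eqref{eq:QDown} actually fails in precisely that range, which is the reason a factor strictly larger than $1$ appears, and $\beta\ge1.02$ buys enough room both for the true gap and for the looseness of whatever tail bound is used. Everything else — the two monotonicity reductions, the exponentiation, and the entirety of \eqref{eq:QUp} — is routine once the right bounds on $M$ are in hand.
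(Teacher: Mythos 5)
Your proposal is correct in outline and takes a genuinely different route from the paper's proof. The paper never touches the Mills ratio: it rewrites \eqref{eq:QDown} as $h(x)\ge 0$ with $h(x)=\Phi\bigl(\beta\sqrt{-\log(-x\log x)}\bigr)+\sqrt{x}/\sqrt{2\pi}-1$ and $x=2\pi\alpha^2$, notes $h(0^+)=0$, and proves $h$ is increasing; after substituting $t=-\log x$ the derivative condition becomes a one-variable inequality settled by comparing $\inf_{t\ge 1}(-\log t/t)=-1/e\approx-0.3679$ with the maximum of $\beta^2t^{\beta^2-3}(1-2t+t^2)e^{-(\beta^2-1)t}-1$ at $t^*=1+\sqrt{2/(\beta^2-1)}$, which is $\approx-0.3729$; the lower bound \eqref{eq:QUp} is then dismissed as following ``analogously.'' Your route---applying $\Phi$ to reduce to tail estimates and then invoking the two-sided Mills-ratio bound $2/(y+\sqrt{y^2+4})<M(y)<2/(y+\sqrt{y^2+8/\pi})$---buys a cleaner and fully explicit proof of \eqref{eq:QUp}: the reduction to $(1+t)\log(1+t)>t$ is purely algebraic (the two squarings are legitimate since $2\sqrt{1+t}>\sqrt{t-\log(1+t)}$), and it handles the endpoint $\alpha\to1/\sqrt{2\pi}$ cleanly, which is more than the paper offers for that half. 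For \eqref{eq:QDown} both arguments bottom out in the same kind of single-variable extremum check with thin margins; I verified that with $\beta=1.02$ your residual inequality $2t^{\beta^2/2}<\bigl(y_u+\sqrt{y_u^2+8/\pi}\bigr)e^{(\beta^2-1)t/2}$ holds with worst ratio about $0.96$ near $t\approx4.5$, so the upper Mills bound you chose is indeed tight enough, and your diagnosis of why $\beta=1$ fails near $t\approx7$ matches the paper's remark that Fan's $\beta$-free conjecture is false. The trade-off is that the paper's argument is self-contained modulo calculus, whereas yours imports a classical tail bound but makes the algebra transparent and isolates exactly where the constant $1.02$ is spent; to make your version airtight you would still need to carry out the critical-point computation (or a monotonicity-plus-compact-check argument) for that last inequality, at the same level of numerical verification the paper itself ultimately relies on.
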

\begin{proof}
See Appendix A. 
\end{proof}

The bounds in equations~\eqref{eq:QDown}~and~\eqref{eq:QUp} were conjectured by Fan~\cite{Fan2012}
without the factor $\beta$. In fact, it can be numerically verified that without the factor $\beta$, the conjectured upper bound is incorrect.  We present a visual depiction of the tightness of the derived bounds in~Figure~\ref{fig:Qbounds}.

\begin{figure}
\begin{center}
\includegraphics[width=0.45\textwidth]{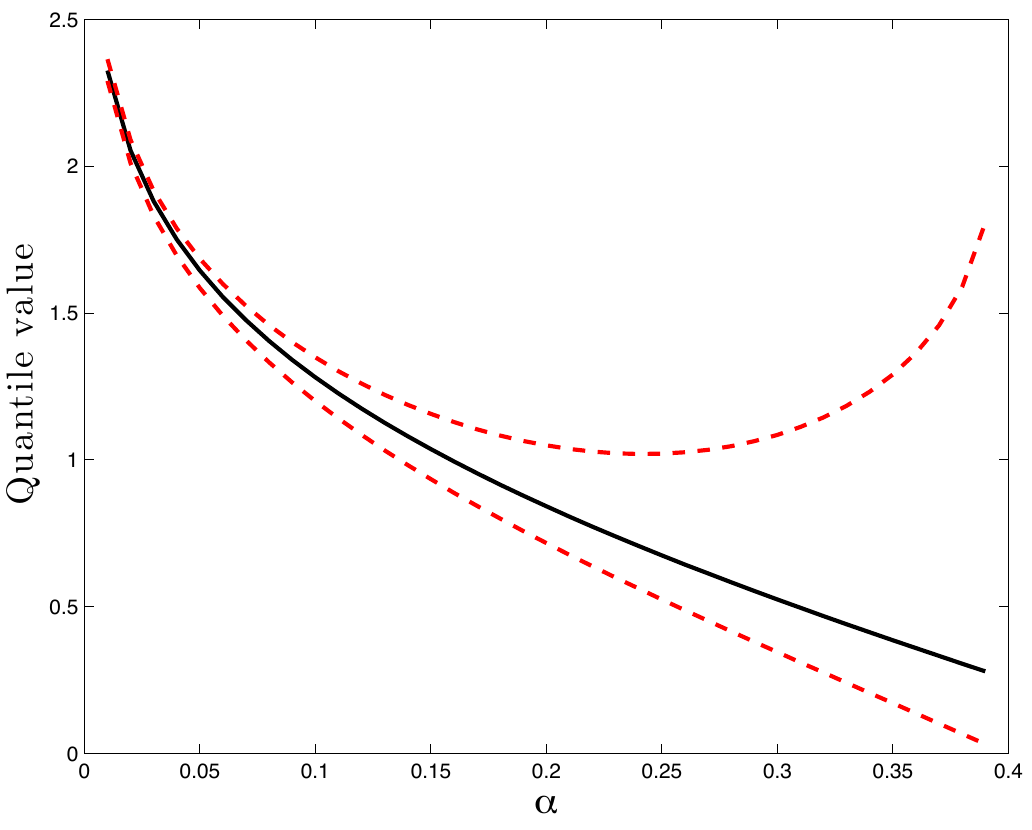}
\caption{Depiction of the normal quantile function $\Phi^{-1}(1-\alpha)$ (solid line) and the bounds (\ref{eq:QDown}) and (\ref{eq:QUp}) (dashed lines), with $\beta=1.02$.}
\label{fig:Qbounds}
\end{center}
%\vspace{-10pt}
\end{figure}

We now analyze the performance of the UCL algorithm. We define $\seqdef{\supscr{R}{UCL}_t}{t\in\until{T}}$ as the sequence of expected regret for the UCL algorithm. The UCL algorithm achieves logarithmic   regret uniformly in time as formalized in the following theorem. 

\begin{theorem}[\bit{Regret of  the deterministic UCL algorithm}]\label{thm:UCL}
The following statements hold for the Gaussian multi-armed bandit problem and the deterministic UCL algorithm with uncorrelated uninformative prior and $K=\sqrt{2 \pi e}$:
\begin{enumerate}
\item the expected number of times a suboptimal arm $i$ is chosen until time $T$ satisfies
\begin{multline*}
\E{n_{i}^T} \leq \Big( \frac{8 \beta^2 \sigma_s^2}{\Delta_i^2} + \frac{2}{\sqrt{2\pi e}} \Big) \log T\\
+ \frac{4 \beta^2 \sigma_s^2}{\Delta_i^2}(1  -\log 2 -\log\log T) 
 + 1 + \frac{2}{\sqrt{2\pi e}}\; ; 
\end{multline*}
\item the cumulative expected regret until time $T$ satisfies
\begin{multline*}
\sum_{t=1}^T \supscr{R_t}{UCL}  \leq 
\sum_{i=1}^N \Delta_i \Bigg(\Big( \frac{8 \beta^2 \sigma_s^2}{\Delta_i^2} + \frac{2}{\sqrt{2\pi e}} \Big) \log T\\
+ \frac{4 \beta^2 \sigma_s^2}{\Delta_i^2}(1  -\log 2 -\log\log T) 
 + 1 + \frac{2}{\sqrt{2\pi e}}\Bigg).
\end{multline*}
\end{enumerate}
\end{theorem}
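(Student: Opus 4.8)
The plan is to follow the standard UCB-style template adapted to the Bayesian Gaussian setting, with the new ingredient being Theorem~\ref{thm:tailBounds} to control the inflation term $\sigma_i^t \Phi^{-1}(1-1/Kt)$. Since $\sum_{t=1}^T \supscr{R_t}{UCL} = \sum_{i} \Delta_i \E{n_i^T}$ by the regret decomposition in Section~\ref{sec:gaussian-bandit}, statement (ii) follows immediately from statement (i); so the real work is bounding $\E{n_i^T}$ for a fixed suboptimal arm $i$. First I would write $n_i^T = 1 + \sum_{t=N+1}^{T} \indicator{i_t = i}$ (or handle the uninformative-prior initialization carefully, since with $\sigma_0^2 \to \infty$ we have $\delta^2 \to 0$) and note that $\{i_t = i\}$ implies $Q_i^t \ge Q_{i^*}^t$. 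The event $Q_i^t \ge Q_{i^*}^t$ is then contained in the union of three ``bad'' events: (a) the optimal arm's posterior mean is badly underestimated, $\mu_{i^*}^t \le m_{i^*} - \sigma_{i^*}^t C_t$; (b) the suboptimal arm's posterior mean is badly overestimated, $\mu_i^t \ge m_i + \sigma_i^t C_t$; and (c) the confidence width at arm $i$ is still so large that $m_i + 2\sigma_i^t C_t > m_{i^*}$, which (since $\sigma_i^t$ shrinks like $\sigma_s/\sqrt{n_i^t}$) forces $n_i^t$ to be at most of order $\sigma_s^2 C_t^2 / \Delta_i^2$. Here $C_t := \Phi^{-1}(1 - 1/Kt)$.

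The contribution of event (c) is the dominant one and is where Theorem~\ref{thm:tailBounds} enters: I would use the upper bound \eqref{eq:QDown}, namely $C_t < \beta\sqrt{-\log(-(2\pi/(Kt)^2)\log(2\pi/(Kt)^2))}$, and with the choice $K = \sqrt{2\pi e}$ the argument $2\pi/(Kt)^2 = 1/(e t^2)$ simplifies nicely so that $C_t^2 < \beta^2(\,2\log t + 1 - \log\log(e t^2) + \cdots)$ up to lower-order terms; this is exactly what produces the $\tfrac{8\beta^2\sigma_s^2}{\Delta_i^2}\log T$ leading term together with the $\tfrac{4\beta^2\sigma_s^2}{\Delta_i^2}(1 - \log 2 - \log\log T)$ correction. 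For events (a) and (b), I would bound their probabilities using the Gaussian tail bound on the posterior mean $\mu_i^t$, whose conditional variance is $n_i^t \sigma_s^2/(\delta^2 + n_i^t)^2 \le \sigma_s^2/n_i^t = (\sigma_i^t)^2 \cdot (\text{factor} \le 1)$, together with the lower bound \eqref{eq:QUp} to show $\Pr{\text{event (a) or (b) at time } t}$ is of order $1/(Kt)$; summing over $t$ gives the $\frac{2}{\sqrt{2\pi e}}\log T + \text{const}$ contributions. A standard union-bound-over-all-possible-values-of-$n_i^t$ argument (à la Auer~\etal) or a peeling argument is needed to handle the fact that $n_i^t$ is itself random inside these events.

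The main obstacle I anticipate is the bookkeeping around the uninformative prior: as $\sigma_0^2 \to \infty$, $\delta^2 \to 0$, so $\mu_i^t \to \bar m_i^t$ and $(\sigma_i^t)^2 \to \sigma_s^2/n_i^t$, but one must be careful that $n_i^t = 0$ makes $\sigma_i^t$ infinite, which formally forces every unplayed arm to be selected early — this is how the algorithm self-initializes — and the regret from this initialization phase must be absorbed into the additive constants ($1 + \frac{2}{\sqrt{2\pi e}}$ and the $\log\log$ terms). A secondary technical point is verifying that the constant $\beta \ge 1.02$ hypothesis of Theorem~\ref{thm:tailBounds} is compatible with the range of $\alpha = 1/(Kt)$ actually encountered (we need $1/(Kt) < 1/\sqrt{2\pi}$, i.e. essentially $t \ge 2$), and that the lower-order terms coming from the nested logarithm in \eqref{eq:QDown} are genuinely dominated so that they collapse into the stated closed form; I expect this to be a somewhat delicate but ultimately routine estimation, deferred (as the excerpt suggests) to an appendix.
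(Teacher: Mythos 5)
Your proposal follows essentially the same route as the paper's Appendix B: the same decomposition of $\{Q_i^t \ge Q_{i^*}^t\}$ into the three events (optimal arm underestimated, suboptimal arm overestimated, confidence width at arm $i$ still too large), the same use of the upper bound \eqref{eq:QDown} with $K=\sqrt{2\pi e}$ to turn the third event into the deterministic threshold $\eta = \lceil \tfrac{4\beta^2\sigma_s^2}{\Delta_i^2}(1+2\log T - \log 2 - \log\log T)\rceil$ on $n_i^t$, and the same $\sum_t 1/(Kt) \le (1+\log T)/K$ accounting for the first two events (which the paper evaluates exactly as $\alpha_t = 1/(Kt)$ in the uninformative-prior limit $\delta^2\to 0^+$, so the lower bound \eqref{eq:QUp} you invoke is not actually needed there). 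The only substantive difference is that you flag the need for a union bound or peeling over the random value of $n_i^t$; the paper sidesteps this because its probability bound $\alpha_t$ for events (a) and (b) holds uniformly in $n_i^t$, but your caution is well placed.
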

\begin{proof}
See Appendix B.
\end{proof}

\begin{remark}[\bit{Uninformative priors with short time horizon}]\label{rmk:shortHorizon}\emph{When the deterministic UCL algorithm is used with an uncorrelated uninformative prior, Theorem \ref{thm:UCL} guarantees that the algorithm incurs logarithmic regret uniformly in horizon length $T$. However, for small horizon lengths, the upper bound on the regret can be lower bounded by a super-logarithmic curve. Accordingly, in practice, the cumulative expected regret curve may appear super-logarithmic for short time horizons.  For example, for horizon $T$ less than the number of arms $N$, the cumulative expected regret of the deterministic UCL algorithm grows at most linearly with the horizon length.
 \oprocend
}
\end{remark}

\begin{remark}[\bit{Comparison with UCB1}]\emph{
In view of the bounds in Theorem~\ref{thm:tailBounds}, for an uninformative prior, the $(1-1/Kt)$-upper credible limit  obeys
\[ Q_i^t < \bar{m}_i^t + \beta \sigma_s \sqrt{\frac{1 + 2 \log t - \log \log e t^2}{n_i^t}}.\]
This upper bound is similar to the one in UCB1, which sets
\[ Q_i^t = \bar{m}_i^t + \sqrt{\frac{2 \log t}{n_i^t}}. \tag*{\oprocend}\]
}
\end{remark}

\begin{remark}[\bit{Informative priors}]\label{rmk:priorQuality}\emph{
For an uninformative prior, i.e., very large variance $\sigma_0^2$, we established in Theorem~\ref{thm:UCL} that the deterministic UCL algorithm achieves logarithmic regret uniformly in time.  For informative priors, the cumulative expected regret depends on the quality of the prior. The quality of a prior on the rewards can be captured by the metric $\zeta:= \max\setdef{ |m_i - \mu_i^0|/\sigma_0}{i\in \until{N}}$. A \emph{good prior} corresponds to small values of $\zeta$, while a \emph{bad prior} corresponds to large values of $\zeta$. In other words, a good prior is  one that has (i) mean close to the true mean reward, or (ii) a large variance. Intuitively, a good prior either has a fairly accurate estimate of the mean reward, or has low confidence about its estimate of the mean reward. For a good prior, the parameter $K$ can be tuned such that
\begin{equation*}
\Phi^{-1}\Big( 1- \frac{1}{Kt} \Big) - \max_{i\in \until{N}}\frac{\sigma_s(|m_i-\mu_i^0|)}{\sigma_0^2} >
\Phi^{-1}\Big(1- \frac{1}{\bar K t}\Big),
\end{equation*}
where $\bar K \in \real_{>0}$ is some constant, and it can be shown, using the arguments of Theorem~\ref{thm:UCL}, that the deterministic UCL algorithm achieves logarithmic regret uniformly in time.   
A bad prior corresponds to a fairly inaccurate estimate of the mean reward and high confidence.
For a bad prior, the  cumulative expected regret may be a super-logarithmic function of the horizon length. \oprocend
}
\end{remark}

\begin{remark}[\bit{Sub-logarithmic regret for good priors}]\emph{
For a good prior with a small variance, even  uniform sub-logarithmic regret can be achieved. Specifically, if the variable $Q_i^t$ in Algorithm~\ref{algo:bayes-ucb} is set to $Q_i^t = m_i^t + \sigma_i^t \Phi^{-1}(1- 1/K t^2)$, then an analysis similar to Theorem~\ref{thm:UCL} yields an upper bound on the cumulative expected regret that is dominated by (i) a sub-logarithmic term for good priors with small variance, and (ii) a logarithmic term  for uninformative priors with a higher constant in front than the constant in Theorem~\ref{thm:UCL}. Notice that such good priors may correspond to human operators who have previous training in the task.} \oprocend
\end{remark}

\subsection{The stochastic UCL algorithm with uncorrelated priors}
To capture the inherent stochastic nature of human decision-making, we consider the UCL algorithm with stochastic arm selection.
Stochasticity has been used as a generic optimization mechanism  that does not require information about the objective function. For example, simulated annealing \cite{DB-JNT:93,DM-FR-ASV:86,SK-CDG-MPV:83} is a global optimization method that attempts to break out of local optima by sampling locations near the currently selected optimum and  accepting locations with worse objective values with a probability that decreases in time. By analogy with physical annealing processes, the probabilities are chosen from a Boltzmann distribution with a dynamic temperature parameter that decreases in time, gradually making the optimization more deterministic. An important problem in the design of simulated annealing algorithms is the choice of the temperature parameter, also known as a \emph{cooling schedule}.

Choosing a good cooling schedule is equivalent to solving the explore-exploit problem in the context of simulated annealing, since the temperature parameter balances exploration and exploitation by tuning the amount of stochasticity (exploration) in the algorithm. In their classic work, Mitra~\etal ~\cite{DM-FR-ASV:86} found cooling schedules that maximize the rate of convergence of simulated annealing to the global optimum. In a similar way, the stochastic UCL algorithm (see Algorithm \ref{algo:softmax-ucl} in Appendix F for an explicit pseudocode implementation) extends the deterministic UCL algorithm (Algorithm~\ref{algo:bayes-ucb}) to the stochastic case. The stochastic UCL algorithm chooses an arm at time $t$ using a Boltzmann distribution with temperature $\upsilon_{t}$, so the probability $P_{it}$ of picking arm $i$ at time $t$ is given by
\[P_{it} = \frac{\exp(Q_{i}^t/\upsilon_t)}{\sum_{j=1}^N \exp(Q_{j}^t/\upsilon_t)}.\]
In the case $\upsilon_t \to 0^+$ this scheme chooses $i_t = \argmax\setdef{ Q_{i}^t}{i\in\until{N}}$ and as $\upsilon_t$ increases the probability of selecting any other arm increases. Thus Boltzmann selection generalizes the maximum operation and is sometimes known as the soft maximum (or softmax) rule.

The temperature parameter might be chosen constant, i.e., $\upsilon_t = \upsilon$. In this case the performance of the stochastic UCL algorithm %(Algorithm \ref{algo:softmax-ucl}) 
can be made arbitrarily close to that of the deterministic UCL algorithm %(Algorithm \ref{algo:bayes-ucb})
by taking the limit $\upsilon \to 0^+$. However, \cite{DM-FR-ASV:86} showed that good cooling schedules for simulated annealing take the form
\[ \upsilon_t = \frac{\nu}{\log t},\]
so we investigate cooling schedules of this form.   We choose $\nu$ using a feedback rule on the values of the heuristic function $Q_i^t, i\in \until{N}$ and define the cooling schedule as
\[ \upsilon_t = \frac{\Delta Q_{\min}^t}{2 \log t},\]
where
$\Delta Q_{\min}^t = \min \setdef{|Q_{i}^t - Q_{j}^t|}{i, j\in \until{N}, i\ne j}$ is the minimum gap between the heuristic function value for any two pairs of arms. We  define $\infty - \infty = 0$, so that $\Delta Q_{\min}^t = 0$ if two arms have infinite heuristic values, and define $0/0 = 1$.

\subsection{Regret analysis of the stochastic UCL algorithm}
In this section we show that for an uninformative prior,  the stochastic UCL algorithm achieves efficient performance. We define $\seqdef{\supscr{R}{SUCL}_t}{t\in\until{T}}$ as the sequence of expected regret for the stochastic UCL algorithm. The stochastic UCL algorithm achieves logarithmic   regret uniformly in time as formalized in the following theorem.

\begin{theorem}[\bit{Regret of the stochastic UCL algorithm}]\label{thm:softmax-UCL}
The following statements hold for the Gaussian multi-armed bandit problem and the stochastic UCL algorithm with uncorrelated uninformative prior and $K=\sqrt{2 \pi e}$:
\begin{enumerate}
\item the expected number of times a suboptimal arm $i$ is chosen until time $T$ satisfies
\begin{multline*}
\E{n_{i}^T} \leq \Big( \frac{8 \beta^2 \sigma_s^2}{\Delta_i^2} + \frac{2}{\sqrt{2\pi e}} \Big) \log T + \frac{\pi^2}{6}\\
+ \frac{4 \beta^2 \sigma_s^2}{\Delta_i^2}(1  -\log 2 -\log\log T) 
 + 1 + \frac{2}{\sqrt{2\pi e}}\; ; 
\end{multline*}
\item the cumulative expected regret until time $T$ satisfies
\begin{multline*}
\sum_{t=1}^T \supscr{R_t}{SUCL}  \leq 
\sum_{i=1}^N \Delta_i \Bigg(\Big( \frac{8 \beta^2 \sigma_s^2}{\Delta_i^2} + \frac{2}{\sqrt{2\pi e}} \Big) \log T  + \frac{\pi^2}{6}\\
+ \frac{4 \beta^2 \sigma_s^2}{\Delta_i^2}(1  -\log 2 -\log\log T) 
 + 1 + \frac{2}{\sqrt{2\pi e}}\Bigg).
\end{multline*}
\end{enumerate}
\end{theorem}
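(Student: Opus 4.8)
The plan is to reduce the analysis of the stochastic UCL algorithm to that of the deterministic UCL algorithm (Theorem~\ref{thm:UCL}), paying only an additive constant for the randomization in the arm-selection rule. Fix a suboptimal arm $i$. As in UCB-type analyses, write $n_i^T \le \ell + \sum_{t=1}^T \indicator{i_t = i,\ n_i^t \ge \ell}$ for a threshold $\ell$ of the form appearing in Theorem~\ref{thm:UCL}, and observe the set inclusion
\[
\{i_t = i,\ n_i^t \ge \ell\}\ \subseteq\ \{Q_i^t \ge Q_{i^*}^t,\ n_i^t \ge \ell\}\ \cup\ \{i_t = i,\ Q_i^t < Q_{i^*}^t\}.
\]
The probabilities of the first event, summed over $t$, are controlled exactly as in the proof of Theorem~\ref{thm:UCL}: the tail estimates invoked there concern the empirical mean $\bar m_i^t$ of i.i.d.\ rewards conditioned on the visit count $n_i^t$, and hence hold for any adaptive policy, the stochastic one included. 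This reproduces the entire right-hand side of the deterministic bound. It therefore remains to show that $\sum_{t=1}^T \Pr{i_t = i,\ Q_i^t < Q_{i^*}^t} \le \pi^2/6$, which will account for the $\pi^2/6$ term in the statement.

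For this randomization term I would use the Boltzmann rule together with the cooling schedule. On the event $Q_i^t < Q_{i^*}^t$ we have $Q_{i^*}^t - Q_i^t \ge \Delta Q_{\min}^t$, so
\[
P_{it} = \frac{\exp(Q_i^t/\upsilon_t)}{\sum_{j=1}^N \exp(Q_j^t/\upsilon_t)} \le \frac{\exp(Q_i^t/\upsilon_t)}{\exp(Q_{i^*}^t/\upsilon_t)} = \exp\!\Big(-\frac{Q_{i^*}^t - Q_i^t}{\upsilon_t}\Big) \le \exp\!\Big(-\frac{\Delta Q_{\min}^t}{\upsilon_t}\Big) = \frac{1}{t^2},
\]
using $\upsilon_t = \Delta Q_{\min}^t/(2\log t)$ in the last step. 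Taking expectations and summing gives $\sum_{t=1}^T \Pr{i_t=i,\ Q_i^t < Q_{i^*}^t} \le \sum_{t=1}^\infty 1/t^2 = \pi^2/6$. Combining the two bounds yields statement (i); statement (ii) then follows from $\sum_{t=1}^T \supscr{R_t}{SUCL} = \sum_{i=1}^N \Delta_i \E{n_i^T}$, exactly as in Theorem~\ref{thm:UCL}.

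The main obstacle I anticipate is the careful treatment of the degenerate cases in which $\Delta Q_{\min}^t$ and $\upsilon_t$ are not ordinary positive reals — namely when two or more arms share the heuristic value $+\infty$ (so $\Delta Q_{\min}^t = \infty-\infty = 0$, hence $\upsilon_t = 0$, with the further convention $0/0 = 1$ at $t=1$), which occurs in the early ``initialization'' stage under an uninformative prior. I would dispatch these by cases: if arm $i$ itself has heuristic value $+\infty$, then $Q_i^t \ge Q_{i^*}^t$ and it contributes only to the concentration term; if some arm other than $i$ has value $+\infty$ while $Q_i^t$ is finite, then the Boltzmann weight of arm $i$ vanishes, so $P_{it} \le 1/t^2$ holds trivially; and if all heuristic values are finite the computation above applies verbatim. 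A secondary, minor point is consistency of tie-breaking between the decomposition above and the deterministic analysis; since the reward posteriors are absolutely continuous, once every arm has been sampled a tie among the $Q_j^t$ has probability zero and does not affect the bounds. I expect the probabilistic core (the $1/t^2$ estimate) to be short — the real work is the bookkeeping of these boundary cases and the verification that the deterministic argument transfers verbatim to the adaptive setting.
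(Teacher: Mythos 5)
Your proposal is correct and follows essentially the same route as the paper's proof: the paper likewise splits $\E{P_{it}}$ into a term conditioned on $Q_i^t \le Q_{i^*}^t$, bounded by $1/t^2$ via the Boltzmann rule and the cooling schedule $\upsilon_t = \Delta Q_{\min}^t/(2\log t)$ (summing to $\pi^2/6$), plus $\prob(Q_i^t > Q_{i^*}^t)$, which is controlled exactly by the deterministic analysis of Theorem~\ref{thm:UCL}; your case analysis of the degenerate values of $\Delta Q_{\min}^t$ mirrors the paper's discussion of the possible magnitudes of the exponent under the conventions $\infty-\infty=0$ and $0/0=1$.
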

\begin{proof}
See Appendix C.
\end{proof}

\subsection{The UCL algorithms with correlated priors}
In the preceding sections, we consider the case of  uncorrelated priors, i.e., the case with diagonal covariance matrix of the prior distribution for mean rewards  $\Sigma_0 = \sigma_0^2 I_N$. However, in many cases there may be dependence among the arms that we wish to encode in the form of a non-diagonal covariance matrix. In fact, one of the main advantages a human may have in performing a bandit task is their prior experience with the dependency structure across the arms resulting in a good prior correlation structure.  We show that including covariance information can improve performance and may, in some cases, lead to sub-logarithmic regret.

Let $\mc N(\bs\mu_0, \Sigma_0)$ and  $\mc N(\bs\mu_0, \Sigma_{0d})$ be correlated and uncorrelated priors on the mean rewards from the arms, respectively, where $\bs \mu_0 \in \real^N$ is the vector of prior estimates of the mean rewards from each arm, $\Sigma_0\in \real^{N\times N}$ is a positive definite matrix, and $\Sigma_{0d}$ is the same matrix with all its non-diagonal elements set equal to $0$.
 The inference procedure described in Section~\ref{subsec:deterministic-ucl-uncorr}  generalizes to a correlated prior as follows: Define $\seqdef{\boldsymbol{\phi}_t \in \mathbb{R}^{N}}{t\in \until{T}}$ to be the indicator vector corresponding to the currently chosen arm $i_t$, where $(\boldsymbol \phi_t)_k = 1$ if $k = i_t$, and zero otherwise. Then the belief state $(\boldsymbol \mu_t,\Sigma_t)$ updates as follows \cite{SMK:93}:
\begin{align}\label{eq:general-inference}
\begin{split}
\mathbf{q} &= \frac{r_t \boldsymbol \phi_t}{\sigma_s^2} + \Lambda_{t-1} \boldsymbol \mu_{t-1} \\
\Lambda_{t} &= \frac{\boldsymbol \phi_t \boldsymbol \phi_t^T}{\sigma_s^2} + \Lambda_{t-1}, \ \Sigma_t = \Lambda_t^{-1}\\
\boldsymbol \mu_t &= \Sigma_t \mathbf{q},
\end{split}
\end{align}
where $\Lambda_t = \Sigma_t^{-1}$ is the \emph{precision} matrix.

The upper credible limit for each arm $i$ can be computed based on the univariate Gaussian marginal distribution of the posterior with mean $\mu_{i}^{t}$ and variance $\left(\sigma_{i}^{t}\right)^2 = (\Sigma_t)_{ii}$. Consider the evolution of the belief state with the diagonal (uncorrelated) prior $\Sigma_{0d}$ and compare it with the belief state based on the non-diagonal $\Sigma_{0}$ which encodes information about the correlation structure of the rewards in the off-diagonal terms. The additional information means that the inference procedure will converge more quickly than in the uncorrelated case, as seen in Theorem \ref{thm:corrPriors}. If the assumed correlation structure correctly models the environment, then the inference will converge towards the correct values, and the performance of the UCL and stochastic UCL algorithms will be at least as good as that guaranteed by the preceding analyses in Theorems~\ref{thm:UCL} and \ref{thm:softmax-UCL}.

Denoting ${\sigma_{i}^t}^2 = (\Sigma_t)_{ii}$ as the posterior at time $t$ based on $\Sigma_0$ and ${\sigma_{id}^t}^2 = (\Sigma_{td})_{ii}$ as the posterior based on $\Sigma_{0d}$, for a given sequence of chosen arms $\seqdef{i_{\tau}}{\tau\in \until{T}}$, we have that the variance of the non-diagonal estimator will be no larger than that of the diagonal one, as summarized in the following theorem:

\begin{theorem}[\bit{Correlated versus uncorrelated priors}]\label{thm:corrPriors}
For the inference procedure in~\eqref{eq:general-inference}, and 
any given sequence of selected arms $\seqdef{i_{\tau}}{\tau\in \until{T}}$, ${\sigma_{i}^t}^2 \leq {\sigma_{id}^t}^2$, for any $t\in \{0, \ldots, T\}$, and for each $i\in \until{N}$. 
\end{theorem}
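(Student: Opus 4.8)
The plan is to exploit the fact that the posterior covariances produced by~\eqref{eq:general-inference} depend only on which arms were selected, not on the observed rewards $r_t$, so the claim reduces to a purely linear-algebraic inequality between diagonal entries of two inverse matrices. Unrolling the precision recursion in~\eqref{eq:general-inference} and using that $\boldsymbol\phi_\tau\boldsymbol\phi_\tau^T$ is the diagonal matrix with a single $1$ in entry $(i_\tau,i_\tau)$, I obtain $\Lambda_t = \Sigma_0^{-1} + \tfrac{1}{\sigma_s^2}N_t$ and $\Lambda_{td} = \Sigma_{0d}^{-1} + \tfrac{1}{\sigma_s^2}N_t$, where $N_t := \operatorname{diag}(n_1^t,\dots,n_N^t)$ is the \emph{same} nonnegative diagonal matrix in both cases (and $N_0 = 0$). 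Since $\Sigma_{0d}$ is the diagonal part of $\Sigma_0$, the matrix $\Sigma_{0d}^{-1}$ is diagonal with entries $1/(\Sigma_0)_{kk}$, and $\Sigma_0,\Sigma_{0d},\Lambda_t,\Lambda_{td}$ are all positive definite. Thus it suffices to prove $(\Lambda_t^{-1})_{ii}\le(\Lambda_{td}^{-1})_{ii}$, i.e.\ $\big((\Lambda_t^{-1})_{ii}\big)^{-1}\ge\big((\Lambda_{td}^{-1})_{ii}\big)^{-1}$.

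The key tool is the entrywise variational identity: for any positive-definite matrix $M$ and any index $i$, $\big((M^{-1})_{ii}\big)^{-1} = \min\{\,x^T M x : x\in\real^N,\ x_i=1\,\}$, a standard consequence of the Cauchy--Schwarz inequality (or of the Schur-complement formula for a diagonal entry of an inverse). Applying it to $M=\Lambda_t$ gives
\[
\big((\Lambda_t^{-1})_{ii}\big)^{-1} = \min_{x:\,x_i=1}\Big(x^T\Sigma_0^{-1}x + \frac{1}{\sigma_s^2}\sum_{k} n_k^t\,x_k^2\Big).
\]
For every feasible $x$ the first summand satisfies $x^T\Sigma_0^{-1}x \ge \min_{y:\,y_i=1} y^T\Sigma_0^{-1}y = 1/(\Sigma_0)_{ii}$ by the same identity applied to $\Sigma_0^{-1}$ (whose inverse is $\Sigma_0$), and the second summand satisfies $\sum_k n_k^t x_k^2 \ge n_i^t x_i^2 = n_i^t$ since each $n_k^t\ge 0$; hence $\big((\Lambda_t^{-1})_{ii}\big)^{-1} \ge 1/(\Sigma_0)_{ii} + n_i^t/\sigma_s^2$. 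Since $\Lambda_{td}$ is diagonal with $i$-th entry $1/(\Sigma_0)_{ii}+n_i^t/\sigma_s^2$, we get exactly $\big((\Lambda_{td}^{-1})_{ii}\big)^{-1} = 1/(\Sigma_0)_{ii}+n_i^t/\sigma_s^2$ (the above minimization with $M=\Lambda_{td}$ is attained at $x=\mathbf e_i$, the $i$-th standard basis vector). Combining the two, ${\sigma_i^t}^2 = (\Lambda_t^{-1})_{ii} \le (\Lambda_{td}^{-1})_{ii} = {\sigma_{id}^t}^2$, uniformly in $t\in\{0,\dots,T\}$; at $t=0$ both sides equal $(\Sigma_0)_{ii}$.

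I do not expect a serious obstacle, but the pitfall I would steer around is the tempting attempt to prove the full Loewner ordering $\Sigma_t\preceq\Sigma_{td}$ and then read off the diagonal: this already fails at $t=0$, since $\Sigma_{0d}-\Sigma_0$ has zero diagonal and off-diagonal entries equal to $-(\Sigma_0)_{jk}$, hence zero trace but nonzero, and so is indefinite whenever $\Sigma_0$ is not already diagonal. The argument above instead needs only the weaker entrywise comparison, and it goes through precisely because the common data increment $\tfrac{1}{\sigma_s^2}N_t$ is a nonnegative \emph{diagonal} matrix: on the affine slice $\{x:x_i=1\}$ it contributes at least $n_i^t/\sigma_s^2$ no matter what the remaining coordinates of $x$ are --- which is exactly the mechanism by which the off-diagonal prior information in $\Sigma_0$ can only shrink the posterior variance, and (when the assumed correlation structure is correct) speed up convergence.
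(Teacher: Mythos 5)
Your proof is correct, but it takes a genuinely different route from the paper's. The paper argues by induction on $t$: it applies the Sherman--Morrison formula to write $\Sigma_{t+1}$ as a rank-one downdate of $\Sigma_t$, notes that in the diagonal case the downdate of the $(i,i)$ entry vanishes unless $i=i_t$, and compares diagonal entries step by step. You instead work entirely in the information form: unrolling the precision recursion gives $\Lambda_t=\Sigma_0^{-1}+\tfrac{1}{\sigma_s^2}N_t$ with the \emph{same} nonnegative diagonal data matrix $N_t=\operatorname{diag}(n_1^t,\dots,n_N^t)$ in both the correlated and uncorrelated cases, and the extremal identity $\bigl((M^{-1})_{ii}\bigr)^{-1}=\min\setdef{x\tran Mx}{x_i=1}$ then delivers the comparison in one shot, since on the slice $\{x: x_i=1\}$ the prior term is at least $1/(\Sigma_0)_{ii}$ and the data term is at least $n_i^t/\sigma_s^2$. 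What your route buys: it is non-inductive, it sidesteps the entrywise bookkeeping of the rank-one update (where the paper's displayed downdate of the $(i,i)$ entry, written as $\sum_j(\Sigma_t)_{ji_t}(\Sigma_t)_{i_tj}$, should really be the single term $(\Sigma_t)_{ii_t}^2$, and the case $i=i_t$ tacitly uses monotonicity of $x\mapsto x\sigma_s^2/(\sigma_s^2+x)$), and it yields the sharper quantitative conclusion $1/{\sigma_{i}^t}^2\ge 1/(\Sigma_0)_{ii}+n_i^t/\sigma_s^2$ with equality exactly in the uncorrelated case, which makes explicit how off-diagonal prior information can only increase posterior precision. What the paper's route buys is a local, recursive picture of how each individual observation shrinks each variance, which matches the sequential-update presentation of~\eqref{eq:general-inference}. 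Your closing caveat that the full Loewner ordering between $\Sigma_0$ and $\Sigma_{0d}$ fails (their difference is traceless and nonzero, hence indefinite) is correct and worth keeping, as it explains why the entrywise, rather than matrix-order, comparison is the right one.
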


\begin{proof}
We use induction. By construction, ${\sigma_{i}^0}^2 = {\sigma_{id}^0}^2$, so the statement is true for $t = 0$. Suppose  the statement holds for some $t \geq 0$ and consider the update rule for $\Sigma_t$. From the Sherman-Morrison formula for a rank-$1$ update \cite{JS-WM:50}, we have

\[ (\Sigma_{t+1})_{jk} = (\Sigma_t)_{jk} - \left( \frac{\Sigma_t \boldsymbol \phi_t \boldsymbol \phi_t' \Sigma_t}{\sigma_s^2 + \boldsymbol \phi_t' \Sigma_t \boldsymbol \phi_t} \right)_{jk}.\]

We now examine the update term in detail, starting with its denominator:
\[ \boldsymbol \phi_t' \Sigma_t \boldsymbol \phi_t = (\Sigma_t)_{i_ti_t}, \]
so $\sigma_s^2 + \boldsymbol \phi_t' \Sigma_t \boldsymbol \phi_t = \sigma_s^2 + (\Sigma_t)_{i_ti_t}>0$. The numerator is the outer product of the $i_t$-th column of $\Sigma_t$ with itself, and can be expressed in index form as
\[ (\Sigma_t \boldsymbol \phi_t \boldsymbol \phi_t' \Sigma_t)_{jk} = (\Sigma_t)_{ji_t}(\Sigma_t)_{i_tk}.\]
Note that if $\Sigma_t$ is diagonal, then so is $\Sigma_{t+1}$ since the only non-zero update element will be $(\Sigma_t)_{i_ti_t}^2$. Therefore, $\Sigma_{td}$ is diagonal for all $t \geq 0$.

The update of the diagonal terms of $\Sigma$ only uses the diagonal elements of the update term, so
\[ {\sigma_{i}^{(t+1)}}^2 \!\! =  \! (\Sigma_{t+1})_{ii} \! = \! (\Sigma_t)_{ii} - \frac{1}{\sigma_s^2 + \boldsymbol \phi_t' \Sigma_t \boldsymbol \phi_t} \sum_j (\Sigma_t)_{ji_t}(\Sigma_t)_{i_tj}.\]
In the case of $\Sigma_{td}$, the sum over $j$ only includes the $j = i_t$ element whereas with the non-diagonal prior $\Sigma_t$ the sum may include many additional terms. So we have
\begin{align*}
{\sigma_{i}^{(t+1)}}^2 \!\! = \! (\Sigma_{t+1})_{ii} & \!= \! (\Sigma_t)_{ii} - \frac{1}{\sigma_s^2 + \boldsymbol \phi_t' \Sigma_t \boldsymbol \phi_t}  \! \sum_j (\Sigma_t)_{ji_t}(\Sigma_t)_{i_tj}\\
&\leq (\Sigma_{td})_{ii} - \frac{1}{\sigma_s^2 + \boldsymbol \phi_t' \Sigma_{td} \boldsymbol \phi_t}  (\Sigma_{td})_{i_ti_t}^2\\
&= {\sigma_{id}^{(t+1)}}^2,
\end{align*}
and the statement holds for $t+1$.
\end{proof}

Note that the above result merely shows that the belief state converges more quickly in the case of a correlated prior, without making any claim about the correctness of this convergence. For example, consider a case where the prior belief is that two arms are perfectly correlated, i.e., the relevant block of the prior is a multiple of $\displaystyle \left(\begin{smallmatrix} 1 & 1\\ 1 & 1 \end{smallmatrix}\right)$, but in actuality the two arms have very different mean rewards. If the algorithm first samples the arm with lower reward, it will tend to underestimate the reward to the second arm. However, in the case of a well-chosen prior the faster convergence will allow the algorithm to more quickly disregard related sets of arms with low rewards.

\section{Classification of human performance in multi-armed bandit tasks}\label{sec:humanPerformance}

In this section, we study human data from a multi-armed bandit task and show how human performance can be classified as falling into one of several categories, which we term \emph{phenotypes}. We then show that the stochastic UCL algorithm can produce performance that is analogous to the observed human performance.

\subsection{Human behavioral experiment in a  multi-armed bandit task}
In order to study human performance in multi-armed bandit tasks, we ran a spatially-embedded multi-armed bandit task through web servers at Princeton University. Human participants were recruited using Amazon's Mechanical Turk (AMT) web-based task platform \cite{buhrmester2011amazon}. Upon selecting the task on the AMT website, participants were directed to follow a link to a Princeton University website, where informed consent was obtained according to protocols approved by the Princeton University Institutional Review Board.

After informed consent was obtained, participants were shown instructions that told them they would be playing a simple game during which they could collect points, and that their goal was to collect the maximum number of total points in each part of the game.

Each participant was presented with a set of $N = 100$ options in a $10 \times 10$ grid. At each decision time $t \in \until{T}$, the participant made a choice by moving the cursor to one element of the grid and clicking.   After each choice was made a numerical reward associated to that choice was reported on the screen.   The time allowed for each choice was manipulated and allowed to take one of two values, denoted fast and slow.  If the participant did not make a choice within 1.5 (fast) or 6 (slow) seconds after the prompt, then the last choice was automatically selected again.   The reward was visible until the next decision was made and the new reward reported. The time allotted for the next decision began immediately upon the reporting of the new reward.    Figure~\ref{fig:exp} shows the screen used in the experiment.

\begin{figure}
\centering

\includegraphics[width=0.45\textwidth]{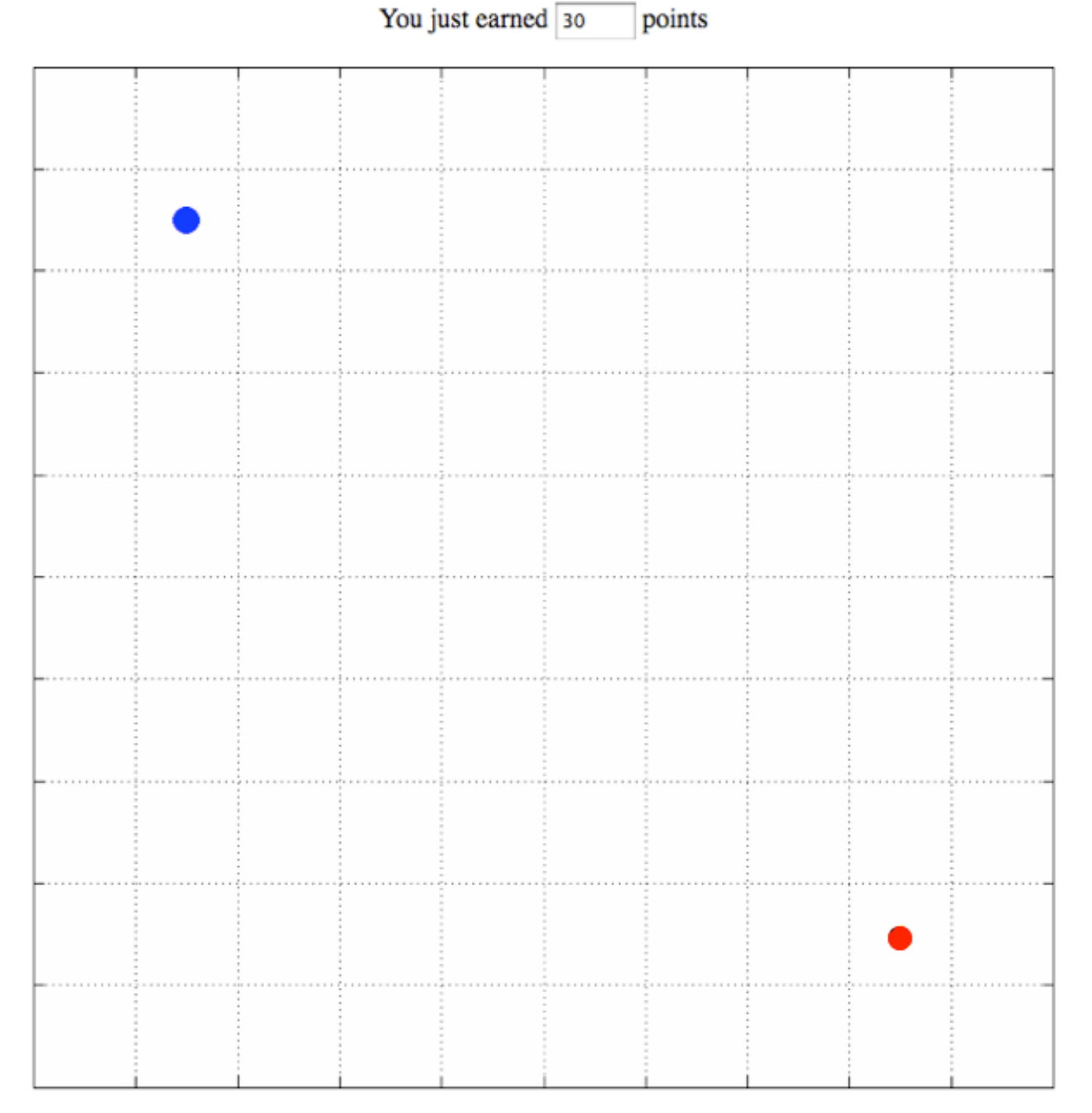}

\caption{The screen used in the experimental interface. Each square in the grid corresponded to an available option. The text box above the grid displayed the most recently received reward, the blue dot indicated the participant's most recently recorded choice, and the smaller red dot indicated the participant's next choice. In the experiment, the red dot was colored yellow, but here we have changed the color for legibility. When both dots were located in the same square, the red dot was superimposed over the blue dot such that both were visible. Initially, the text box was blank and the two dots were together in a randomly chosen square. Participants indicated a choice by clicking in a square, at which point the red dot would move to the chosen option. Until the time allotted for a given decision had elapsed, participants could change their decision without penalty by clicking on another square, and the red dot would move accordingly. When the decision time had elapsed, the blue dot would move to the new square, the text box above the grid would be updated with the most recent reward amount, and the choice would be recorded.}

\label{fig:exp}
\end{figure}

The dynamics of the game were also experimentally manipulated, although we focus exclusively here on the first dynamic condition. The first dynamic condition was a standard bandit task, where the participant could choose any option at each decision time, and the game would immediately sample that option. In the second and third dynamic conditions, the participant was restricted in choices and the game responded in different ways.   These two conditions are beyond the scope of this paper.

Participants were first trained with three training blocks of $T = 10$ choices each, one for each form of the game dynamics.  Subsequently, the participants performed two task blocks of $T = 90$ choices each in a balanced experimental design.  For each participant, the first task had parameters randomly chosen from one of the 12 possible combinations (2 timing, 3 dynamics, 2 landscapes), and the second task was conditioned on the first so that the alternative timing was used with the alternative landscape and the dynamics chosen randomly from the two remaining alternatives.   In particular, only approximately 2/3 of the participants were assigned  a standard bandit task, while other subjects were assigned other dynamic conditions.   The horizon $T<N$ was chosen so that prior beliefs would be important to performing the task.    Each training block took 15 seconds and each task block took 135 (fast) or 540 (slow) seconds.   The time between blocks was negligible, due only to network latency.

Mean rewards in the task blocks corresponded to one of two landscapes:  Landscape A (Figure \ref{fig:surfaceProfiles}(a)) and Landscape B (Figure \ref{fig:surfaceProfiles}(b)). Each landscape was flat along one dimension and followed a profile along the other dimension.  In the two task blocks, each participant saw each landscape once, presented in random order. Both landscapes had a mean value of 30 points and a maximum of approximately 60 points, and the rewards $r_t$ for choosing an option $i_t$ were computed as the sum of the mean reward $m_{i_t}$ and an integer chosen uniformly from the range $[-5,5]$.   In the training blocks, the landscape had a mean value of zero everywhere except for a single peak of 100 points in the center. The participants were given no specific information about the value or the structure of the reward landscapes.

To incentivize the participants to make choices to maximize their cumulative reward, the participants were told that they were being paid based on the total reward they collected during the tasks. As noted above, due to the multiple manipulations, not every participant performed a standard bandit task block. Data were collected from a total of 417 participants:   326 of these participants performed one standard bandit task block each, and the remaining 91 participants performed no standard bandit task blocks.

\begin{figure}[h]
   \centering
   \begin{tabular}{c}
   	{\includegraphics[width=3.5in]{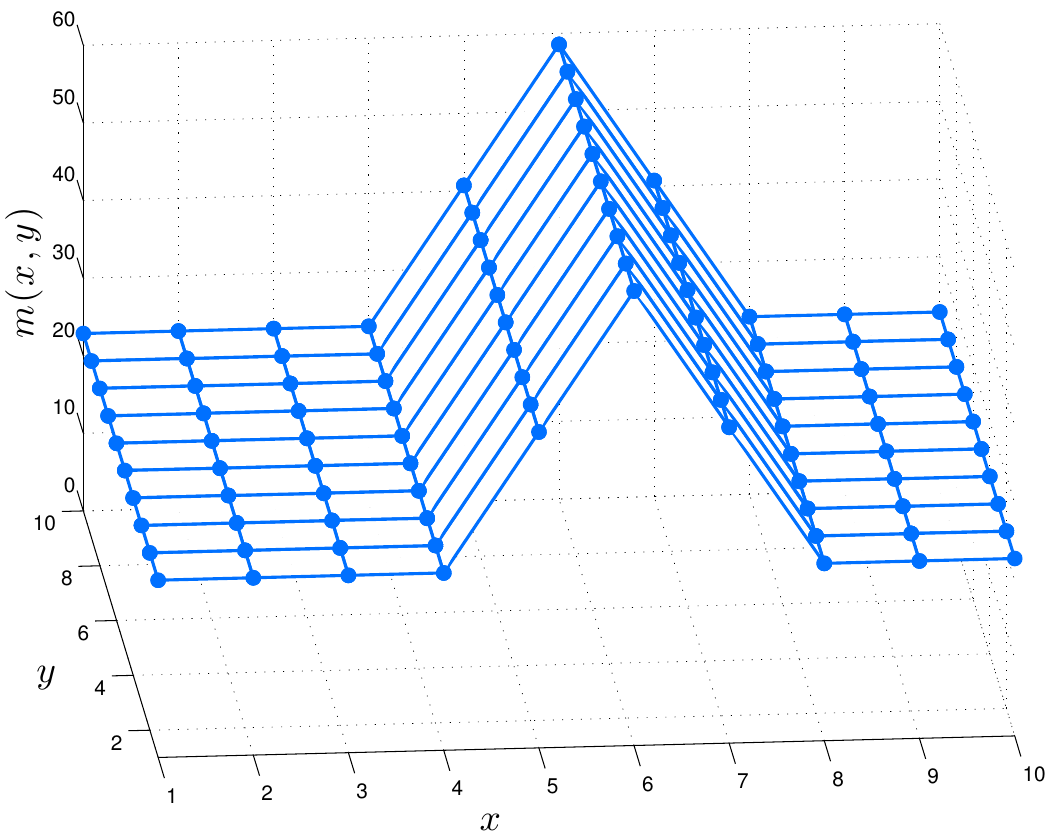}}\\
	(a)\\
   	{\includegraphics[width=3.5in]{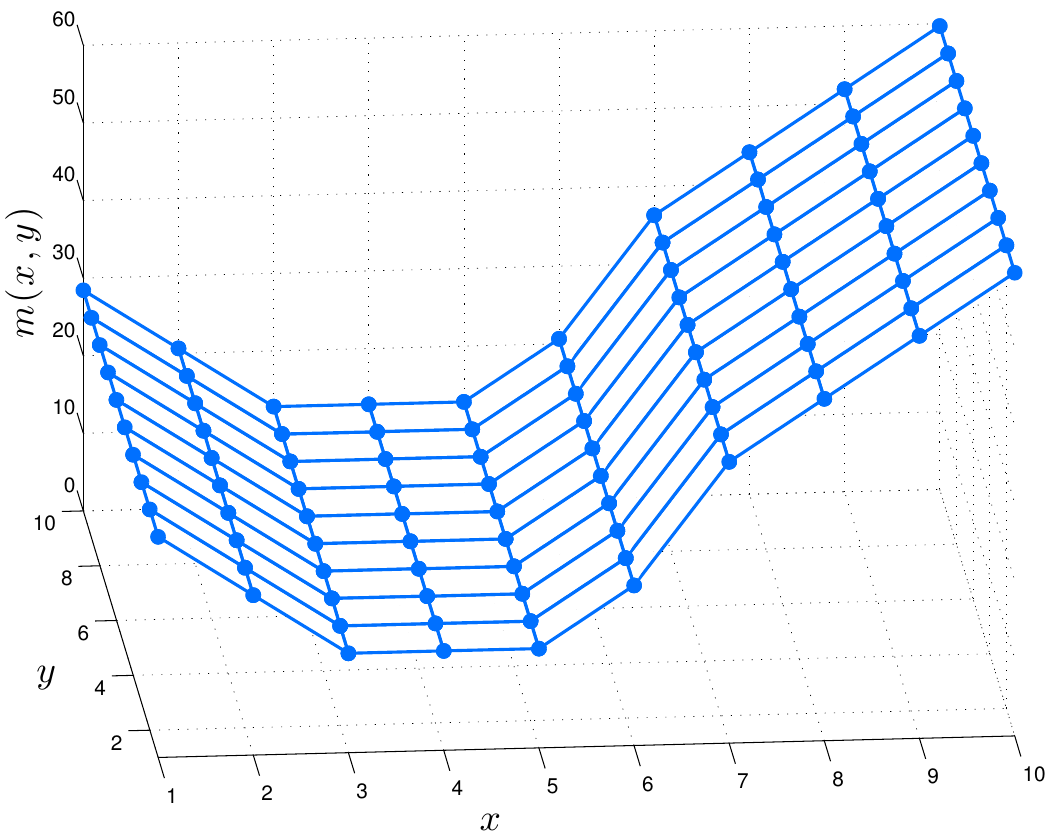}}\\
	(b)
  \end{tabular}
   \caption{The two task reward landscapes: (a) Landscape A, (b) Landscape B. The two-dimensional reward surfaces followed the profile along one dimension (here the $x$ direction) and were flat along the other (here the $y$ direction). The Landscape A profile is designed to be simple in the sense that the surface is concave and there is only one global maximum ($x=6$), while the Landscape B profile is more complicated since it features two local maxima ($x = 1$ and 10), only one of which ($x=10$) is the global maximum.}
   \label{fig:surfaceProfiles}
\end{figure}

\subsection{Phenotypes of observed performance}
For each 90 choice standard bandit task block, we computed observed regret by subtracting the maximum mean cumulative reward from the participant's cumulative reward, i.e.,
\[\mathcal{R}(t) = m_{i^*}t - \sum_{\tau = 1}^t r_{\tau}. \]
The definition of $\mathcal{R}(t)$ uses received rather than expected reward, so it is not identical to cumulative expected regret. However, due to the large number of individual rewards received and the small variance in rewards, the difference between the two quantities is small.

We study human performance by considering the functional form of $\mathcal{R}(t)$. Optimal performance in terms of regret corresponds to $\mathcal{R}(t) = \mathcal{C} \log t$, where $\mathcal{C}$ is the sum over $i$ of the factors in \eqref{eq:optBound}.   The worst-case performance, corresponding to repeatedly choosing the lowest-value option, corresponds to the form $\mathcal{R}(t) = \mathcal{K}t$, where $\mathcal{K} > 0$ is a constant. Other bounds in the bandit literature (e.g. \cite{NS-AK-SMK-MS:12}) are known to have the form $\mathcal{R}(t) = \mathcal{K} \sqrt{t}$.

To classify types of observed human performance in bandit tasks, we fit models representing these three forms to the observed regret from each task. Specifically, we fit the three models
\begin{align}
\mathcal{R}(t) &= a + bt \label{eq:linModel}\\
\mathcal{R}(t) &= at^b \label{eq:powModel}\\
\mathcal{R}(t) &= a + b\log(t) \label{eq:logModel}
\end{align}
to the data from each task and classified the behavior according to which of the models \eqref{eq:linModel}--\eqref{eq:logModel} best fit the data in terms of squared residuals. 
Model selection using this procedure is tenable given that the complexity or number of degrees of freedom of the three models is the same.

Of the 326 participants who performed a standard bandit task block, 59.2\% were classified as exhibiting linear regret (model \eqref{eq:linModel}), 19.3\% power regret \eqref{eq:powModel}, and 21.5\% logarithmic regret \eqref{eq:logModel}.     This suggests that 40.8\% of the participants performed well overall and 21.5\% performed very well.  We observed no significant correlation between performance and timing, landscape, or order (first or second) of playing the standard bandit task block.

Averaging across all tasks, mean performance was best fit by a power model with exponent $b \approx 0.9$, so participants on average achieved sub-linear regret, i.e., better than linear regret.   The nontrivial number of positive performances are noteworthy given that $T < N$, i.e., a relatively short time horizon which makes the task challenging.

Averaging, conditional on the best-fit model, separates the performance of the participants into the three categories of regret performance as can be observed in Figure \ref{fig:phenotypesErrBars}. The difference between linear and power-law performance is not statistically significant until near the task horizon at $t=90$, but log-law performance is statistically different from the other two, as seen using the confidence intervals in the figure. We therefore interpret the linear and power-law performance phenotypes as representing participants with low performance and the log-law phenotype as representing participants with high performance. Interestingly, the three models are indistinguishable for time less than sufficiently small $t \lesssim 30$. This may represent a fundamental limit to performance that depends on the complexity of the reward surface: if the surface is smooth, skilled participants can quickly find good options, corresponding to a small value of the constant $\mathcal{K}$, and thus their performance will quickly be distinguished  from  less skilled participants.  However,  if the surface is rough, identifying good options is harder and will therefore require more samples, i.e., a large value of $\mathcal{K}$, even for  skilled participants.

\begin{figure}[h]
   \centering
   \includegraphics[width=3.5in]{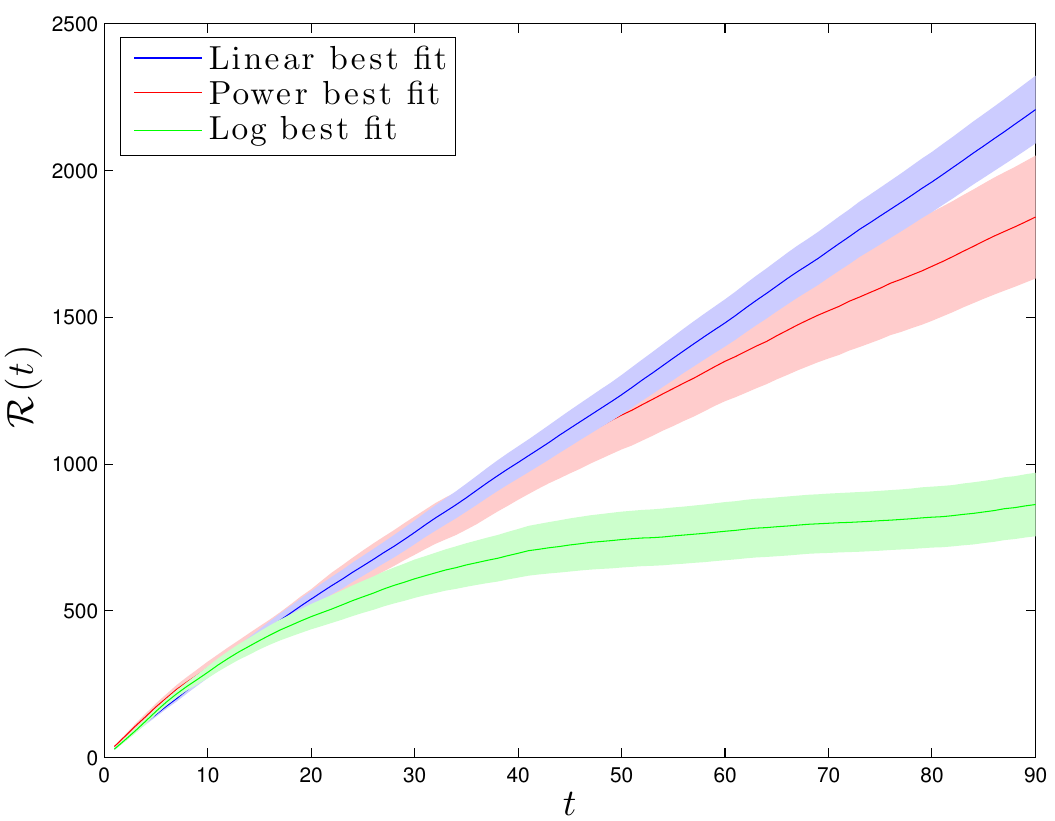}
   \caption{Mean observed regret $\mathcal{R}(t)$ conditional on the best-fit model \eqref{eq:linModel}--\eqref{eq:logModel}, along with bands representing 95\% confidence intervals. Note how the difference between linear and power-law regret is not statistically significant until near the task horizon $T=90$, while logarithmic regret is significantly less than that of the linear and power-law cases.}
   \label{fig:phenotypesErrBars}
\end{figure}

\subsection{Comparison with UCL}
Having identified the three phenotypes of observed human performance in the above section, we show that the stochastic UCL algorithm (Algorithm \ref{algo:softmax-ucl}) can produce behavior corresponding to the linear-law and log-law phenotypes by varying a minimal number of parameters.
Parameters are used to encode the prior beliefs  and the decision noise of the participant. A minimal set of parameters is given by the four scalars $\mu_0, \sigma_0, \lambda$ and $\upsilon$, defined as follows.

\medskip
%\begin{enumerate}

\noindent
(i) {\bf  Prior mean}
The model assumes prior beliefs about the mean rewards to be a Gaussian distribution with mean $\bs \mu_0$ and covariance $\Sigma_0$. It is reasonable to assume that participants set $\bs \mu_0$ to the uniform prior $\bs \mu_0 = \mu_0 \mathbf{1}_N$, where $\mathbf{1}_N \in \real^N$ is the vector with every entry equal to 1. Thus, $\mu_0 \in \real$ is a single parameter that encodes the participants' beliefs about the mean value of rewards.

\medskip

\noindent
(ii,iii) {\bf Prior covariance}
For a spatially-embedded task, it is reasonable to assume that arms that are spatially close will have similar mean rewards. Following \cite{NEL-DAP-FL-RS-DMF-RED:2007} we choose the elements of $\Sigma_0$ to have the form

\begin{equation}
\Sigma_{ij}= \sigma_0^2 \exp(-|x_i-x_j|/\lambda),
\label{eq:spatialCovPrior}
\end{equation}
where $x_i$ is the location of arm $i$ and $\lambda \ge 0$ is the correlation length scale parameter that encodes the spatial smoothness of the reward surface. The case $\lambda = 0$ represents complete independence of rewards, i.e., a very rough surface, while as $\lambda$ increases the agent believes the surface to be smoother. The parameter $\sigma_0 \geq 0$ can be interpreted as a confidence parameter, with $\sigma_0 = 0$ representing absolute confidence in the beliefs about the mean $\bs \mu_0$, and $\sigma_0 = +\infty$ representing complete lack of confidence.

\medskip

\noindent
(iv) {\bf Decision noise}
In Theorem \ref{thm:softmax-UCL} we show that for an appropriately chosen cooling schedule, the stochastic UCL algorithm with softmax action selection achieves logarithmic regret. However, the assumption that human participants employ this particular cooling schedule is unreasonably strong.   It is of great interest in future experimental work to investigate what kind of cooling schedule best models human behavior.  The Bayes-optimal cooling schedule can be computed using variational Bayes methods~\cite{KF-etal:13}; however, for simplicity, we model the participants' decision noise by using softmax action selection with a constant temperature $\upsilon \geq 0$. This yields a single parameter representing the stochasticity of the decision-making:  in the limit $\upsilon \to 0^+$, the model reduces to the deterministic UCL algorithm, while with increasing $\upsilon$ the decision-making is increasingly stochastic.

With this set of parameters, the prior quality $\zeta$ from Remark~\ref{rmk:priorQuality} reduces to $\zeta = (\max_i |m_i-\mu_0|)/\sigma_0$. Uninformative priors correspond to very large values of $\sigma_0$.  Good priors, corresponding to small  values of $\zeta$, have $\mu_0$ close to $m_{i^*} = \max_i m_i$ or little confidence in the value of $\mu_0$, represented by large values of $\sigma_0$.

By adjusting these parameters, we can replicate both linear and logarithmic observed regret behaviors as seen in the human data. Figure \ref{fig:linlogRegretExample} shows examples of simulated observed regret $\mathcal{R}(t)$ that capture linear and logarithmic regret, respectively. In both examples, Landscape B was used for the mean rewards. The example with linear regret shows a case where the agent has fairly uninformative and fully uncorrelated  prior beliefs (i.e., $\lambda = 0$). The prior mean $\mu_0 = 30$ is set equal to the true surface mean, but with $\sigma_0^2 = 1000$, so that the agent is not very certain of this value. Moderate decision noise is incorporated by setting $\upsilon = 4$. The values of the prior encourage the agent to explore most of the $N = 100$ options in the $T = 90$ choices, yielding regret that is linear in time. As emphasized in Remark \ref{rmk:shortHorizon}, the deterministic UCL algorithm (and any agent employing the algorithm) with an uninformative prior cannot in general achieve sub-linear cumulative expected regret in a task with such a short horizon. The addition of decision noise to this algorithm will tend to increase regret, making it harder for the agent to achieve sub-linear regret.

In contrast, the example with logarithmic regret shows how an informative prior with an appropriate correlation structure can significantly improve the agent's performance. The prior mean $\mu_0 = 200$ encourages more exploration than the previous value of 30, but the smaller value of $\sigma_0^2 = 10$ means the agent is more confident in its belief and will explore less. The correlation structure induced by setting the length scale $\lambda = 4$ is a good model for the reward surface, allowing the agent to more quickly reject areas of low rewards. A lower softmax temperature $\upsilon = 1$ means that the agent's decisions are made more deterministically. Together, these differences lead to the agent's logarithmic regret curve; this agent suffers less than a third of the total regret during the task as compared to the agent with the poorer prior and linear regret.

\begin{figure}[h]
   \centering
   \includegraphics[width=3.5in]{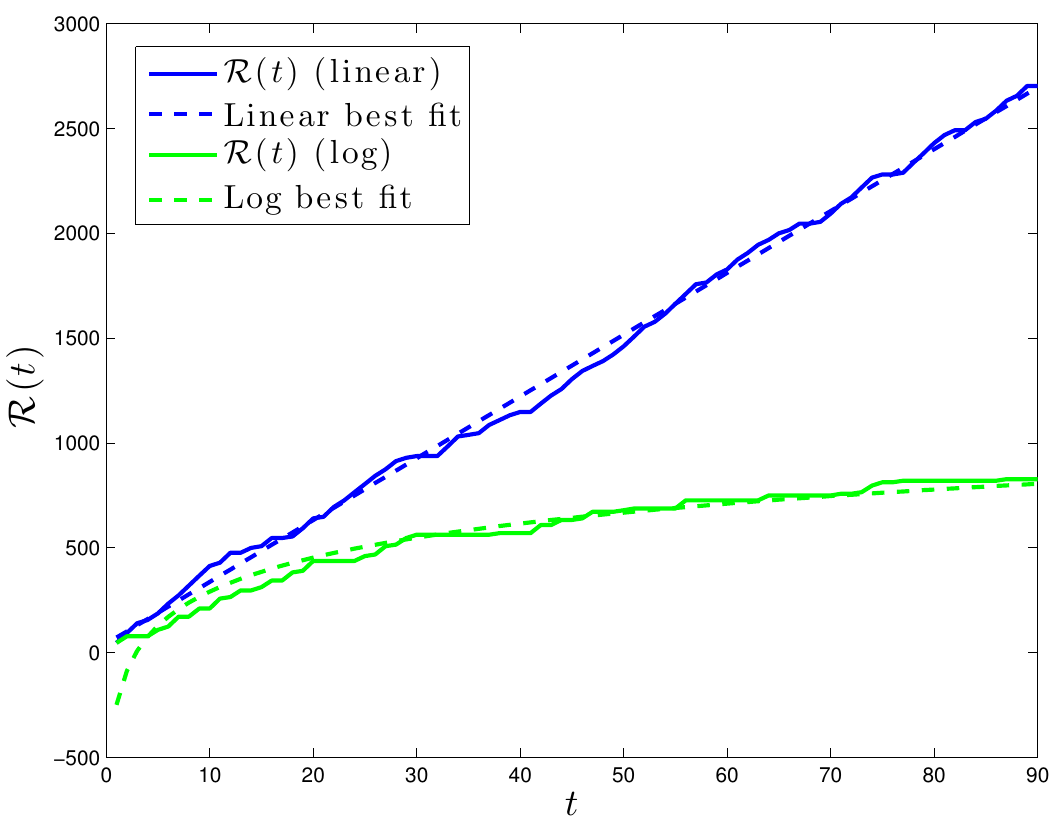}
   \caption{Observed regret $\mathcal{R}(t)$ from simulations (solid lines) that demonstrate linear \eqref{eq:linModel}, blue curves, and log \eqref{eq:logModel}, green curves, regret. The best fits to the simulations are shown (dashed lines). The simulated task parameters were identical to those of the human participant task with Landscape B from Figure~\ref{fig:surfaceProfiles}(b). In the example with linear regret, the agent's prior on rewards was the uncorrelated prior $\mu_0 = 30$, $\sigma_0^2 = 1000, \lambda = 0$. Decision noise was incorporated using softmax selection with a constant temperature $\upsilon = 4$. In the example with log regret, the agent's prior on rewards was the correlated prior with uniform $\mu_0 = 200$ and $\Sigma_0$ an exponential prior \eqref{eq:spatialCovPrior} with parameters $\sigma_0^2 = 10, \lambda = 4$. The decision noise parameter was set to $\upsilon = 1$.}
   \label{fig:linlogRegretExample}
\end{figure}

\section{Gaussian multi-armed bandit problems \\ with transition costs}
Consider an $N$-armed bandit problem as described in Section~\ref{sec:gaussian-bandit}. 
Suppose that the decision-maker incurs a random transition cost $c_{ij}\in \real_{\ge 0}$ for a transition from arm $i$ to arm $j$. No cost is incurred if the decision-maker chooses the same arm as the previous time instant, and accordingly, $c_{ii}=0$. Such a cost structure corresponds to a search problem in which the $N$ arms may correspond to $N$ spatially distributed regions and the transition cost $c_{ij}$ may correspond to the travel cost from region $i$ to region $j$. 

To address this variation of the multi-armed bandit problem, we extend the UCL algorithm to a strategy that makes use of block allocations.   Block allocations refer to sequences in which the same choice is made repeatedly;  thus, during a block no transition cost is incurred.   The UCL algorithm is used to make the choice of arm at the beginning of each block.   The design of the (increasing) length of the blocks makes the block algorithm provably efficient.   This model can be used in future experimental work to investigate human behavior in multi-armed bandit tasks with transition costs.  

\subsection{The Block UCL Algorithm}
For Gaussian multi-armed bandits with transition costs, we develop a block allocation strategy described graphically in Figure \ref{fig:blockAllocation} and in pseudocode in Algorithm~\ref{algo:block-ucb} in Appendix F. The intuition behind the strategy is as follows. The decision-maker's objective is to maximize the total expected reward while minimizing the number of transitions. As we have shown, maximizing total expected reward is equivalent to minimizing expected regret, which we know grows at least logarithmically with time. If we can bound the number of expected cumulative transitions to grow less than logarithmically in time, then the regret term will dominate and the overall objective will be close to its optimum value. Our block allocation strategy is designed to make transitions less than logarithmically in time, thereby ensuring that the expected cumulative regret term dominates.

We know from the Lai-Robbins bound \eqref{eq:optBound} that the expected number of selections of suboptimal arms $i$ is at least $\mathcal{O}(\log T)$. Intuitively, the number of transitions can be minimized by selecting the option with the maximum upper credible limit $\lceil \log T \rceil$ times in a row. However, such a strategy will have a strong dependence on $T$ and will not have a  good performance uniformly in time. To remove this dependence on $T$, we divide the set of natural numbers (choice instances) into frames $\setdef{f_{k}}{k\in \naturals}$ such that frame $f_k$ starts at time $2^{k-1}$ and ends at time $2^k-1$. Thus,  the length of frame $f_k$ is $2^{k-1}$.

We subdivide frame $f_k$ into blocks each of which will correspond to a sequence of choices of the same option. Let the first $\lfloor 2^{k-1}/k\rfloor$ blocks in frame $f_k$ have length $k$ and the remaining choices in frame $f_k$ constitute a single block of length $2^{k-1} -\lfloor 2^{k-1}/k\rfloor k$. The time associated with the choices made within frame $f_k$ is $\mathcal{O}(2^k)$. Thus, following the intuition in the last paragraph, the length of each block in frame $f_k$ is chosen  equal to $k$, which is $\mathcal{O}(\log (2^k))$.  

The total number of blocks in frame $f_k$ is $b_k = \lceil 2^{k-1}/k\rceil$.
Let $\ell\in \naturals$ be the smallest index such that $T < 2^{\ell}$. 
Each block is characterized by the tuple $(k,r)$, for some $k\in \until{\ell}$, and $r\in \until{b_{k}}$, where $k$ identifies the frame and $r$ identifies the block within the frame. 
We denote the time at the start of block $r$ in frame $f_k$ by $\tau_{kr} \in \naturals$.
The block UCL algorithm at time $\tau_{kr}$ selects the arm with the maximum $(1-1/K\tau_{kr} )$-upper credible limit and chooses it $k$ times in a row ($\le k$ times if the  block $r$ is the last  block in frame $f_k$).    The choice at time $\tau_{kr}$ is analogous to the choice at each time instant in the UCL algorithm.

\begin{figure}[h]
\centering
\subfigure[]{\includegraphics[width=0.9\linewidth]{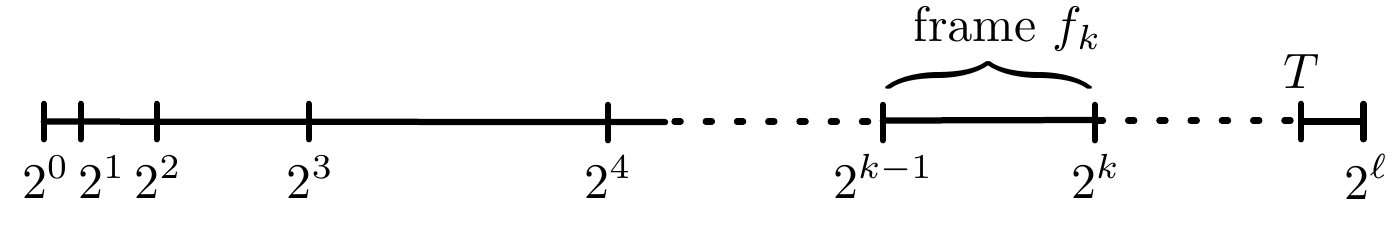}}
\subfigure[]{\includegraphics[width=0.5\linewidth]{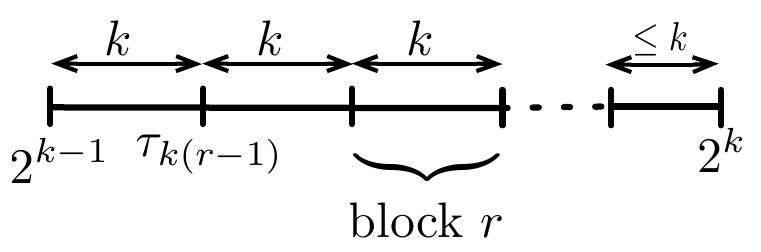}}
\caption{The block allocation scheme used in the block UCL algorithm. Decision time $t$ runs from left to right in both panels. Panel (a) shows the division of the decision times $t \in \until{T}$ into frames $k \in \until{\ell}$. Panel (b) shows how an arbitrary frame $k$ is divided into blocks. Within the frame, an arm is selected at time $\tau_{kr}$, the start of each block $r$ in frame $k$,  and that arm is selected for each of the $k$ decisions in the block.}

\label{fig:blockAllocation}
\end{figure}

Next, we analyze the regret of the block UCL algorithm. 
 We first introduce some notation. Let $Q_i^{kr}$ be the $(1-1/K\tau_{kr} )$-upper credible limit for the mean reward of arm $i$ at allocation round $(k,r)$, where $K=\sqrt{2 \pi e}$ is the credible limit parameter. 
Let $n_i^{kr}$ be the number of times arm $i$ has been chosen until time $\tau_{kr}$ (the start of block $(k,r)$).
Let $s_i^{t}$ be the number of times the decision-maker transitions to arm $i$ from another arm $j\in \until{N}\setminus \{i\}$ until time $t$.
Let the empirical mean of the rewards from arm $i$ until time $\tau_{kr}$  be $\bar m_i^{kr}$.
Conditioned on the number of visits $n_i^{kr}$ to arm $i$ and the empirical mean $\bar m_i^{kr}$, the mean reward   at arm $i$ at time  $\tau_{kr}$  is a Gaussian random variable  ($M_i$) with mean and variance
\begin{align*}
\mu_{i}^{kr}&:=\expt[M_i|n_i^{kr}, \bar m_i^{kr}] = \frac{\delta^2 \mu_{i}^0 + n_i^{kr} \bar m_i^{kr}}{\delta^2+n_{i}^{kr}}, \; \text{and}\;\\
{\sigma_i^{kr}}^2 &:= \text{Var}[M_i|n_i^{kr}, \bar m_i^{kr}] =\frac{\sigma_s^2}{\delta^2 + n_{i}^{kr}},
\end{align*}
respectively. Moreover,
\[
\expt[\mu_{i}^{kr}|n_i^{kr}] \!= \!\frac{\delta^2 \mu_{i}^0 + n_i^{kr} m_i}{\delta^2+n_{i}^{kr}} \; \text{and}\;
\text{Var}[\mu_{i}^{kr}|n_i^{kr}] \!= \!\frac{ n_i^{kr} \sigma_s^2}{(\delta^2+n_{i}^{kr})^2}.
\]
Accordingly, the $\left( 1 - 1/K \tau_{k,r} \right)$-upper credible upper limit $Q_i^{kr}$ is
\[
Q_i^{kr} =  \mu_i^{kr} + 
\frac{\sigma_s}{\sqrt{\delta^2 + n_{i}^{kr}}}  \Phi^{-1}\Big(1- \frac{1}{K \tau_{kr}}\Big).
\]
Also, for each $i\in\until{N}$, we define constants
\begin{align*}
\gamma_1^i &=  \frac{8 \beta^2 \sigma_s^2}{\Delta_i^2} +\frac{1}{\log 2} + \frac{2}{K}, \\
\gamma_2^i &=\frac{4 \beta^2 \sigma_s^2}{\Delta_i^2} (1 - \log 2) + 2 +\frac{8}{K} +\frac{\log 4}{K},\\
\gamma_3^i &= \gamma_1^i \log 2(2 - \log \log 2) \\
& \qquad \qquad - \Big(\frac{4 \beta ^2 \sigma_s^2 }{\Delta_i^2}  \log \log 2 - \gamma_2^i\Big) \Big( 1+ \frac{\pi^2}{6}\Big),\text{ and }\\
{\bar c_{i}}^{\max}&=\max\setdef{\expt[c_{ij}]}{j\in\until{N}}.
\end{align*}

Let $\seqdef{\supscr{R}{BUCL}_t}{t\in\until{T}}$ be the sequence of the expected regret of the block UCL algorithm, and  $\seqdef{\supscr{S}{BUCL}_t}{t\in\until{T}}$ be the sequence of expected transition costs. The block UCL algorithm achieves logarithmic   regret uniformly in time as formalized in the following theorem.
\begin{theorem}[\bit{Regret of block UCL algorithm}]\label{thm:block-ucl}
The following statements hold for the Gaussian multi-armed bandit problem with transition costs and the block UCL algorithm with an uncorrelated uninformative prior:
\begin{enumerate}
\item the expected  number of times a suboptimal arm $i$ is chosen until  time $T$ satisfies
\[
\expt[n_i^{T}] \le \gamma_1^i \log T -\frac{4 \beta ^2 \sigma_s^2}{\Delta_i^2} \log \log T +\gamma_2^i ;
\]
\item the expected  number of transitions to a suboptimal arm $i$ from another arm until time $T$ satisfies
\[
\expt[s_i^T] \le (\gamma_1^i \log 2)  \log \log T  + \gamma_3^i; 
\]
\item the cumulative expected regret and the cumulative transition cost until time $T$ satisfy
\begin{align*}
\sum_{t=1}^T  \supscr{R}{BUCL}_t  &\le \sum_{i=1}^{N}  \Delta_i \Big(\gamma_1^i \log T -\frac{4 \beta^2 \sigma_s^2}{\Delta_i^2} \log \log T + \gamma_2^i\Big), \\
\sum_{t=1}^T  \supscr{S}{BUCL}_t  &\le  
 \sum_{i=1, i\ne i^*}^{N} ({\bar c_{i}}^{\max}+  {\bar c_{i^*}}^{\max} ) \times \\ 
&  \qquad \qquad ((\gamma_1^i \log 2)  \log \log T  + \gamma_3^i) + {\bar c_{i^*}}^{\max}.
\end{align*}
\end{enumerate}
\end{theorem}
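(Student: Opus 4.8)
The plan is to treat the block starts $\tau_{kr}$ as the decision epochs and to run the analysis of the deterministic UCL algorithm (Theorem~\ref{thm:UCL}) at these epochs, paying a full block length every time a suboptimal arm is selected. Throughout I use that with the uninformative prior $\sigma_i^{kr}=\sigma_s/\sqrt{n_i^{kr}}$, that $\mu_i^{kr}\mid n_i^{kr}\sim\mcN(m_i,(\sigma_i^{kr})^2)$, and that with $K=\sqrt{2\pi e}$ the argument of the bounds in Theorem~\ref{thm:tailBounds} simplifies, since $2\pi\alpha^2=1/(e\tau^2)$ when $\alpha=1/(K\tau)$, giving $(\Phi^{-1}(1-1/K\tau))^2<\beta^2(1+2\log\tau-\log(1+2\log\tau))$.

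For statement (i), I would first record the same case split as in Theorem~\ref{thm:UCL}: if arm $i$ is selected at block $(k,r)$ then at least one of $E_1^{kr}=\{\mu_{i^*}^{kr}<m_{i^*}-\sigma_{i^*}^{kr}\Phi^{-1}(1-1/K\tau_{kr})\}$, $E_2^{kr}=\{\mu_i^{kr}>m_i+\sigma_i^{kr}\Phi^{-1}(1-1/K\tau_{kr})\}$, or $E_3^{kr}=\{2\sigma_i^{kr}\Phi^{-1}(1-1/K\tau_{kr})>\Delta_i\}$ occurs; since the relevant block has length at most $k$, this gives $n_i^T\le 1+\sum_{k,r}k\,(\indicator{E_1^{kr}}+\indicator{E_2^{kr}}+\indicator{E_3^{kr}})$, the $1$ accounting for the initialization block where $Q_i^{kr}=+\infty$. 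The event $E_3^{kr}$ forces $n_i^{kr}<4\sigma_s^2(\Phi^{-1}(1-1/K\tau_{kr}))^2/\Delta_i^2=:g(\tau_{kr})$, and the tail bound yields $g(\tau)<\tfrac{4\beta^2\sigma_s^2}{\Delta_i^2}(1+2\log\tau-\log(1+2\log\tau))$, which is increasing; since $n_i^{kr}$ is nondecreasing and increases by the block length whenever $i$ is picked, the total block-length mass in $E_3$-blocks is at most $g(T)$ plus one trailing block of length $\le\ell\le\log T/\log2+1$, and bounding $\log(1+2\log T)\ge\log2+\log\log T$ turns this into the $\tfrac{8\beta^2\sigma_s^2}{\Delta_i^2}\log T-\tfrac{4\beta^2\sigma_s^2}{\Delta_i^2}\log\log T$ part of the claim together with the $1/\log2$ contribution to $\gamma_1^i$. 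For $E_1^{kr},E_2^{kr}$, the Bayesian quantile structure gives $\prob(E_1^{kr})\le1/K\tau_{kr}$ and $\prob(E_2^{kr})\le1/K\tau_{kr}$ with the conditional probability given the visit count being exactly $1/K\tau_{kr}$ whatever that count is, so no peeling argument is needed; then $\E{\sum_{k,r}k(\indicator{E_1^{kr}}+\indicator{E_2^{kr}})}\le\sum_k\sum_r 2k/(K\tau_{kr})$, and summing over the $b_k\approx 2^{k-1}/k$ blocks of frame $f_k$, which lie at $\tau_{kr}=2^{k-1}+(r-1)k$, bounds the inner sum by $\tfrac{2}{K}(2^{1-k}k+\log2)$; summing $\tfrac{2}{K}\log2$ over $k\le\ell$ gives the $\tfrac{2}{K}\log T$ term of $\gamma_1^i$ while $\sum_k 2^{1-k}k$ converges into the $\tfrac{8}{K}+\tfrac{\log4}{K}$ constants of $\gamma_2^i$. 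Collecting terms yields (i).

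For statement (ii), the number of transitions to $i$ is at most the number of blocks in which $i$ is selected, which by the case split above is at most the number of blocks with $E_1^{kr}\vee E_2^{kr}$ plus the number with $E_3^{kr}$. The former has expectation $\le\sum_{k,r}2/(K\tau_{kr})\le\sum_k\tfrac{2}{K}(2^{1-k}+\tfrac{\log2}{k})$, and here the harmonic sum $\sum_{k\le\ell}1/k\le1+\log\ell$ together with $\ell\le\log T/\log2+1$ produces a $\tfrac{2\log2}{K}\log\log T$ term, the remaining constants going into $\gamma_3^i$. The heart of the argument is the $E_3$-count. Ordering the $E_3$-blocks by time and letting $k_m$ be the frame index and $\lambda_m\le k_m$ the length of the $m$-th one, the $E_3$ condition reads $\sum_{p<m}\lambda_p\le n_i^{k_mr_m}<g(2^{k_m})<Ak_m+B$ with $A=\tfrac{8\beta^2\sigma_s^2\log2}{\Delta_i^2}$ and $B=\tfrac{4\beta^2\sigma_s^2}{\Delta_i^2}$; for a full block $\lambda_m=k_m$, so this forces $k_m>(\sum_{p<m}\lambda_p-B)/A$, whence the partial sums $U_m:=\sum_{p\le m}\lambda_p$ obey $U_m>U_{m-1}(1+1/A)-B/A$ and therefore grow geometrically once $U_m$ exceeds $2B$. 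Since $U_{M-1}<Ak_M+B\le A\ell+B=\mathcal O(\log T)$, a quantity growing by a constant factor per step can climb from $\mathcal O(1)$ to $\mathcal O(\log T)$ in only $\mathcal O(\log\log T)$ steps; combined with the bound $g(2^k)/k+1=\mathcal O(1)$ on the number of $E_3$-blocks a single frame can contain, this caps the $E_3$-count at $\mathcal O(\log\log T)$, and the constants can be arranged to give exactly $(\gamma_1^i\log2)\log\log T+\gamma_3^i$ after adding the $E_1\vee E_2$ contribution and absorbing the trailing short blocks (at most one per frame, the frames carrying any $E_3$-block being themselves geometrically spaced by the same mechanism). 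This is the step I expect to be the main obstacle: the delicate point is exactly that the $E_3$-threshold $g$ grows only logarithmically in real time, i.e.\ linearly in the frame index, while each $E_3$-selection injects a full frame-index worth of plays into $n_i$, and making the resulting geometric-growth constants line up with the $\gamma$'s — together with the bookkeeping for frame-terminating blocks — is the fiddly part; statements (i) and (iii) are routine once this machinery is in place.

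Statement (iii) is then bookkeeping. The regret identity $\sum_{t=1}^T\supscr{R_t}{BUCL}=\sum_i\Delta_i\E{n_i^T}$ holds verbatim (block lengths are absorbed into $n_i^T$), so the cumulative-regret bound is (i) multiplied by $\Delta_i$ and summed over $i$. For the transition cost, charge every transition to the suboptimal arm it touches — it enters a suboptimal $i$ at cost $\le\subscr{\bar c_i}{max}$ going the other way, or it enters $i^*$ from some suboptimal $i$ at cost $\le\subscr{\bar c_{i^*}}{max}$ — and use that the number of transitions into $i^*$ exceeds $\sum_{i\ne i^*}s_i^T$ by at most one; bounding each $\E{s_i^T}$ by (ii) then gives the stated $\sum_{i\ne i^*}(\subscr{\bar c_i}{max}+\subscr{\bar c_{i^*}}{max})((\gamma_1^i\log2)\log\log T+\gamma_3^i)+\subscr{\bar c_{i^*}}{max}$.
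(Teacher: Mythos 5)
Your treatment of statement (i) is essentially the paper's proof: the same three-event decomposition at each block start, the same exact quantile bound $\prob(E_1^{kr})=\prob(E_2^{kr})=1/(K\tau_{kr})$ for the uninformative prior, the same deterministic threshold killing $E_3$ once $n_i^{kr}\geq \frac{4\beta^2\sigma_s^2}{\Delta_i^2}(1+2\log T-\log 2-\log\log T)$, and the same evaluation of $\sum_{r}k/\tau_{kr}\leq k2^{1-k}+\log 2$ summed over $\ell\leq 1+\log_2 T$ frames. Statement (iii) also matches the paper, including the charging of transitions into $i^*$ via $s_{i^*}^T\leq\sum_{i\neq i^*}s_i^T+1$.

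For statement (ii) you depart from the paper, and this is where there is a genuine gap. The paper does not count $E_3$-blocks at all: it bounds the transitions to $i$ in frame $f_k$ by $(n_i^{2^k}-n_i^{2^{k-1}})/k$, applies summation by parts to get $s_i^T\leq \frac{n_i^{2^\ell}}{\ell}+\sum_{k=1}^{\ell-1}\frac{n_i^{2^k}}{k^2}$, and then substitutes the statement-(i) bound $\expt[n_i^{2^k}]=\mathcal{O}(k)$ term by term, so that $\sum_k 1/k=\mathcal{O}(\log\log T)$ falls out immediately. Your direct doubling argument on the $E_3$-selected blocks controls only the \emph{full} blocks (length exactly $k_m$), for which $U_m-B>(U_{m-1}-B)(1+1/A)$ indeed forces geometric growth. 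It does not control the trailing block of each frame, which can have length as small as $1$. Concretely: suppose arm $i$ is selected only in a length-one trailing block of each of frames $k_0,k_0+1,\dots$; then after $j$ such frames $n_i^t\approx j$ while the $E_3$ threshold at frame $k_0+j$ is $g(2^{k_0+j})\approx A(k_0+j)+B$ with $A=\frac{8\beta^2\sigma_s^2\log 2}{\Delta_i^2}$, so whenever $A\geq 1$ the condition $n_i^t<g(2^k)$ is consistent with an $E_3$-selected trailing block occurring in \emph{every} frame. The frames carrying $E_3$-blocks are therefore only linearly spaced ($k_j\gtrsim j/A$), not geometrically spaced as you assert, and the deterministic count of $E_3$-selected blocks can be $\Theta(\log T)$ rather than $\mathcal{O}(\log\log T)$. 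This is precisely the step you flagged as delicate, and as written it fails; the potential $U_m$ grows geometrically only when each $E_3$-selection injects $\Theta(k_m)$ plays, which trailing blocks do not. (A secondary issue: even restricted to full blocks, the number of doubling steps is about $(A+1)\log\log T$, and adding the $\frac{2\log 2}{K}\log\log T$ from the $E_1\vee E_2$ blocks plus any trailing-block contribution overshoots the stated constant $\gamma_1^i\log 2$.) To repair (ii) you should abandon the direct block count and use the paper's route: bound transitions per frame by selections-per-frame divided by the block length and sum by parts against the bound from (i).
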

\begin{proof}
See Appendix D. 
\end{proof}

Figures \ref{fig:block-ucl-regret} and \ref{fig:block-ucl-transcost} show, respectively, the cumulative expected regret and the cumulative transition cost of the block UCL algorithm on a bandit task with transition costs. For comparison, the figures also show the associated bounds from statement (iii) of Theorem \ref{thm:block-ucl}. 
Cumulative expected regret was computed  using 250 runs of the block UCL algorithm. Variance of the regret was minimal. The task used the reward surface of Landscape B from Figure~\ref{fig:surfaceProfiles}(b) with sampling noise variance $\sigma_s^2 = 1$. The algorithm used an uncorrelated prior with $\mu_0 = 200$ and $\sigma_0^2 = 10^6$. Transition costs between options were equal to the distance between them on the surface.

The variance of the cumulative regret is relatively small, i.e., the cumulative regret experienced in a given task is close to the expected value. Also the bound on transition costs is quite loose. This is due to the loose bound on the expected number of transitions to the optimal arm. More detailed analysis of the total number of transitions would allow the bound to be tightened.

\begin{figure}
\centering

\includegraphics[width=0.45\textwidth]{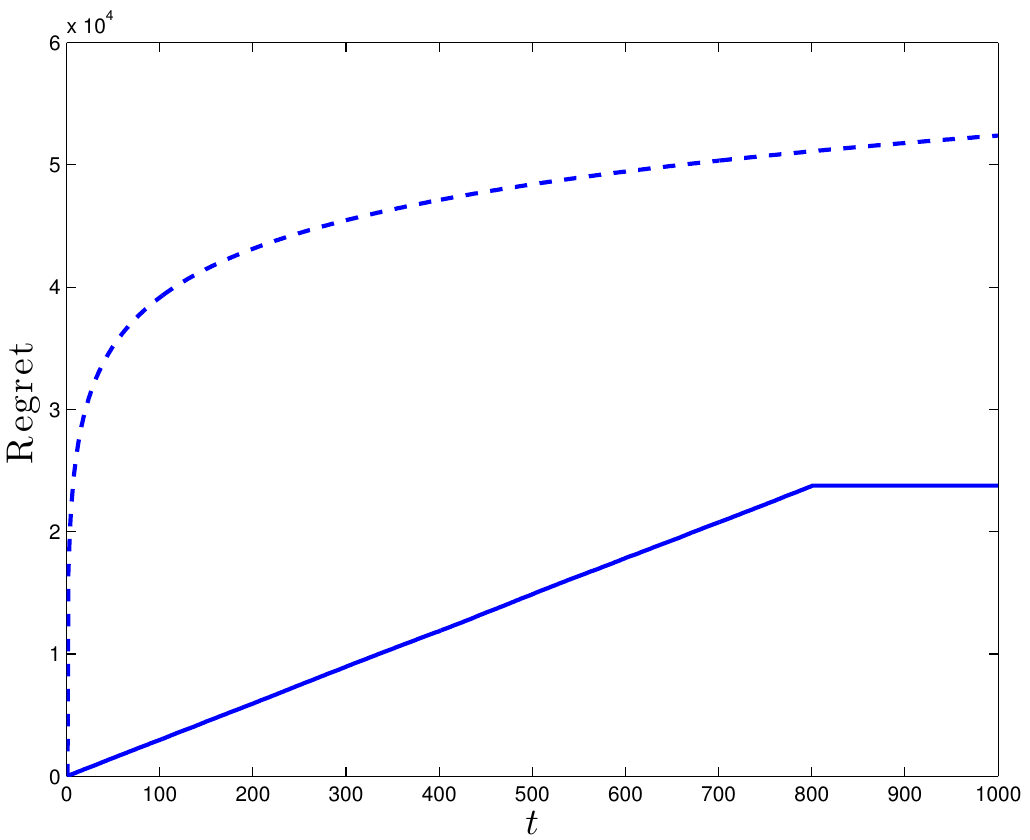}

\caption{Cumulative expected regret (solid line) and the associated bound (dashed line) from Theorem \ref{thm:block-ucl}. Expected regret was computed using 250 runs of the block UCL algorithm; variance of the regret was minimal. The task used the reward surface from Figure~\ref{fig:surfaceProfiles}(b) with sampling noise variance $\sigma_s^2 = 1$. The algorithm used an uncorrelated prior with $\mu_0 = 200$ and $\sigma_0^2 = 10^6$.}

\label{fig:block-ucl-regret}
\end{figure}

\begin{figure}
\centering

\includegraphics[width=0.45\textwidth]{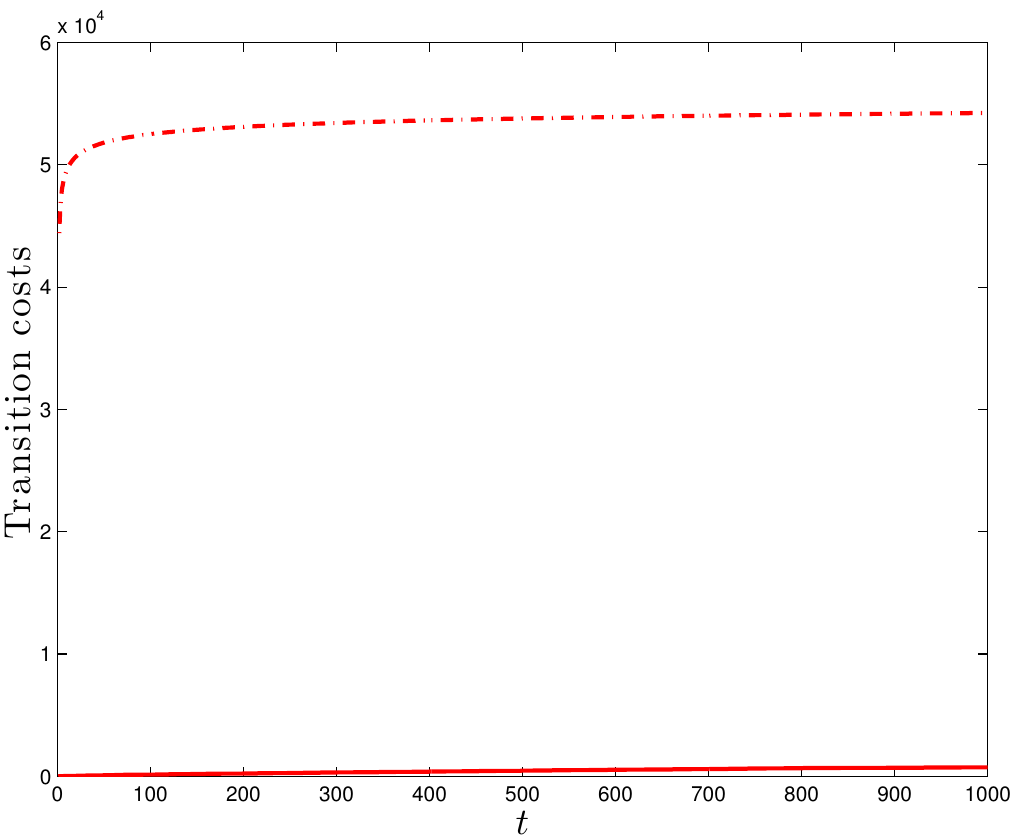}

\caption{Cumulative transition cost (solid line) and the associated bound (dashed line) from Theorem \ref{thm:block-ucl}. Transition costs were computed using 250 runs of the block UCL algorithm with the same parameters as in Figure \ref{fig:block-ucl-regret}. Transition costs between any two arms $i$ and $j$ were deterministic and set equal to $\left| x_i - x_j \right|$, where $x_i$ is the location of arm $i$ in the grid.}

\label{fig:block-ucl-transcost}
\end{figure}

\section{Graphical Gaussian multi-armed bandit problems} \label{sec:graphical-bandits}
We now consider multi-armed bandits with Gaussian rewards in which the decision-maker cannot move to every other arm from the current arm. Let the set of arms that can be visited from arm $i$ be $\mathrm{ne}(i) \subseteq\until{N}$. Such a multi-armed bandit can be represented by a graph $\mc G$ with node set $\until{N}$ and edge set $\mc E =\setdef{(i,j)}{j \in \mathrm{ne}(i), i\in \until{N}}$.  We assume that the graph is connected in the sense that there exists at least one path from each node $i \in \until{N}$ to every other node $j \in \until{N}$. Let $\mc P^{ij}$ be the set of intermediary nodes in a shortest path from node $i$ to node $j$. Note that the set $\mc P^{ij}$ does not contain node $i$ nor node $j$. We denote the cardinality of the set $\mc P^{ij}$ by $p_{ij}$ and accordingly, the elements of the set $\mc P^{ij}$ by $\{P^{ij}_1, \ldots, P^{ij}_{p_{ij}}\}$. 

\subsection{The graphical block UCL algorithm}
For graphical Gaussian multi-armed bandits, we develop an algorithm similar to the block allocation Algorithm~\ref{algo:block-ucb}, namely, the graphical block UCL algorithm, described in pseudocode in Algorithm~\ref{algo:graph-ucb} in Appendix F. Similar to the block allocation algorithm, at each block, the arm with maximum upper credible limit is determined. Since the arm with the maximum upper credible limit may not be immediately reached from the current arm, the graphical block UCL algorithm traverses a shortest path from the current arm to the arm with maximum upper credible limit.   Traversing a shortest path will mean making as many as $N-2$ visits to undesirable arms ($N-2$ is the worst case in a line graph where the current location is at one end of the line and the desired arm is at the other end of the line).  Thus, we apply a block allocation algorithm to limit the number of transitions as in the case of Algorithm~\ref{algo:block-ucb}  for the bandit problem with transition costs.

We classify the selection of arms in two categories, namely, \emph{goal} selection and \emph{transient} selection. The goal selection of an arm corresponds to the situation in which the arm  is selected because it has the maximum upper credible limit, while the transient selection corresponds to the situation in which the arm is selected because it belongs to the shortest path to the arm with the maximum credible limit. Accordingly, we define the block associated with the goal selection of an arm as the \emph{goal block}, and the block associated with arms on the shortest path between two arms associated with consecutive goal blocks as the \emph{transient block}.
The design of the blocks is pictorially depicted in Figure~\ref{fig:graph-blockAllocation}.

\begin{figure}[h]
\centering
\subfigure[Frames associated with {goal} selection of arms]{\includegraphics[width=0.9\linewidth]{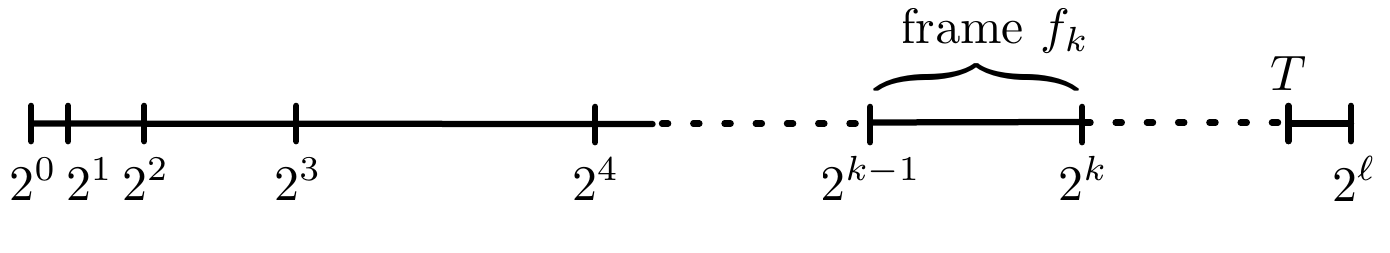}}
\subfigure[Goal and transient blocks within each frame]{\includegraphics[width=0.7\linewidth]{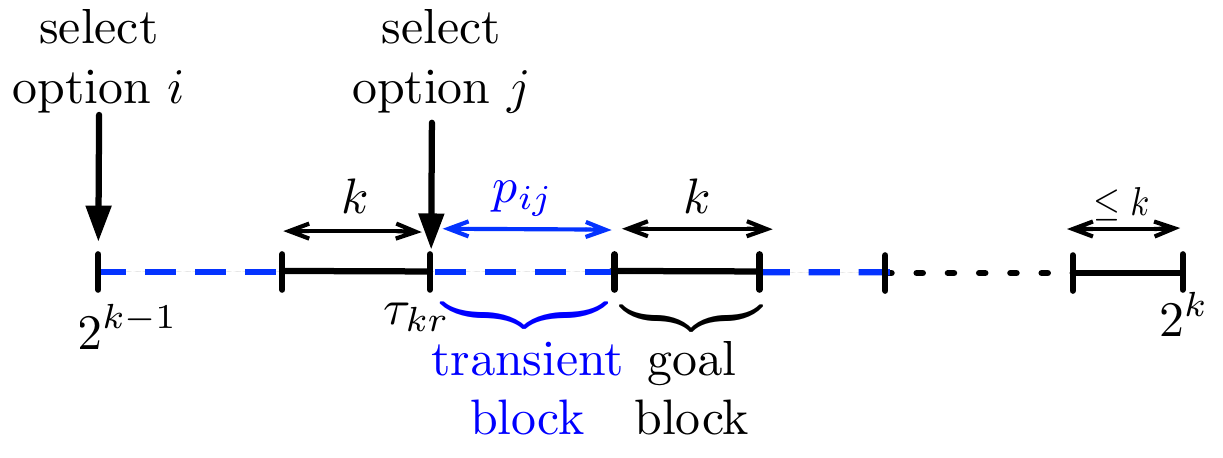}}
\caption{The block allocation scheme used in the graphical block UCL algorithm.
Decision time $t$ runs from left to right in both panels. Panel (a) shows the division of goal selection instances of the arms into frames $k \in \until{\ell}$. The frame $f_k$ corresponds to $2^{k-1}$ goal selections of the arms.
Panel (b) shows how an arbitrary frame $f_k$ is divided into goal and transient blocks. The goal blocks are selected as they are in the block allocation strategy, while the transient blocks correspond to the shortest path between two arms associated with consecutive goal blocks. Only the goal selections are counted to compute the length of the frames.}
\label{fig:graph-blockAllocation}
\end{figure}

The goal selection instances of arms are subdivided into frames $f_k, k\in \until{\ell}$, where the length of the frame $f_k$ is $2^{k-1}$. Note that only goal selections are counted to compute the length of each frame.
The length of the goal blocks within each frame is chosen as it is in  the block allocation strategy.
We denote the time at the start of the transient block before goal block $r$ in frame $f_k$ by $\tau_{kr} \in \naturals$.
The graphical block allocation algorithm at time $\tau_{kr}$ (i) determines the arm with the maximum $(1-1/K\tau_{kr})$-upper credible limit, (ii) traverses the shortest path to the arm, (iii) picks the arm $k$ times ($\le k$ times if the goal block $r$ is the last goal block in frame $f_k$).   In Figure~\ref{fig:graph-blockAllocation} the goal block only shows the choices of the goal selection.  The transient block, shown prior to the corresponding  goal block, accounts for the selections along a shortest path.

The key idea behind the algorithm is that the block allocation strategy results in an expected number of transitions that is sub-logarithmic in the horizon length. In the context of the graphical bandit, sub-logarithmic transitions result in sub-logarithmic \emph{undesired} visits to the arms on the chosen shortest path to the \emph{desired} arm with maximum upper credible limit. Consequently, the cumulative expected regret of the algorithm is dominated by a logarithmic term. 

\subsection{Regret analysis of the graphical block UCL algorithm}
We now analyze the performance of the graphical block UCL algorithm. 
Let $\seqdef{\supscr{R}{GUCL}_t }{t\in\until{T}}$ be the sequence of expected regret of the graphical block UCL algorithm. The graphical block UCL algorithm achieves logarithmic   regret uniformly in time as formalized in the following theorem.

\begin{theorem}[\bit{Regret of graphical block UCL algorithm}]\label{thm:graph-block-ucl}
The following statements hold for the graphical Gaussian multi-armed bandit problem with the graphical block UCL algorithm and an uncorrelated uninformative prior:
\begin{enumerate}
\item the expected  number of times a suboptimal arm $i$ is chosen until  time $T$ satisfies
\begin{multline*}
\expt[n_i^{T}] \le \gamma_1^i \log T -\frac{4 \beta ^2 \sigma_s^2}{\Delta_i^2} \log \log T +\gamma_2^i \\
+ \sum_{i=1, i\ne i^*}^N \big(  (2 \gamma_1^i \log 2)  \log \log T  + 2 \gamma_3^i\big) + 1;
\end{multline*}
\item the cumulative expected regret until time $T$ satisfies
\begin{multline*}
\sum_{t=1}^T \supscr{R}{GUCL}_t \le \sum_{i=1}^N  \Big( \gamma_1^i \log T -\frac{4 \beta ^2 \sigma_s^2}{\Delta_i^2} \log \log T  \\
+\gamma_2^i + \sum_{i=1, i\ne i^*}^N \big(  (2 \gamma_1^i \log 2)  \log \log T  + 2 \gamma_3^i\big) + 1 \Big)\Delta_i.
\end{multline*}
\end{enumerate}
\end{theorem}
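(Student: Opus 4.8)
The plan is to reuse the block UCL analysis (Theorem~\ref{thm:block-ucl}), decomposing every visit to a suboptimal arm $i$ into \emph{goal} selections and \emph{transient} selections and bounding the two contributions separately. Statement (ii) will then be immediate from statement (i) via $\sum_{t=1}^T \supscr{R}{GUCL}_t = \sum_{i=1}^N \Delta_i\,\expt[n_i^T]$ and $\Delta_{i^*}=0$, so the work is all in statement (i).

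First, the goal selections. The goal blocks are organized into frames exactly as in the block UCL algorithm: frame $f_k$ consists of $2^{k-1}$ goal selections, partitioned into $b_k=\lceil 2^{k-1}/k\rceil$ goal blocks of length at most $k$, and the arm picked in goal block $(k,r)$ maximizes $Q_i^{kr}=\mu_i^{kr}+\sigma_i^{kr}\Phi^{-1}(1-1/(K\tau_{kr}))$. The only change from the pure block setting is that the index $\tau_{kr}$ now also counts the transient selections accrued so far; but we always have $2^{k-1}\le\tau_{kr}\le T$, and (since the transient observations only add information about arm $i$) this can only decrease the count of goal selections. Consequently every estimate used in the proof of Theorem~\ref{thm:block-ucl}(i) carries over verbatim: the ``bad'' events that $\mu_i^{kr}$ is too large or $\mu_{i^*}^{kr}$ too small have summable probability (controlled via the bounds of Theorem~\ref{thm:tailBounds}), and off those events arm $i$ is never goal-selected once $n_i^{kr}$ exceeds the threshold $\sim 4\beta^2\sigma_s^2\log T/\Delta_i^2$, with the block-overshoot and $\pi^2/6$ corrections absorbed into $\gamma_1^i,\gamma_2^i$. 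Hence the number of goal selections of a suboptimal arm $i$ is at most $\gamma_1^i\log T-(4\beta^2\sigma_s^2/\Delta_i^2)\log\log T+\gamma_2^i$, which are the first three terms of statement (i).

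Next, the transient selections. Arm $i$ is selected transiently only when it is an intermediate node of the shortest path $\mc P^{ab}$ traversed when the goal arm changes from $a$ to $b$ between two consecutive goal blocks; since a shortest path is simple, arm $i$ occurs on it at most once. Therefore the number of transient visits to arm $i$ is at most the total number of goal-arm changes over the horizon. A goal-arm change is precisely a ``transition'' in the sense of the block UCL algorithm applied to the (identically structured) goal-block sequence, so, writing $s_j$ for the number of goal-arm changes \emph{into} arm $j$, the argument behind Theorem~\ref{thm:block-ucl}(ii) yields $\expt[s_j]\le(\gamma_1^j\log 2)\log\log T+\gamma_3^j$ for each suboptimal $j$. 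Since transitions into and out of $i^*$ alternate, $s_{i^*}\le 1+\sum_{j\ne i^*}s_j$, and each transition out of $i^*$ is a transition into some suboptimal arm, so the total number of goal-arm changes satisfies $\sum_j s_j\le 2\sum_{j\ne i^*}s_j+1\le\sum_{j\ne i^*}\big((2\gamma_1^j\log 2)\log\log T+2\gamma_3^j\big)+1$. This is the remaining term of statement (i); adding it to the goal-selection bound completes statement (i), and statement (ii) follows as noted.

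The inherited estimates for the goal selections are routine. The step that needs care is the bookkeeping for the transient selections: establishing that each traversed shortest path contributes at most one visit to $i$ (so there is no double counting), and bounding the transitions into the optimal arm by the alternation inequality $s_{i^*}\le 1+\sum_{j\ne i^*}s_j$ so that the sub-logarithmic transition estimate of Theorem~\ref{thm:block-ucl}(ii) can be invoked as the dominant ``overhead'' of the graphical setting. I also expect a minor subtlety in checking that the extra transient selections, which shift $\tau_{kr}$ upward, do not disturb either the threshold argument for goal selections or the geometric growth of the frames in which a given arm is selected (on which the $\log\log T$ transition bound rests); this is handled by the crude but sufficient bounds $2^{k-1}\le\tau_{kr}\le T$ together with the fact that the total transient overhead is only $O(N\log\log T)$.
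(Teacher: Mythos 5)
Your proposal is correct and follows essentially the same route as the paper: decompose the visits to a suboptimal arm into goal and transient selections, bound the goal selections by the count bound of Theorem~\ref{thm:block-ucl}(i) (the transient selections only push the frame boundaries later, which cannot increase this count), and bound the transient selections by the total number of goal-arm changes, which via $s_{i^*}\le 1+\sum_{j\ne i^*}s_j$ and Theorem~\ref{thm:block-ucl}(ii) gives exactly the $\sum_{j\ne i^*}\big((2\gamma_1^j\log 2)\log\log T+2\gamma_3^j\big)+1$ term. Your bookkeeping (each shortest path is simple, so each arm is visited at most once per traversal) is in fact spelled out more explicitly than in the paper's own, terser, argument.
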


\begin{proof}
See Appendix E. 
\end{proof}

Figure \ref{fig:graph-block-ucl} shows cumulative expected regret and the associated bound from Theorem \ref{thm:graph-block-ucl} for the graphical block UCB algorithm. The underlying graph topology was chosen to be a line graph, so the algorithm could only choose  to move one step forwards or backwards at each time. Expected regret was computed using 250 runs of the graphical block UCL algorithm. Each task consisted of $N = 10$ bandits with mean rewards set equal to the reward profile along the $x$-axis of Figure~\ref{fig:surfaceProfiles}(b). Reward variance was $\sigma_s^2 = 6.25$, while the agent used the uncorrelated prior with $\mu_0 = 40$ and $\sigma_0^2 = 10^6$. Note that the regret bound is quite loose, as in the case of transition costs for the block UCL algorithm. This is because the regret bound uses the same bound on switching costs as in Theorem~\ref{thm:block-ucl} to bound the regret incurred by traversing the graph.

\begin{figure}
\centering

\includegraphics[width=0.45\textwidth]{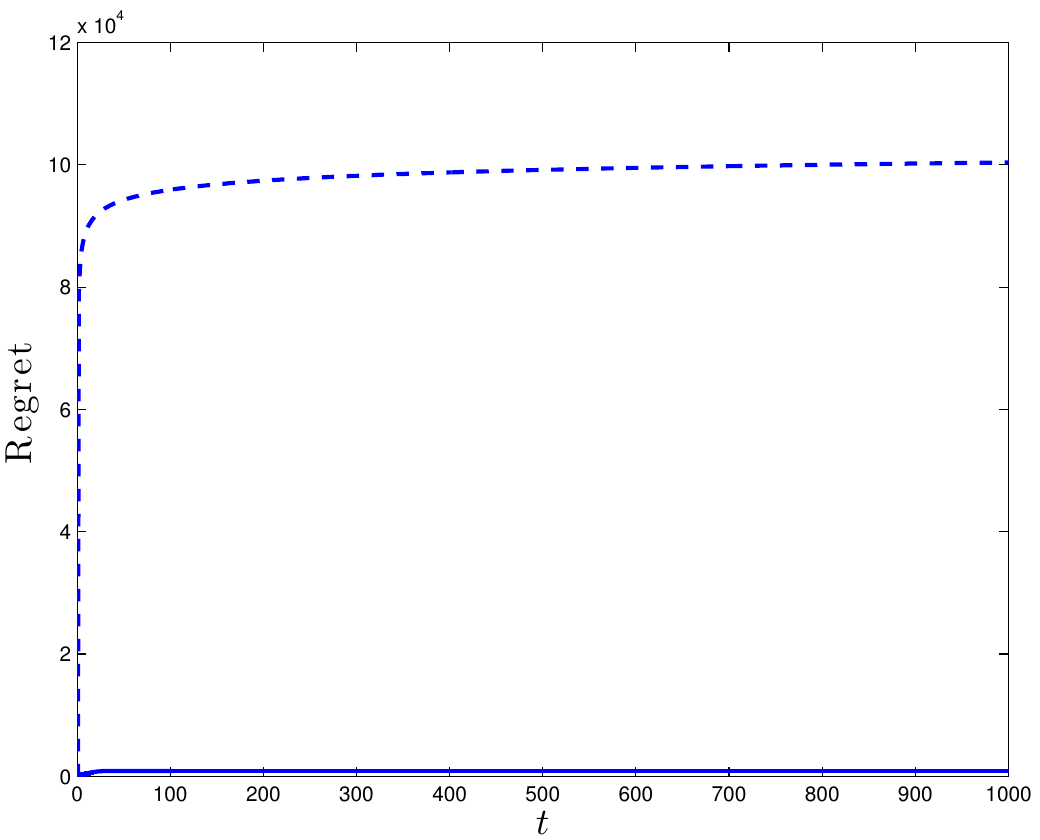}

\caption{Cumulative expected regret (solid line) and the associated bound (dashed line) from Theorem \ref{thm:graph-block-ucl}. Expected regret was computed from 250 simulated tasks played using the graphical block UCL algorithm. Each task consisted of $N = 10$ bandits with mean rewards set equal to the reward profile from Figure~\ref{fig:surfaceProfiles}(b). The graph topology was a line graph, so the agent could only move one step forwards or backwards at each time. Reward variance was $\sigma_s^2 = 6.25$, while the agent used the uncorrelated prior with $\mu_0 = 40$ and $\sigma_0^2 = 10^6$.}

\label{fig:graph-block-ucl}
\end{figure}

\section{Conclusions}
In this paper, we considered multi-armed bandit problems with Gaussian rewards and studied them from a Bayesian perspective. We considered three particular multi-armed bandit problems: the standard multi-armed bandit problem, the multi-armed bandit problem with transition costs, and the graphical multi-armed bandit problem. We developed two UCL algorithms, namely, the deterministic UCL algorithm and the stochastic UCL algorithm, for the standard multi-armed bandit problem. We extended the deterministic UCL algorithm to the block UCL algorithm and the graphical block UCL algorithm for the multi-armed bandit problem with transition costs, and the graphical multi-armed bandit problem, respectively. 
We established that for uninformative priors, each of the proposed algorithms achieves logarithmic regret uniformly in time, and moreover, the block UCL algorithm achieves a sub-logarithmic expected number of transitions among arms. We elucidated the role of general priors and the correlation structure among arms, showing how good priors and good assumptions on the correlation structure among arms can greatly enhance decision-making performance of the proposed deterministic UCL algorithm, even over short time horizons.

We drew connections between the features of the stochastic UCL and human decision-making in multi-armed bandit tasks. 
In particular, we showed how the stochastic UCL algorithm captures  five key features of human decision-making in multi-armed bandit tasks, namely, (i) familiarity with the environment, (ii) ambiguity bonus, (iii) stochasticity, (iv) finite-horizon effects, and (v) environmental structure effects. 
We then presented empirical data from human decision-making experiments on a spatially-embedded multi-armed bandit task and demonstrated that the observed performance is efficiently captured by the proposed stochastic UCL algorithm with appropriate parameters. 

This work presents several interesting avenues for future work in the design of  human-automata systems. The model phenotypes discussed in Section \ref{sec:humanPerformance} provide a method for assessing human performance in real time, and the experimental results presented in that section suggest that some humans use informative priors for spatial search tasks which allow them to achieve better performance than a similar algorithm using uninformative priors. Therefore, a useful goal for human-automata systems would be to develop a means to learn the humans' informative priors  and use them to improve the performance of the overall system.

This work also presents several interesting avenues for future psychological research.
First, in this work, we relied on certain functional forms for the parameters in the algorithms, e.g., we considered credibility parameter $\alpha_t =1/Kt$ and cooling schedule $\upsilon_t = \nu/\log t$. It is of interest to perform thorough experiments with human subjects to ascertain the correctness of these functional forms. Second,  efficient methods for estimation of parameters in the proposed algorithms need to be developed.

Overall, the proposed algorithms provide ample insights into plausible decision mechanisms involved with human decision-making in tasks with an explore-exploit tension.  We envision a rich interplay between these algorithms and psychological research.

\section*{Acknowledgements}
\addcontentsline{toc}{section}{Acknowledgement }
The authors wish to thank John Myles White, Robert C. Wilson, Philip Holmes and Jonathan D. Cohen for their input, which helped make possible the strong connection of this work to the psychology literature. We also thank the editor and two anonymous referees, whose comments greatly strengthened and clarified the paper. The first author is grateful to John Myles White and Daniel T. Swain for their help with implementing the online experiment.

%\bibliographystyle{unsrt}
%\bibliography{bandits}

\section*{Appendix}
\subsection{Proof of inverse Gaussian tail bound}
\label{AppA}

\vspace{2mm}
\begin{proof}[Proof of Theorem~\ref{thm:tailBounds}]
We start by establishing inequality~\eqref{eq:QDown}. It suffices to establish this inequality for $\beta=1.02$.
Since the cumulative distribution function for the standard normal random variable is a continuous and monotonically increasing function, it suffices to show that
\begin{equation} \label{eq:equivalent-statement}
\Phi(\beta \sqrt{- \log(-2\pi \alpha^2 \log(2\pi \alpha^2))}) + \alpha -1 \ge 0,
\end{equation}
for each $\alpha \in (0,1)$. Equation~\eqref{eq:equivalent-statement} can be equivalently written as $h(x) \ge 0$, where $x= 2\pi \alpha^2$ and $\map{h}{(0,1)}{(0, 1/\sqrt{2\pi})}$ is defined by 
\[
h(x)= \Phi(\beta \sqrt{-\log(-x\log x))})+\frac{\sqrt{x}}{\sqrt{2 \pi}}-1.
\]
Note that $\lim_{x\to 0^+} h(x) =0$ and $\lim_{x\to 1^-} h(x) =1/\sqrt{2\pi}$. Therefore, to establish the theorem, it suffices to establish that $h$ is a monotonically increasing function. It follows that
\[
g(x):= 2\sqrt{2\pi}h'(x)= \frac{1}{\sqrt{x}} + \frac{\beta (-x \log x)^{\beta^2/2 - 1} (1+\log x)}{\sqrt{-\log(-x \log x)}}.
\]
Note that $\lim_{x \to 0^+} g(x) =+\infty$ and $\lim_{x \to 1^-} g(x) =1$. Therefore, to establish that $h$ is monotonically increasing, it suffices to show that $g$ is non-negative for $x \in (0,1)$. This is the case if the following inequality holds:

\[
g(x) = \frac{1}{\sqrt{x}} +  \frac{\beta (-x \log x)^{\beta^2/2 - 1} (1+\log x)}{\sqrt{-\log(-x \log x)}} \ge 0,\]
which holds if
\[\frac{1}{\sqrt{x}} \ge -\frac{\beta (-x \log x)^{\beta^2/2 - 1} (1+\log x)}{\sqrt{-\log(-x \log x)}}.\]
The inequality holds if the right hand side is negative. If it is positive, one can take the square of both sides and the inequality holds if
\begin{align*}
-\log(-x &\log x) \ge \beta^2 x (1+\log x)^2 (-x \log x)^{\beta^2 - 2}\\
	&= \beta^2x(1+2\log x+(\log x)^2)(-x \log x)^{\beta^2 - 2}.
\end{align*}
Letting $t = -\log x$, the above inequality transforms to 
\[-\log(t e^{-t}) \ge \beta^2 e^{-t}(1-2t + t^2)(t e^{-t})^{\beta^2 - 2},\]
which holds if 
\[-\log t \ge \beta^2  t^{\beta^2-2}(1-2t + t^2)e^{-(\beta^2 - 1)t} -t.\]
Dividing by $t$, this is equivalent to
\[-\frac{\log t}{t} \ge \beta^2  t^{\beta^2-3}(1-2t + t^2)e^{-(\beta^2 - 1)t} -1,\]
which is true if
\begin{equation}
\inf_{t\in[1,+\infty)} -\frac{\log t}{t} \ge \max_{t \in [1,+\infty)}\beta^2  t^{\beta^2-3}(1-2t + t^2)e^{-(\beta^2 - 1)t} -1.\label{eq:boundFnOft}
\end{equation}

These extrema can be calculated analytically, so we have
\[ \inf_{t\in[1,+\infty)} -\frac{\log t}{t} = -\frac{1}{e} \approx -0.3679 \]
for the left hand side and 
\begin{align*}
t^* =\underset{t \in [1,+\infty)}{\argmax\; } &\beta^2  t^{\beta^2-3}(1-2t + t^2)e^{-(\beta^2 - 1)t} -1\\
 &= 1+\sqrt{2/(\beta^2-1)} \\
 \implies  \!\!\!\! \max_{t \in [1,+\infty)}& \!\! \; \beta^2  t^{\beta^2-3}(1-2t + t^2)e^{-(\beta^2 - 1)t}  \!-1 \approx -0.3729,
\end{align*}
for the right hand side of \eqref{eq:boundFnOft}. Therefore, \eqref{eq:boundFnOft} holds.
In consequence, $g(x)$ is non-negative for $x \in (0,1)$, $h(x)$ is a monotonically increasing function. This establishes inequality~\eqref{eq:QDown}. Inequality~\eqref{eq:QUp} follows analogously. 
\end{proof}

\subsection{Proof of regret of the deterministic UCL algorithm}
\label{AppB}

\begin{proof}[Proof of Theorem~\ref{thm:UCL}]
We start by establishing the first statement.
In the spirit of~\cite{PA-NCB-PF:02}, we bound $n_{i}^T$ as follows:
\begin{align*}
n_{i}^T &= \sum_{t=1}^T \indicator{i_t=i}\\
 &\leq \sum_{t = 1}^T \indicator{Q_{i}^t > Q_{i^*}^t}\\
  & \leq \eta + \sum_{t = 1}^T \indicator{Q_{i}^t > Q_{i^*}^t, n_{i}^{(t-1)} \geq \eta},
 \end{align*}
where $\eta$ is some positive integer and $\indicator{x}$ is the indicator function, with $\indicator{x} = 1$ if $x$ is a true statement and $0$ otherwise.

At time $t$, the agent picks option $i$ over $i^*$ only if
\[ Q_{i^*}^t \leq Q_{i}^t.\]
This is true when at least one of the following equations holds:
\begin{align}
\mu_{i^*}^t &\leq m_{i^*} - C_{i^*}^t \label{eq:Qi*}\\
\mu_{i}^t &\geq m_i + C_{i}^t\label{eq:Qi}\\
m_{i^*} &< m_i + 2C_{i}^t\label{eq:miComp}
\end{align}
where $C_{i}^t = \frac{\sigma_s}{\sqrt{\delta^2 + n_{it}}} \Phi^{-1}(1-\alpha_t)$ and $\alpha_t = 1/Kt$. Otherwise, if none of the equations (\ref{eq:Qi*})-(\ref{eq:miComp}) holds,
\[ Q_{i^*}^t = \mu_{i^*}^t+ C_{i^*}^t > m_{i^*} \geq m_i + 2C_{i}^t > \mu_{i}^t + C_{i}^t = Q_{i}^t,\]
and option $i^*$ is picked over option $i$ at time t.

We proceed by analyzing the probability that Equations (\ref{eq:Qi*}) and (\ref{eq:Qi}) hold. Note that the empirical mean $\bar{m}_{i}^t$ is a normal random variable with mean $m_i$ and variance $\sigma_s^2/n_{i}^t$, so, conditional on $n_{i}^t$, $\mu_{i}^t$ is a normal random variable distributed as
\[ \mu_{i}^t \sim \mathcal{N}\left(\frac{\delta^2 \mu_{i}^0+ n_{i}^t m_i}{\delta^2 + n_{i}^t}, \frac{n_{i}^t \sigma_s^2}{(\delta^2 + n_{i}^t)^2} \right).\]

Equation (\ref{eq:Qi*}) holds if
\begin{align*}
&m_{i^*} \geq \mu_{i^*}^t + \frac{\sigma_s}{\sqrt{\delta^2 + n_{i}^t}} \Phi^{-1}(1-\alpha_t)\\
\iff &m_{i^*}-\mu_{i^*}^t \geq \frac{\sigma_s}{\sqrt{\delta^2 + n_{i}^t}} \Phi^{-1}(1-\alpha_t)\\
\iff & z \leq -\sqrt{\frac{n_{i^*}^t + \delta^2}{n_{i^*}^t}}  \Phi^{-1}(1-\alpha_t) 
 + \frac{\delta^2}{\sigma_s} \frac{\Delta m_{i^*}}{\sqrt{n_{i^*}^t}},
\end{align*}
where $z \sim \mcN(0,1)$ is a standard normal random variable and $\Delta m_{i^*} = m_{i^*}-\mu_{i^*}^0$. 
For an uninformative prior $\delta^2 \to 0^+$, and consequently, equation~\eqref{eq:Qi*} holds if and only if $z \le -\Phi(1-\alpha_t)$. Therefore, for a uninformative prior, 
\[
\prob(\text{Equation}~\eqref{eq:Qi*} \; \text{holds}) = \alpha_t = \frac{1}{Kt}= \frac{1}{\sqrt{2\pi e} t}. 
\]

Similarly, Equation (\ref{eq:Qi}) holds if
\begin{align*}
& m_i \leq \mu_{i}^t - \frac{\sigma_s}{\sqrt{\delta^2 + n_{it}}} \Phi^{-1}(1-\alpha_t)\\
\iff & \mu_{i}^t -m_i \geq \frac{\sigma_s}{\sqrt{\delta^2 + n_{i}^t}} \Phi^{-1}(1-\alpha_t)\\
\iff & z \geq \sqrt{\frac{n_{i}^t + \delta^2}{n_{i}^t}} \Phi^{-1}(1-\alpha_t) + \frac{\delta^2}{\sigma_s} \frac{\Delta m_i}{\sqrt{n_{i}^t}},
\end{align*}
where $z \sim \mcN(0,1)$ is a standard normal random variable and $\Delta m_i = m_{i}-\mu_{i}^0$. The analogous argument to that for the above case shows that, for an uninformative prior, 
\[
\prob(\text{Equation}~\eqref{eq:Qi} \; \text{holds}) = \alpha_t = \frac{1}{Kt} =\frac{1}{\sqrt{2\pi e} t}. 
\]

Equation (\ref{eq:miComp}) holds if 
\begin{align}
&m_{i^*} < m_i + \frac{2 \sigma_s}{\sqrt{\delta^2 + n_{i}^t}}\Phi^{-1}(1-\alpha_t) \nonumber\\
\iff & \Delta_i <  \frac{ 2 \sigma_s}{\sqrt{\delta^2 + n_{i}^t}}\Phi^{-1}(1-\alpha_t) \nonumber\\
\iff & \frac{\Delta_i^2}{4 \beta ^2 \sigma_s^2}(\delta^2 + n_{i}^t) < -\log(-2\pi \alpha_t^2 \log(2\pi \alpha_t^2))\label{eq:third-condition}\\
\implies & \frac{\Delta_i^2}{4 \beta ^2 \sigma_s^2}(\delta^2 + n_{i}^t) < \log (et^2) -\log \log(et^2) \nonumber\\ 
\implies & \frac{\Delta_i^2}{4 \beta ^2 \sigma_s^2}(\delta^2 + n_{i}^t) < \log (eT^2) -\log \log(eT^2) \label{eq:monotonicity} \\
\implies & \frac{\Delta_i^2}{4 \beta ^2 \sigma_s^2}(\delta^2 + n_{i}^t) < 1 + 2 \log T -\log 2 -\log\log T \nonumber
\end{align}
where $\Delta_i = m_{i^*}-m_i$, the inequality~\eqref{eq:third-condition} follows from the bound~\eqref{eq:QDown}, and the inequality~\eqref{eq:monotonicity} follows from the monotonicity of the function $\log x -\log\log x$ in the interval $[e, +\infty)$. 
Therefore, for an uninformative prior, inequality~(\ref{eq:miComp}) never holds if 
\begin{align*}
n_{i}^t  \ge \frac{4 \beta^2 \sigma_s^2}{\Delta_i^2}(1 + 2 \log T -\log 2 -\log\log T).
\end{align*}

Setting $\eta = \lceil  \frac{4 \beta^2 \sigma_s^2}{\Delta_i^2}(1 + 2 \log T -\log 2 -\log\log T) \rceil $, we get
\begin{align*}
 \E{n_{i}^T} &\leq \eta + \sum_{t = 1}^T \prob(Q_{i}^t > Q_{i^*}^t, n_{i}^{(t-1)} \geq \eta) \\
 & = \eta + \sum_{t = 1}^T \prob(\text{Equation}~\eqref{eq:Qi*} \text{ holds},n_{i}^{(t-1)}\geq \eta)\\
 &\qquad \qquad +\sum_{t = 1}^T \prob ( \text{Equation}~\eqref{eq:Qi} \text{ holds}, n_{i}^{(t-1)} \geq \eta )\\
 &<  \frac{4 \beta^2 \sigma_s^2}{\Delta_i^2}(1 + 2 \log T -\log 2 -\log\log T) \\
& \qquad \qquad \qquad \qquad \qquad \qquad + 1 + \frac{2}{\sqrt{2\pi e}} \sum_{t = 1}^T \frac{1}{t}.
\end{align*}
The sum can be bounded by the integral
\[ \sum_{t = 1}^T \frac{1}{t} \leq 1 + \int_1^T \frac{1}{t} \mathrm{d}t = 1 + \log T,\]
yielding the bound in the first statement
\begin{multline*}
\E{n_{i}^T} \le  \Big( \frac{8 \beta^2 \sigma_s^2}{\Delta_i^2} + \frac{2}{\sqrt{2\pi e}} \Big) \log T\\
+ \frac{4 \beta^2 \sigma_s^2}{\Delta_i^2}(1  -\log 2 -\log\log T) 
 + 1 + \frac{2}{\sqrt{2\pi e}}.
\end{multline*}
The second statement follows from the definition of the cumulative expected regret.
\end{proof}

\subsection{Proof of regret of the stochastic UCL algorithm}
\begin{proof}[Proof of Theorem~\ref{thm:softmax-UCL}]
We start by establishing the first statement. 
We begin by bounding $\expt[n_{i}^T]$ as follows
\begin{align}
\E{n_{i}^{T}} &= \sum_{t=1}^T \E{P_{it}} \leq \eta + \sum_{t=1}^T \E{P_{it}\indicator{n_{i}^{t}\geq \eta}},\label{eq:niT}
\end{align}
where $\eta$ is a positive integer.

Now, decompose $\E{P_{it}}$ as
\begin{align}
\expt[P_{it}]  &=  \E{P_{it}|Q_{i}^t \leq Q_{i^*}^t} \prob(Q_{i}^t \leq Q_{i^*}^t) \nonumber \\
& \qquad \qquad\qquad \qquad +  \E{P_{it}|Q_{i}^t > Q_{i^*}^t} \prob(Q_{i}^t > Q_{i^*}^t) \nonumber \\
&\le   \E{P_{it}|Q_{i}^t \leq Q_{i^*}^t}
+ \prob(Q_{i}^t > Q_{i^*}^t).\label{eq:EPit}
\end{align}

The probability $P_{it}$ can itself be bounded as
\begin{align}
P_{it} &= \frac{\exp(Q_{i}^t /\upsilon_t)}{\sum_{j=1}^N \exp(Q_{j}^t /\upsilon_t)}
\leq \frac{\exp(Q_{i}^t /\upsilon_t)}{\exp(Q_{i^*}^t /\upsilon_t)}.\label{eq:prob-upper}
%\\
\end{align}
Substituting the expression for the cooling schedule  in inequality~\eqref{eq:prob-upper}, we obtain
\begin{align}
P_{it} \le \exp\left( -\frac{2(Q_{i^*}^t -Q_{i}^t)}{\Delta Q_{\min}^t} \log t \right) 
= t^{- \frac{2(Q_{i^*}^t -Q_{i}^t)}{\Delta Q_{\min}^t}}.\label{eq:PitUB}
\end{align}
For the purposes of the following analysis, define $\frac{0}{0} = 1$.

Since $\Delta Q_{\min}^t \geq 0$, with equality only if two arms have identical heuristic values, 
conditioned on $Q_{i^*}^t  \ge Q_{i}^t$ the exponent on $t$ can take the following magnitudes:
\[
 \frac{|Q_{i^*}^t -Q_{i}^t|}{\Delta Q_{\min}^t}  = 
\begin{cases}
\frac{0}{0} = 1, & \mbox{if } Q_{i^*}^t=Q_{i}^t, \\
+\infty, & \mbox{if } Q_{i^*}^t \neq Q_{i}^t \mbox{ and } \Delta Q_{\min}^t = 0,\\
x, & \mbox{if } \Delta Q_{\min}^t \neq 0,
\end{cases}
\]
where $x \in [1,+\infty)$. The sign of the exponent is determined by the sign of $Q_{i^*}^t-Q_{i}^t$.

Consequently, it follows from inequality~\eqref{eq:PitUB} that
\[
\sum_{t=1}^T \expt[P_{it}| Q_{i^*}^t \ge  Q_i^t] \le \sum_{t=1}^T \frac{1}{t^2} \le \frac{\pi^2}{6}. 
\]
It follows from inequality~\eqref{eq:EPit} that 
\begin{align*}
\sum_{i=1}^T \expt[P_{it}] &\le \frac{\pi^2}{6} + \sum_{i=1}^T \prob(Q_i^t > Q_{i^*}^t)\\
& \le \frac{\pi^2}{6} + \Big( \frac{8 \beta^2 \sigma_s^2}{\Delta_i^2} + \frac{2}{\sqrt{2\pi e}} \Big) \log T\\
& + \frac{4 \beta^2 \sigma_s^2}{\Delta_i^2}(1  -\log 2 -\log\log T) 
 + 1 + \frac{2}{\sqrt{2\pi e}},
\end{align*}
where the last inequality follows from Theorem~\ref{thm:UCL}. This establishes the first statement.

The second statement follows from the definition of the cumulative expected regret.
\end{proof}

\subsection{Proof of regret of the block UCL algorithm}
\begin{proof}[Proof of Theorem~\ref{thm:block-ucl}]
We start by establishing the first statement. 
For a given $t$, let $({k}_t,{r}_t)$ be the lexicographically maximum tuple such that $\tau_{{k}_t {r}_t} \le t$.
We note that
\begin{align}
n_i^{T} &= \sum_{t=1}^{T} \bs 1 (i_t=i)  \nonumber\\
&= \sum_{t=1}^{T} \big( \bs 1 (i_t=i\;  \& \; n_i^{{k}_t {r}_t} < \eta ) + \bs 1 (i_t=i \; \& \; n_i^{{k}_t {r}_t} \ge  \eta ) \big)\nonumber \\
&\le \eta + \ell +  \sum_{t=1}^{T} \bs 1 (i_t=i \; \& \; n_i^{{k}_t {r}_t} \ge \eta ) \nonumber \\
&\le   \eta + \ell +  \sum_{k=1}^{\ell} \sum_{r=1}^{b_k} k \bs 1 (i_{\tau_{kr}}=i \; \& \; n_i^{kr} \ge \eta ).  \label{eq:suboptimal-bound}
\end{align}
We note that $\bs 1 (i_{\tau_{kr}}=i) \le \bs 1 (Q_i^{kr} > Q_{i^*}^{kr})$, where $i^*$ is the optimal arm.
We now analyze the event $\bs 1 (Q_i^{kr} > Q_{i^*}^{kr})$. It follows that $\bs 1 (Q_i^{kr} > Q_{i^*}^{kr})=1$ if the following inequalities hold:
\begin{align}
 \mu_{i^*}^{kr} & \le m_{i^*}- C_{i^*}^{kr} \label{eq:cond-1}\\
 \mu_{i}^{kr} & \ge m_{i}+ C_{i}^{kr} \label{eq:cond-2}\\
 m_{i^*} &< m_{i} + 2 C_{i}^{kr},\label{eq:cond-3}
\end{align}
where $C_i^{kr}= \frac{\sigma_s}{\sqrt{\delta^2 + n_{i}^{kr}}}  \Phi^{-1}\big(1- \frac{1}{K \tau_{kr}}\big) $. Otherwise if none of the inequalities~\eqref{eq:cond-1}-\eqref{eq:cond-3} hold, then
\[
Q_i^{kr} =  \mu_i^{kr}+ C_{i}^{kr} < \mu_{i^*}^{kr} + C_{i^*}^{kr}= Q_{i^*}^{kr}.
\]
We now evaluate the probabilities of events~\eqref{eq:cond-1}-\eqref{eq:cond-3}. We note that
\begin{multline*}
\prob( \mu_{i^*}^{kr}  \le m_{i^*}- C_{i^*}^{kr}) \\
\le \prob\Bigg( z  \le \frac{\delta^2 (m_{i^*}- \mu_{i^*0})}{\sigma_s \sqrt{n_{i^*}^{kr}}} -\sqrt{ \frac{\delta^2 + n_{i^*}^{kr}}{n_{i^*}^{kr}}} \Phi^{-1} \Big( 1 - \frac{1}{K\tau_{kr}}\Big)\Bigg),
\end{multline*}
where $z \sim \mcN(0,1)$ is a standard normal random variable.   Since $\delta^2\to 0^+$ as $\sigma_0^2 \to +\infty$, it follows that
\begin{align*}
\prob( \mu_{i^*}^{kr} \! \le \! m_{i^*}- C_{i^*}^{kr}) 
\le \prob\Big( z  \le  -\Phi^{-1} \Big( 1 - \frac{1}{K\tau_{kr}}\Big)\Big) \!=\! \frac{1}{K\tau_{kr}}.
\end{align*}
A similar argument shows that $\prob(\mu_{i}^{kr} \ge m_{i}+ C_{i}^{kr}) \le 1/ K \tau_{kr}$.
We note that inequality~\eqref{eq:cond-3} holds if
\begin{align*}
 m_{i^*} &< m_{i} + 2 \frac{\sigma_s}{\sqrt{\delta^2 + n_{i}^{kr}}}  \Phi^{-1}\big(1- \frac{1}{K \tau_{kr}}\big) \\
 \implies \Delta_i & < 2 \frac{\sigma_s}{\sqrt{\delta^2 + n_{i}^{kr}}}  \Phi^{-1}\big(1- \frac{1}{K \tau_{kr}}\big)\\
 \implies \Delta_i^2 & < -4 \frac{\sigma_s^2}{\delta^2 + n_{i}^{kr}} \beta^2 \log\Big(-\frac{2\pi}{K\tau_{kr}}\log\Big(\frac{2\pi}{K^2\tau_{kr}^2}\Big)\Big)\\
 & < \frac{ 4  \beta^2 \sigma_s^2}{\delta^2 + n_{i}^{kr}} \Big( \log (e\tau_{kr}^2) - \log \log (e\tau_{kr}^2)  \Big).
\end{align*}
Since $\log x -\log \log x$ achieves its minimum at $x=e$, it follows that $ \log (e\tau_{kr}^2) - \log \log (e\tau_{kr}^2)  \le \log (eT^2) - \log \log (e T^2)$. Consequently, inequality~\eqref{eq:cond-3} holds if
\begin{align*}
\Delta_i^2 &< \frac{ 4  \beta^2 \sigma_s^2}{\delta^2 + n_{i}^{kr}} \Big( 1 + 2 \log T  - \log( 1 + 2 \log T)   \Big) \\
&< \frac{ 4 \beta^2 \sigma_s^2}{\delta^2 + n_{i}^{kr}} \Big( 1 + 2 \log T  - \log \log T -\log 2  \Big).
\end{align*}
Since $\delta^2 \to 0^+$, it follows that inequality~\eqref{eq:cond-3} does not hold if
\begin{align*}
n_i^{kr} & \ge  
\frac{8 \beta^2 \sigma_s^2}{\Delta_i^2} \Big(\log T -\frac{1}{2} \log \log T \Big) +  \frac{4 \beta^2 \sigma_s^2}{\Delta_i^2} (1 - \log 2).
\end{align*}
Therefore, if we choose 
$\eta = \lceil \frac{8 \beta^2 \sigma_s^2}{\Delta_i^2} (\log T -\frac{1}{2} \log \log T) + \frac{4 \beta^2 \sigma_s^2}{\Delta_i^2} (1 - \log 2)\rceil$, it follows from equation~\eqref{eq:suboptimal-bound} that
\begin{align}\label{eq:suboptimal-bound-exp}
\expt[n_i^T] \le  \eta + \ell + \frac{2}{K} \sum_{k=1}^{\ell} \sum_{r=1}^{b_k} \frac{k}{\tau_{kr}}.
\end{align}
We now focus on the term $\sum_{k=1}^{\ell} \sum_{r=1}^{b_k} \frac{k}{\tau_{kr}}$. We note that 
$\tau_{kr} = 2^{k-1} +(r-1) k $, and hence
\begin{align}
\sum_{r=1}^{b_k} \frac{k}{\tau_{kr}}& =  \sum_{r=1}^{b_k} \frac{k}{ 2^{k-1} +(r-1) k }\nonumber \\
& \le \frac{k}{ 2^{k-1}} + \int_{1}^{b_k} \frac{k}{k(x-1) +{2^{k-1}}} \mathrm{d} x \nonumber \\
& = \frac{k}{2^{k-1}} +\log \frac{ 2^{k-1}+ k(b_k-1)}{ 2^{k-1}} \nonumber \\ %\\
& \le \frac{k}{2^{k-1}} +\log  2 \label{eq:bound-1}.
\end{align}
Since $T\ge 2^{\ell-1}$, it follows that $\ell \le 1 + \log_2 T =: \bar \ell$. 
Therefore,  inequalities~\eqref{eq:suboptimal-bound-exp} and \eqref{eq:bound-1} yield
\begin{align*}
\expt[n_i^T] & \le \eta + \bar \ell + \frac{2}{K} \sum_{k=1}^{\bar \ell}\Big( \frac{k}{2^{k-1}} +\log  2 \Big)\\
& \le \eta + \bar \ell + \frac{8}{K} +  \frac{2\log 2 }{K} \bar \ell \\
& \le \gamma_1^i \log T -\frac{4 \beta ^2 \sigma_s^2}{\Delta_i^2} \log \log T +\gamma_2^i.
\end{align*}

We now establish the second statement. In the spirit of~\cite{RA-MVH-DT:88}, we note that the number of times the decision-maker transitions to arm $i$ from another arm in frame $f_k$ is equal to the number of times arm $i$ is selected in frame $k$ divided by the length of each block in frame $f_k$. Consequently, 
\begin{align*}
s_i^T & \le \sum_{k=1}^\ell \frac{n_i^{2^k}-n_i^{2^{k-1}}}{k}
 = \sum_{k=1}^\ell \frac{n_i^{{2^k}}}{k} - \sum_{k=1}^{\ell-1} \frac{n_i^{2^{k}}}{k+1}\\
 &=\frac{n_i^{2^\ell }}{\ell} +  \sum_{k=1}^{\ell-1}n_i^{2^k} \Big( \frac{1}{k} -\frac{1}{k+1} \Big) \le \frac{n_i^{2^\ell}}{\ell} +  \sum_{k=1}^{\ell-1}  \frac{n_i^{2^k}}{k^2}.
\end{align*}
Therefore, it follows that
\begin{equation}\label{eq:switching-bound}
\expt[s_i^T] \le  \frac{\expt[n_i^{2^\ell}]}{\ell} +  \sum_{k=1}^{\ell-1}  \frac{\expt[n_i^{2^k}]}{k^2}.
\end{equation}
We now analyze inequality~\eqref{eq:switching-bound} separately for the three terms in the upper bound on $\expt[n_i^T]$. For the first logarithmic term, the right hand side of inequality~\eqref{eq:switching-bound} yields
\begin{multline}\label{eq:switching-term-1}
\frac{\gamma_1^i \log 2^\ell}{\ell} + \sum_{k=1}^{\ell-1} \frac{\gamma_1^i \log 2^k}{k^2} = \gamma_1^i \log 2 \Big( 1 + \sum_{k=1}^{\ell-1} \frac{1}{k} \Big) \\
\le \gamma_1^i \log 2 (\log \log T + 2 - \log \log 2). 
\end{multline}
For the second sub-logarithmic term, the right hand side of inequality~\eqref{eq:switching-bound} is equal to 
\begin{align}
&-\frac{4 \beta ^2 \sigma_s^2}{\Delta_i^2} \Big (\frac{(\log \ell + \log \log 2) }{\ell} + \sum_{k=1}^{\ell-1} \frac{(\log k + \log \log 2) }{k^2} \Big) \nonumber \\
& \le -\frac{4 \beta ^2 \sigma_s^2}{\Delta_i^2} \Big (\frac{ (\log \log 2) }{\ell}+
\sum_{k=1}^{\ell-1} \frac{ \log \log 2}{k^2} \Big) \nonumber\\
& \le -\frac{4 \beta ^2 \sigma_s^2  }{\Delta_i^2}  \Big(1+ \frac{\pi^2}{6}\Big) \log \log 2.  \label{eq:switching-term-2}
\end{align}
Similarly, for the constant term $\gamma_2$, the right hand side of inequality~\eqref{eq:switching-bound} is equal to 
\begin{align}\label{eq:switching-term-3}
\frac{\gamma_2^i}{\ell} + \sum_{k=1}^{\ell-1} \frac{\gamma_2^i}{k^2} \le \gamma_2^i \Big( 1+ \frac{\pi^2}{6}\Big).
\end{align}
Collecting the terms from inequalities~\eqref{eq:switching-term-1}-\eqref{eq:switching-term-3}, it follows from inequality~\eqref{eq:switching-bound} that
\begin{align*}
\expt[s_i^T] \le (\gamma_1^i \log 2)  \log \log T  + \gamma_3^i.
\end{align*}

We now establish the last statement. The bound on the cumulative expected regret follows from its definition and the first statement. To establish the bound on the cumulative switching cost, we note that
\begin{align}
\sum_{t=1}^T  \supscr{S}{B}_t  &\le \sum_{i=1, i\ne i^*}^{N} {\bar c_{i}}^{\max} \expt[s_i^T] + {\bar c_{i^*}}^{\max} \expt[s_{i^*}^T] \nonumber \\
&\le  \sum_{i=1, i\ne i^*}^{N} ({\bar c_{i}}^{\max}+  {\bar c_{i^*}}^{\max} ) \expt[s_i^T] + {\bar c_{i^*}}^{\max},  \label{eq:cumulative-switching}
\end{align}
where the second inequality follows from the observation that $s_{i^*}^T \le \sum_{i=1, i\ne i^*}^T s_{i}^T +1$. The final expression follows from inequality~\eqref{eq:cumulative-switching} and the second statement.
\end{proof}

\subsection{Proof of regret of the graphical block UCL algorithm}
\begin{proof}[Proof of Theorem~\ref{thm:graph-block-ucl}]
We start by establishing the first statement. 
Due to transient selections, the number of frames until time $T$ are at most equal to the number of frames if there are no transient selections. Consequently, the expected number of goal selections of a suboptimal arm $i$ are upper bounded by the expected number of selections of arm $i$ in the block UCL Algorithm~\ref{algo:block-ucb}, i.e.,
\[
\expt[n_{\text{goal},i}^T] \le  \gamma_1^i \log T -\frac{4 \beta ^2 \sigma_s^2}{\Delta_i^2} \log \log T +\gamma_2^i.
\]
Moreover, the number of transient selections of arm $i$ are upper bounded by the total number of  transitions from an arm to another arm in the block UCL Algorithm~\ref{algo:block-ucb}, i.e.,
\[
\expt[n_{\text{transient},i}^T] \le  \sum_{i=1, i\ne i^*}^N \big(  (2 \gamma_1^i \log 2)  \log \log T  + 2 \gamma_3^i\big) + 1.
\]
The expected number of selections of arm $i$ is the sum of the expected number of transient selections and the expected number of goal selections, and thus the first statement follows.

The second statement follows immediately from the definition of the cumulative regret.
\end{proof}

\subsection{Pseudocode implementations of the UCL algorithms}
\IncMargin{.3em}
% \restylealgo{boxed}
% \linesnumbered 
\begin{algorithm}[ht!]
  {\footnotesize
   \SetKwInOut{Input}{Input}
   \SetKwInOut{Set}{Set}
   \SetKwInOut{Title}{Algorithm}
   \SetKwInOut{Require}{Require}
   \SetKwInOut{Output}{Output}
   \Input{prior $\mc N(\bs \mu_0, \sigma_0^2 I_N)$, variance $\sigma_s^2$\;}
  % upper bound $\subscr{m}{max}$ \;}
   \Output{allocation sequence $\seqdef{i_t}{t\in \until{T}}$\;}

   \medskip

\nl {\bf set} $n_i \leftarrow 0, \bar m_i \leftarrow 0$, for each $i\in \until{N}$\;
\smallskip

\nl {\bf set} $\displaystyle \delta^2= \frac{\sigma_s^2}{\sigma_0^2}$;  $K \leftarrow \sqrt{2 \pi e}$; $\supscr{T}{end}_0 \leftarrow 0$\;
\smallskip

\emph{\% at each time pick the arm with maximum upper credible limit}
\smallskip

   \nl \For {$\tau \in \until{T}$ }{

\nl \For {$i \in \until{N}$ }{

\nl 
$\displaystyle Q_i  \leftarrow \frac{\delta^2 \mu_{i}^0 + n_i \bar m_i}{\delta^2+n_{i}} + 
\frac{\sigma_s}{\sqrt{\delta^2 + n_{i}}}  \Phi^{-1}\Big(1- \frac{1}{K \tau}\Big)$ \;
}
\nl $i_{\tau} \leftarrow \text{argmax}\setdef{Q_i}{i\in \until{N}}$\;
\smallskip

\nl collect reward $\supscr{m}{real}$\;
\smallskip

\nl $\displaystyle \bar m_{i_\tau} \leftarrow \frac{n_{i_\tau} \bar m_{i_\tau} + m}{n_{i_\tau}+1}$\;

\smallskip

\nl $n_{i_{\tau}}\leftarrow n_{i_{\tau}}+1$ \;

}

%  \nocaptionofalgo
    \caption{\textit{Deterministic UCL Algorithm}}
  \label{algo:bayes-ucb}}
\end{algorithm} 
\DecMargin{.3em}

\IncMargin{.3em}
% \restylealgo{boxed}
% \linesnumbered 
\begin{algorithm}[ht!]
  {\footnotesize
   \SetKwInOut{Input}{Input}
   \SetKwInOut{Set}{Set}
   \SetKwInOut{Title}{Algorithm}
   \SetKwInOut{Require}{Require}
   \SetKwInOut{Output}{Output}
   \Input{prior $\mc N(\bs \mu_0, \sigma_0^2 I_N)$, variance $\sigma_s^2$\;}
  % upper bound $\subscr{m}{max}$ \;}
   \Output{allocation sequence $\seqdef{i_t}{t\in \until{T}}$\;}

   \medskip

\nl {\bf set} $n_i \leftarrow 0, \bar m_i \leftarrow 0$, for each $i\in \until{N}$\;
\smallskip

\nl {\bf set} $\displaystyle \delta^2= \frac{\sigma_s^2}{\sigma_0^2}$;  $K \leftarrow \sqrt{2 \pi e}$; $\supscr{T}{end}_0 \leftarrow 0$\;
\smallskip

\emph{\% at each time pick an arm using Boltzmann probability distribution}
\smallskip

   \nl \For {$\tau \in \until{T}$ }{

\nl \For {$i \in \until{N}$ }{

\nl 
$\displaystyle Q_i  \leftarrow \frac{\delta^2 \mu_{i}^0 + n_i \bar m_i}{\delta^2+n_{i}} + 
\frac{\sigma_s}{\sqrt{\delta^2 + n_{i}}}  \Phi^{-1}\Big(1- \frac{1}{K \tau}\Big)$ \;
}

\nl $\Delta Q_{\min} = \min_{i,t} |Q_{i}-Q_j|$\;

\nl $\displaystyle \upsilon_{\tau} \leftarrow \frac{\Delta Q_{\min}}{2 \log \tau}$\;

\nl select $i_{\tau}$ with probability $p_i \propto \exp(Q_i/\upsilon_{\tau})$\;
\smallskip

\nl collect reward $\supscr{m}{real}$\;
\smallskip

\nl $\displaystyle \bar m_{i_\tau} \leftarrow \frac{n_{i_\tau} \bar m_{i_\tau} + m}{n_{i_\tau}+1}$\;

\smallskip

\nl $n_{i_{\tau}}\leftarrow n_{i_{\tau}}+1$ \;

}

%  \nocaptionofalgo
    \caption{\textit{Stochastic UCL Algorithm}}
  \label{algo:softmax-ucl}}
\end{algorithm} 
\DecMargin{.3em}

\IncMargin{.3em}
% \restylealgo{boxed}
% \linesnumbered 
\begin{algorithm}[ht!]
  {\footnotesize
   \SetKwInOut{Input}{Input}
   \SetKwInOut{Set}{Set}
   \SetKwInOut{Title}{Algorithm}
   \SetKwInOut{Require}{Require}
   \SetKwInOut{Output}{Output}
   \Input{prior $\mc N(\bs \mu_0, \sigma_0^2 I_N)$, variance $\sigma_s^2$ \;}
   \Output{allocation sequence $\seqdef{i_t}{t\in \until{T}}$\;}

   \medskip

\nl {\bf set} $n_i \leftarrow 0, \bar m_i \leftarrow 0$, $\forall$ $i\in \until{N}$; $\displaystyle \delta^2 \leftarrow \frac{\sigma_s^2}{\sigma_0^2}$; $K \leftarrow \sqrt{2 \pi e}$\;
\smallskip

\emph{\% at each allocation round pick the arm with maximum UCL}
\smallskip

\nl \For {$k \in \until{\ell}$ }{

\nl \For {$r \in \until{b_k}$ }{

\nl $\tau \leftarrow 2^{k-1} + (r-1) k$
\smallskip

\nl 
$\displaystyle Q_i  \leftarrow \frac{\delta^2 \mu_{i}^0 + n_i \bar m_i}{\delta^2+n_{i}} + 
\frac{ \sigma_s}{\sqrt{\delta^2 + n_{i}}} \Phi^{-1}\Big(1- \frac{1}{K \tau }\Big)$ \;
\nl $\hat i \leftarrow \text{argmax}\setdef{Q_i}{i\in \until{N}}$\;
\smallskip

\nl \If{$2^k-\tau \ge k$}{{\bf set} $i_t \leftarrow \hat i$, for each $t \in \{\tau, \ldots, \tau+k\}$\;

\smallskip

\nl collect reward $\supscr{m}{real}_t,\;  t \in \until{k} $\;
\smallskip

\nl $\displaystyle \bar m_{\hat i} \leftarrow \frac{n_{\hat i}\bar m_{\hat i} + \sum_{t=1}^k \supscr{m}{real}_t}{n_{\hat i}+k}$\;
\smallskip

\nl $n_{\hat i}\leftarrow n_{\hat i}+k$ \;

}
  \smallskip
  \nl \Else{{\bf set} $i_t \leftarrow \hat i$, for each $t \in \{\tau, \ldots, 2^k-1\}$\;
  
\smallskip

\nl collect reward $\supscr{m}{real}_t,\;  t \in \until{2^k-\tau} $\;
\smallskip

\nl $\displaystyle \bar m_{\hat i} \leftarrow \frac{n_{\hat i}\bar m_{\hat i} + \sum_{t=1}^{2^k-\tau} \supscr{m}{real}_t}{n_{\hat i}+2^k-\tau}$\;
\smallskip

\nl $n_{\hat i}\leftarrow n_{\hat i}+2^k-\tau$ \;
  
  }

}

}

%  \nocaptionofalgo
    \caption{\textit{Block UCL Algorithm}}
  \label{algo:block-ucb}}
\end{algorithm} 
\DecMargin{.3em}

\IncMargin{.3em}
% \restylealgo{boxed}
% \linesnumbered 
\begin{algorithm}[ht!]
  {\footnotesize
   \SetKwInOut{Input}{Input}
   \SetKwInOut{Set}{Set}
   \SetKwInOut{Title}{Algorithm}
   \SetKwInOut{Require}{Require}
   \SetKwInOut{Output}{Output}
   \Input{prior $\mc N(\bs \mu_0, \sigma_0^2 I_N)$, variance $\sigma_s^2$ \;}
   \Output{allocation sequence $\seqdef{i_t}{t\in \until{T}}$\;}

   \medskip

\nl {\bf set} $n_i \leftarrow 0, \bar m_i \leftarrow 0$, $\forall$ $i\in \until{N}$; $\displaystyle \delta^2 \leftarrow \frac{\sigma_s^2}{\sigma_0^2}$; $K \leftarrow \sqrt{2 \pi e}$\;
\smallskip
\nl {\bf set} $\tau \leftarrow 1$; $i_0 \leftarrow 1$\;

\smallskip

%\nl {\bf set}  
%\smallskip

%\emph{\% tuning the upper credible limit parameter }\\
%
%\smallskip
%
%\nl  $M \leftarrow \max \setdef{\subscr{m}{max}, \mu_{i0}}{i\in\until{N}}$\;
%
%\smallskip
%
%\nl {\bf pick} $K>\sqrt{2\pi}$ such that for each $t\in \until{T}$ 
%\vspace{-0.08in}
%\[ 
%\Phi^{-1}\Big(1-\frac{1}{Kt}\Big) - \frac{\delta^2 M}{\sigma_s} > \Phi^{-1}\Big(1-\frac{1}{t}\Big);
%\]

\emph{\% at each allocation round pick the arm with maximum UCL}
\smallskip

\nl \For {$k \in \until{\ell}$ }{

\nl \For {$r \in \until{b_k}$ }{

%\nl $\subscr{\tau}{goal} \leftarrow 2^{k-1} + (r-1) k$
%\smallskip

\nl 
$\displaystyle Q_i  \leftarrow \frac{\delta^2 \mu_{i}^0 + n_i \bar m_i}{\delta^2+n_{i}} + 
\frac{ \sigma_s}{\sqrt{\delta^2 + n_{i}}} \Phi^{-1}\Big(1- \frac{1}{K \tau }\Big)$ \;
\nl $\hat i \leftarrow \text{argmax}\setdef{Q_i}{i\in \until{N}}$\;
\smallskip

\emph{\% reach node $\hat i$ using the shortest path} \\
\smallskip

\nl \For{$t \in \{\tau, \ldots, \tau+p_{i_\tau \hat i} - 1\}$}{

{\bf set} $i_{t} \leftarrow P^{i_{\tau} \hat i}_{t-\tau+1}$\;
\smallskip

\nl collect rewards $\supscr{m}{real}$\;
\smallskip

\nl $\displaystyle \bar m_{i_t} \leftarrow \frac{n_{i_t}\bar m_{i_t} + \supscr{m}{real}}{n_{i_t}+1}$\;
\smallskip

\nl $n_{i_t}\leftarrow n_{i_t}+1$ \;

}
\smallskip

\nl {\bf set } $\tau \leftarrow \tau + p_{i_\tau \hat i}$\;

\smallskip

\nl \If{$2^{k-1} - (r-1) k \ge k$}
{

{\bf set} $i_t \leftarrow \hat i$, for each $t \in \{\tau, \ldots, \tau +k - 1\}$\;

\smallskip

\nl collect reward $\supscr{m}{real}_t,\;  t \in \until{k} $\;
\smallskip

\nl $\displaystyle \bar m_{\hat i} \leftarrow \frac{n_{\hat i}\bar m_{\hat i} + \sum_{t=1}^k \supscr{m}{real}_t}{n_{\hat i}+k}$\;
\smallskip

\nl $n_{\hat i}\leftarrow n_{\hat i}+k$ \;

\nl $\tau \leftarrow \tau + k$ \;
}
  \smallskip
  \nl \Else{{\bf set} $i_t \leftarrow \hat i$, for each $t \in \{2^{k-1} + (r-1)k, \ldots, 2^k-1\}$\;
  
\smallskip

\nl collect reward $\supscr{m}{real}_t,\;  t \in \until{2^{k-1} - (r-1) k} $\;
\smallskip

\nl $\displaystyle \bar m_{\hat i} \leftarrow \frac{n_{\hat i}\bar m_{\hat i} + \sum_{t=1}^{2^{k-1} - (r-1) k} \supscr{m}{real}_t}{n_{\hat i}+2^{k-1} - (r-1) k}$\;
\smallskip

\nl $n_{\hat i}\leftarrow n_{\hat i}+2^{k-1} - (r-1) k$ \;
  
\nl $\tau \leftarrow \tau + 2^{k-1} - (r-1)k$ \;
  }
}
}
%  \nocaptionofalgo
    \caption{\textit{Graphical Block UCL Algorithm}}
  \label{algo:graph-ucb}}
\end{algorithm} 
\DecMargin{.3em}
\newpage

\subsection{Correction to published proofs}
\begin{abstract}
The published proofs of performance for the various UCL algorithms utilized a decomposition which studied the empirical mean reward associated with each arm conditioned on the number of times that arm was selected. In the proofs, the number of selections was taken to be independent of the other sources of stochasticity in the algorithm, which is incorrect since the selections are made as a function of the stochastic observed rewards. Here, we show that a minor modification of the proofs using a tail bound due to Garivier and Moulines \cite{AG-EM:08} corrects for this oversight.
\end{abstract}

\subsubsection{Introduction}

The UCL algorithms described in the main body of this text and published in \cite{PBR-VS-NEL:14} are heuristic-based algorithms which compute a heuristic value $Q_i^t$ for each option $i$ at each time $t$. The heuristic takes the functional form
\beq
Q_i^t = \mu_i^t + C_i^t = \mu_i^t + \sigma_i^t \Phi^{-1}(1-\alpha_t), \label{eq:heuristic}
\eeq
which defines $C_i^t$, and where $\mu_i^t$ and $\sigma_i^t$ are the algorithm's belief about the mean and standard deviation, respectively, of the reward associated with option $i$ at time $t$. The error in the proof arises from the fact that to apply concentration inequalities, we condition on the number $n_i^t$ of times that the algorithm has selected option $i$ up to time $t$. Since the arm selection policy depends on the rewards accrued, $n_i^t$ and the rewards are dependent random variables. Here, we build upon an alternative concentration inequality that accounts for this dependence and show that proofs of all the performance bounds follow a similar pattern with slight modification in the choice of $\alpha_t$ in the deterministic UCL algorithm and then show how to extend the correction to the other UCL algorithms published in \cite{PBR-VS-NEL:14}.

We employ the following concentration inequality from Garivier and Moulines \cite{AG-EM:08}  to fix the proof. Let $(X_t)_{t \geq 1}$ be a sequence of independent sub-Gaussian random variables with $\expt[X_t] = \mu_t$, i. e., $\expt[\exp(\lambda (X_t-\mu_t))] \le \exp(\lambda^2\sigma^2 /2)$ for some variance parameter $\sigma >0$. 
Consider a previsible sequence $(\epsilon_t)_{t \geq 1}$ of Bernoulli variables, i.e., for all $t > 0, \epsilon_t$ is deterministically known given $\seqdef{X_\tau}{0 < \tau <t }$. Let
\[
s^t = \sum_{s=1}^t X_s \epsilon_s , 
m^t= \sum_{s=1}^t \mu_s \epsilon_s, 
n^t = \sum_{s=1}^t \epsilon_s. 
\]

\begin{theorem} [{\cite[Theorem 22]{AG-EM:08}}]  \label{thm:boundSubGaussian}
Let $(X_t)_{t \geq 1}$ be a sequence of sub-Gaussian\footnote{The result in \cite[Theorem 22]{AG-EM:08} is stated for bounded rewards, but it extends immediately to sub-Gaussian rewards by noting that the upper bound on the moment generating function for a bounded random variable obtained using a Hoeffding inequality has the same functional form as the sub-Gaussian random variable.} independent random variables with common variance parameter $\sigma$ and let $(\epsilon_t)_{t \geq 1}$ be a previsible sequence of Bernoulli variables. Then, for all integers $t$ and all $\delta, \epsilon > 0$,
\begin{align} \Pr{\frac{s^t - m^t}{\sqrt{n^t}} > \delta}& \\
 \leq \left\lceil \frac{\log t}{\log(1 + \epsilon)} \right\rceil & \exp \left( -\frac{\delta^2}{2\sigma^2} \left( 1 - \frac{\epsilon^2}{16} \right) \right). \nonumber
\end{align}
\end{theorem}

We will also use the following bounds for $\Phi^{-1}(1-\alpha)$, the quantile function of the normal distribution. 
\begin{prop} \label{prop:QuantileBound}
For any $t \in \bbN$ and $a > 1$, the following hold:
\begin{align} \label{eq:quantileLowerBound}
\Phi^{-1}\left( 1-\frac{1}{\sqrt{2 \pi e} t^a} \right) &\geq \sqrt{\nu  \log t^a},\\
\Phi^{-1}\left( 1-\frac{1}{\sqrt{2 \pi e} t^a} \right) &\leq \sqrt{2a \log t + 2 \log \sqrt{2 \pi e}},
\label{eq:quantileUpperBound}
\end{align}
for any $0< \nu \le 1.59$.
\end{prop}
\begin{proof}
We first prove \eqref{eq:quantileLowerBound}. We begin with the inequality $\Phi^{-1}(1-\alpha)   > \sqrt{-\log(2\pi \alpha^2(1-\log(2\pi \alpha^2)))}$ established in~\cite{PBR-VS-NEL:14}.  It suffices to show that
\[
- \log \left( \frac{1}{e t^2} \left( 1 -  \log \left( \frac{1}{e t^2} \right) \right)\right) - \nu \log t  \ge 0,
\]
for $0< \nu \le 1.59$. The left hand side of the above inequality is 
\[
g(t) := 1 - \log 2 + (2- \nu) \log t - \log(1+\log t). 
\]
It can be verified that $g$ admits a unique minimum at $t = e^{(\nu -1)/(2-\nu)}$ and the minimum value is $\nu -\log 2 + \log(2-\nu)$, which is positive for $0< \nu \le 1.59$.  

Equation \eqref{eq:quantileUpperBound} is a straightforward application of Lemma 9 of \cite{PBR-VS-NEL:17}, which itself follows from \cite{MA-IAS:64}.
\end{proof}
In the following, we choose $\nu =3/2$.

\subsubsection{Correction for the deterministic UCL algorithm}
We now show that a minor change of the functional form for $\alpha_t$ corrects the oversight in the proof of \cite[Theorem 2]{PBR-VS-NEL:14}. 
Let $a > 1$ and set $\alpha_t = 1/\sqrt{2 \pi e} t^a$. Note that the originally-published version of the deterministic UCL algorithm used $\alpha_t = 1/\sqrt{2 \pi e} t$, which is equivalent to taking $a=1$ in the modified functional form of $\alpha_t$. Then the following slightly-modified version of \cite[Theorem 2]{PBR-VS-NEL:14} holds.
\begin{theorem} \label{thm:UCLCorrected}
Let $\epsilon > 0$. The following statements hold for the Gaussian multi-armed bandit problem and the deterministic UCL algorithm with uncorrelated uninformative prior and $\alpha_t = 1/Kt^a$, with $K = \sqrt{2 \pi e}$ and $a > 4/(3(1-\epsilon^2/16))$:
\begin{enumerate}
\item the expected number of times a suboptimal arm $i$ is chosen until time $T$ satisfies
\begin{align*}
\E{n_{i}^T} \le  \frac{8 a \sigma_s^2}{\Delta_i^2} \log T + o(\log T)
\end{align*}
\item the cumulative expected regret until time $T$ satisfies
\begin{multline*}
\sum_{t=1}^T R_t^{UCL} \leq \sum_{i=1}^N \Delta_i \left(  \frac{8 a \sigma_s^2}{\Delta_i^2}  \log T + o(\log T) \right).
\end{multline*}
\end{enumerate}
\end{theorem}

\begin{proof}
The structure of the proof of \cite[Theorem 2]{PBR-VS-NEL:14} carries through. As in \cite{PBR-VS-NEL:14}, a suboptimal arm $i \neq i^*$ is picked only if $Q_{i^*}^t \leq Q_i^t$. We bound $\E{n_i^T}$ as follows:
\[ \E{n_i^T} \leq \eta + \sum_{t=1}^T \Pr{Q_{i^*}^t | n_i^{(t-1)} \geq \eta}, \]
where $\eta$ is some positive integer. The condition $Q_{i^*}^t \leq Q_i^t$ is true when at least one of the three equations labeled (15--17) in \cite{PBR-VS-NEL:14} hold. For an uninformative prior and applying Proposition \ref{prop:QuantileBound}, inequality (17) in \cite{PBR-VS-NEL:14} never holds if 
\[ n_i^t \geq \frac{8 a \sigma_s^2}{\Delta_i^2} \log T + o(\log T). \]
Setting $\eta = \left \lceil \frac{8 a \sigma_s^2}{\Delta_i^2} \log T + o(\log T) \right \rceil$, we again get
\begin{align*}
 \E{n_{i}^T} &\leq \eta + \sum_{t = 1}^T \Pr{Q_{i}^t > Q_{i^*}^t, n_{i}^{(t-1)} \geq \eta} \\
 & = \eta + \sum_{t = 1}^T \Pr{\text{Equation} (15) \text{ holds},n_{i}^{(t-1)}\geq \eta}\\
 &\qquad \qquad +\sum_{t = 1}^T \Pr{\text{Equation}~(16) \text{ holds}, n_{i}^{(t-1)} \geq \eta }.
\end{align*}

Consider the probability that (15) holds. For an uncorrelated uninformative prior, $\mu_i^t = \bar{m}_i^t$ and $\sigma_i^t = \sigma_s/\sqrt{n_i^t}$. Then,
\begin{align*}
\Pr{ (15) \text{ holds}} &= \Pr{\mu_i^t - m_i \geq C_i^t} \\
&= \Pr{\frac{\mu_i^t - m_i}{\sigma_i^t} \geq \Phi^{-1}(1-\alpha_t)}.
\end{align*}
For an uncorrelated uninformative prior,
\[ \frac{\mu_i^t - m_i}{\sigma_i^t} = \frac{s_i^t/n_i^t - m_i^t/n_i^t}{\sigma_s/\sqrt{n_i^t}} = \frac{s_i^t - m_i^t}{\sigma_s \sqrt{n_i^t}}, \]
so $\Pr{ (15)\text{ holds}} = \Pr{ \frac{s_i^t - m_i^t}{\sqrt{n_i^t}} \geq \sigma_s \Phi^{-1}(1-\alpha_t)}.$ Applying Theorem \ref{thm:boundSubGaussian} with $\delta = \sigma_s \Phi^{-1}(1-\alpha_t)$, we conclude that 
\begin{align*} &\Pr{ (15) \text{ holds}} \\ &\leq  \left \lceil \frac{\log t}{\log(1 + \epsilon)} \right \rceil  \exp \left( - \frac{(\Phi^{-1}(1-\alpha_t))^2}{2} \left( 1 - \frac{\epsilon^2}{16} \right) \right). \end{align*}

Suppose that $\alpha_t = 1/\sqrt{2 \pi e} t^a$ for some $a > 1$. Then, \eqref{eq:quantileLowerBound} implies that at each time $t$,
\begin{align} \Pr{ (15) \text{ holds}} &\leq \left \lceil \frac{\log t}{\log(1 + \epsilon)} \right \rceil \exp \left( - \frac{3 a \log t}{4} \left( 1 - \frac{\epsilon^2}{16} \right) \right) \nonumber \\
& = \left \lceil \frac{\log t}{\log(1 + \epsilon)} \right \rceil t^{-\frac{3a (1-\epsilon^2/16)}{4}}. \nonumber
\end{align}
The same bound holds for (16). Thus, for all $\epsilon > 0$,  we have
\begin{align*}
\expt[n_i^T] &<  \eta + 1 %\frac{6 a \sigma_s^2}{\Delta_i^2} \log T + 1%\\
 + 2 \sum_{t = 1}^T \left \lceil \frac{\log t}{\log (1 + \epsilon)} \right \rceil t^{-\frac{3a(1-\epsilon^2/16)}{4}}.
\end{align*}
Note that the sum $\sum_{t=1}^T \Pr{ (15) \text{ holds at } t}$ can be upper bounded by the integral
\begin{align}
& \int_1^T \left( \frac{\log t}{\log(1 + \epsilon)} + 1 \right) t^{-\frac{3a (1-\epsilon^2/16)}{4}} \dd t  + 1. \label{eq:sum}
\end{align}

Furthermore, note that for $\varepsilon > 0$, the following bounds hold:
\begin{align*}
\int_1^T \log(t) t^{-(1+\varepsilon)} \dd t &= \frac{1-T^{-\varepsilon} - \varepsilon T^{-\varepsilon} \log T}{\varepsilon^2} = o(\log T)
 \\
\int_1^T t^{-(1+\varepsilon)} \dd t &= \frac{1-T^{-\varepsilon}}{\varepsilon} = o(\log T).
\end{align*}

This implies that the integral \eqref{eq:sum} is of class $o(\log T)$ as long as the exponent $3a(1-\epsilon^2/16)/4 > 1$. Let $\epsilon > 0$. Then it suffices to take $a > 4/(3(1-\epsilon^2/16))$, and $1+\varepsilon = (3a(1-\epsilon^2/16))/4$, so that $\varepsilon = (3a(1-\epsilon^2/16)/4 - 1 > 0$. Putting everything together, we have
\begin{align*}
\E{n_{i}^T} \le   \frac{8 a \sigma_s^2}{\Delta_i^2}  \log T + o(\log T).
\end{align*}
The second statement follows from the definition of the cumulative expected regret.
\end{proof}

The same correction holds for the other UCL algorithms developed in \cite{PBR-VS-NEL:14}, namely, the stochastic, block, and graphical block UCL algorithms. Let $\epsilon > 0, K = \sqrt{2\pi e},$ and set $\alpha_t = 1/(Kt^a)$, where $a > 4/(3(1-\epsilon^2/16))$. The resulting revised performance bounds are given in Table \ref{tab:correctedBounds}. %Note that the definition of $a$ satisfies $a > 4/3$, with $a \to 4/3$ as $\epsilon \to 0$. In the limit $\epsilon \to 0$, therefore, the coefficient $6 a \sigma^2/\Delta_i^2$ of $\log T$ takes the value $8 \sigma_s^2/\Delta_i^2$, which is analogous to the coefficient published in \cite{PBR-VS-NEL:14}.

%\begin{table}[ht]
%\caption{Summary of the corrections for the UCL algorithms from \cite{PBR-VS-NEL:14}. Let $\epsilon > 0, K = \sqrt{2 \pi e}$ and set $\alpha_t$ according to:}
%\begin{center}
%\begin{tabular}{|c|c|}
%Algorithm & $\alpha_t$ \\ 
%\hline
%Deterministic UCL & $\alpha_t = 1/Kt^a$, $a > \frac{4}{3 (1-\epsilon^2/16)}$\\
%Stochastic UCL & $\alpha_t = 1/Kt^a$, $a > \frac{4}{3 (1-\epsilon^2/16)}$\\
%Block UCL & $\alpha_t = 1/Kt^a$, $a > \frac{4}{3 (1-\epsilon^2/16)}$\\
%Graphical Block UCL &  $\alpha_t = 1/Kt^a$, $a > \frac{4}{3 (1-\epsilon^2/16)}$\\
%\hline
%\end{tabular}
%\end{center}
%\label{tab:corrections}
%\end{table}

\begin{table}[ht]
\caption{Summary of the bounds for the UCL algorithms from \cite{PBR-VS-NEL:14} after correction.}
\begin{center}
\begin{tabular}{|c|c|}
Algorithm & $\E{n_i^T} \leq $ \\ 
\hline
Deterministic UCL & $ \frac{8 a \sigma_s^2}{\Delta_i^2}  \log T + o(\log T)$\\
Stochastic UCL & $\frac{8 a \sigma_s^2}{\Delta_i^2}  \log T + \frac{\pi^2}{6} + o(\log T)$\\
Block UCL & $\left(\frac{8 a \sigma_s^2}{\Delta^2_i}+ \frac{1}{\log 2} + \frac{2}{K}\right) \log T + o(\log T)$\\
Graphical Block UCL & $\left(\frac{8 a \sigma_s^2}{\Delta^2_i}+ \frac{1}{\log 2} + \frac{2}{K}\right) \log T + o(\log T)$\\
\hline
\end{tabular}
\end{center}
\label{tab:correctedBounds}
\end{table}

%\bibliographystyle{abbrv}
%\bibliography{../satisficing}

\end{document}